\newtheorem{theorem}{Theorem}
\newtheorem{lemma}{Lemma}
\newtheorem{corollary}{Corollary}
\newtheorem{assumption}{Assumption}
\newtheorem{remark}{Remark}
\newcommand\norm[1]{\left\lVert#1\right\rVert}
\tikzstyle{block} = [draw,rectangle, rounded corners, minimum width=1cm, minimum height=0.8cm,text centered, line width=2pt ]
\tikzstyle{arrow} = [thick,->,>=stealth,line width=2pt]
\tikzset{
  shift left/.style ={commutative diagrams/shift left={#1}},
  shift right/.style={commutative diagrams/shift right={#1}}
}
\tikzstyle{block} = [draw,rectangle, rounded corners, minimum width=1cm, minimum height=0.8cm,text centered, line width=2pt ]
\tikzstyle{arrow} = [thick,->,>=stealth,line width=2pt]
\newcommand{\tp}{\intercal}		
\newcommand{\R}{\mathbb{R}}			
\newcommand{\red}[1]{\textcolor{black}{#1}}
\DeclareMathOperator*{\argmin}{arg\,min\,}
\DeclareMathOperator{\ee}{\mathbb{E}}			
\DeclareMathOperator{\prob}{\mathbb{P}}			
\DeclareMathOperator{\vecc}{\mathbf{vec}}		
\DeclareMathOperator{\tr}{\mathbf{tr}}			
\DeclareMathOperator{\cov}{\mathbf{cov}}		
\begin{document}

%

%

\twocolumn[

\aistatstitle{Regret Bounds for Decentralized Learning in Cooperative Multi-Agent Dynamical Systems}

%

\aistatsauthor{Seyed Mohammad Asghari \And Yi Ouyang \And  Ashutosh Nayyar}

\aistatsaddress{ University of Southern California \And  Preferred Networks America, Inc \And University of Southern California }
 ]

\begin{abstract}

Regret analysis is challenging in Multi-Agent Reinforcement Learning (MARL) primarily due to the dynamical environments and the decentralized information among agents. We attempt to solve this challenge in the context of decentralized learning in multi-agent linear-quadratic (LQ) dynamical systems.
We begin with a simple setup consisting of two agents and two dynamically decoupled stochastic linear systems, each system controlled by an agent. The systems are coupled through a quadratic cost function. When both systems' dynamics are unknown and there is no communication among the agents, we show that no learning policy can generate sub-linear in $T$ regret, where $T$ is the time horizon.   When only one system's dynamics are unknown and there is one-directional communication from the agent controlling the unknown system to the other agent, 
we propose a MARL algorithm based on the construction of an auxiliary single-agent LQ problem.
The auxiliary single-agent problem in the proposed MARL algorithm serves as an implicit coordination mechanism among the two learning agents. This allows the agents to  achieve a regret within $O(\sqrt{T})$ of the regret of the auxiliary single-agent problem. 
Consequently, using existing results for single-agent LQ regret, our algorithm provides a $\tilde{O}(\sqrt{T})$ regret bound. (Here $\tilde{O}(\cdot)$ hides constants and logarithmic factors). Our numerical experiments indicate that this bound is matched in practice. From the two-agent problem, we extend our results to multi-agent LQ systems with certain communication patterns.

\end{abstract}

\section{Introduction}

Multi-agent systems arise in many different domains, including multi-player card games \citep{bard2019hanabi}, robot teams \citep{stone1998team}, vehicle formations \citep{fax2004information}, urban traffic control \citep{oliveira2010multi}, and power grid operations \citep{schneider1999distributed}.
A multi-agent system consists of multiple autonomous agents operating in a common environment.
Each agent gets observations from the environment (and possibly from some other agents) and, based on these observations, each agent chooses actions to collect rewards from the environment.
The agents' actions  may  influence the environment dynamics  and the reward of each agent. 
Multi-agent systems where the environment model is known to all agents have been considered under the frameworks of multi-agent planning \citep{oliehoek2016concise}, decentralized optimal control \citep{yuksel2013stochastic}, and non-cooperative game theory \citep{basar1999dynamic}.
In realistic situations, however, the environment model is usually only partially known or even totally unknown.
Multi-Agent Reinforcement Learning (MARL) aims to tackle the general situation of multi-agent sequential decision-making where the environment model is not completely known to the agents. In the absence of the environmental model, each agent needs to learn  the environment while interacting with it to collect rewards. 
In this work, we focus on decentralized learning in a cooperative multi-agent setting where all agents share the same reward (or cost) function.

A number of successful learning algorithms have been developed for Single-Agent Reinforcement Learning (SARL) in single-agent environment models such as finite Markov decision processes (MDPs) and linear quadratic (LQ) dynamical systems.
To extend SARL algorithms to cooperative MARL problems, one key challenge is the coordination among agents \citep{panait2005cooperative,hernandez2017survey}. In general, agents have access to different information and hence agents may have different views about the environment from their different learning processes. This difference in perspectives makes it difficult for agents to coordinate their actions for maximizing rewards. 

One popular method to resolve the coordination issue is to have a central entity collect information from all agents and determine the policies for each agent.
Several works generalize SARL methods to multi-agent settings with such an approach by either assuming the existence of a central controller or by training a centralized agent with information from all agents in the learning process, which is the idea of \textit{centralized training with decentralized execution} \citep{foerster2016learning, dibangoye2018learning, hernandez2018multiagent}.
With centralized information, the learning problem reduces to a single-agent problem which can be readily solved by SARL algorithms.
In many real-world scenarios, however, there does not exist a central controller or a centralized agent receiving all the information.
Agents have to learn in a decentralized manner based on the observations they get from the environment and possibly from some other agents. In the absence of a centralized entity, an efficient MARL algorithm should guide each agent to learn the environment while maintaining certain level of coordination among agents. 

Moreover, in online learning scenarios, the trade-off between exploration and exploitation is critical for the performance of a MARL algorithm during learning \citep{hernandez2017survey}.
Most existing SARL algorithms balance the exploration-exploitation trade off by controlling the posterior estimates/beliefs of the agent.
Since multiple agents have decentralized information in MARL, it is not possible to directly extend SARL methods given the agents' distinct posterior estimates/beliefs.
Furthermore, the fact that each agent's estimates/beliefs may be private to itself prevents  any direct imitation of SARL. These issues make it extremely challenging to design coordinated policies for multiple agents to learn the environment and maintain good performance  during learning.
In this work, we attempt to solve this challenge in online decentralized MARL  in the context of multi-agent learning in linear-quadratic (LQ) dynamical systems.
Learning in LQ systems is an ideal benchmark for studying MARL due to a combination of its theoretical tractability and its practical application in various engineering domains \citep{astrom2010feedback, abbeel2007application, levine2016end, abeille_lqg_portfolio, lazic2018data}. 

We begin with a simple setup consisting of two agents and two stochastic linear systems as shown in Figure \ref{fig:SystemModel}. The systems are dynamically decoupled but coupled through a quadratic cost function. In spite of its simplicity, this setting  illustrates some of the inherent challenges and potential results in MARL. When the parameters of both systems 1 and 2 are known to both agents, the optimal solution to this multi-agent control problem can be computed in closed form \citep{ouyang2018optimal}. 
We consider the settings where the system parameters are completely or partially unknown and formulate an online MARL problem  to minimize the agents' regret during learning.
The regret is defined to be the difference between the  cost incurred by the learning agents and the steady-state cost of the optimal policy computed using  complete knowledge of the system parameters. 

We provide a finite-time regret analysis for a decentralized MARL problem with  controlled dynamical systems. 
In particular, we show that
\begin{enumerate}
\item First, if all parameters of a system are unknown, then both agents should receive information about the state of this system; otherwise, there is \textbf{no} learning policy that can guarantee sub-linear regret for all instances of the decentralized MARL problem (Theorem 1 and Lemma 2).
\item Further, when only one system's dynamics are unknown and there is one-directional communication from the agent controlling the unknown system to the other agent, we propose a MARL algorithm with regret bounded by $\tilde{O}(\sqrt{T})$ (Theorem 2 and Corollary 1).
\end{enumerate}

The proposed MARL algorithm builds on an auxiliary SARL problem constructed  from the MARL problem.
Each agent constructs the auxiliary SARL problem by itself and applies a SARL algorithm $\mathcal{A}$ to it. Each agent chooses its action  by modifying the output of the SARL algorithm $\mathcal{A}$ based on its information at each time.
\red{
In our proposed algorithm, the auxiliary SARL problem serves as the critical coordination tool for the two agents to learn individually while jointly maintaining an exploration-exploitation balance.
In fact, we will later show that the SARL dynamics can be seen as the filtering equation for the common state estimate of the agents.
}

We show that the regret achieved by our MARL algorithm is upper bounded by the regret of the SARL algorithm $\mathcal{A}$ in {the auxiliary SARL problem} plus an overhead bounded by $O(\sqrt{T})$. This implies that the MARL regret can be bounded by $\tilde{O}(\sqrt{T})$ by letting $\mathcal{A}$ be one of   the state-of-the-art SARL algorithms for LQ systems which achieve $\tilde{O}(\sqrt{T})$ regret \citep{abbasi2011regret,ibrahimi2012efficient, faradonbeh2017regret, faradonbeh2019applications}.
Our numerical experiments indicate that this bound is matched in simulations.
From the two-agent problem, we  extend our results to multi-agent LQ systems with certain communication patterns. 

\textbf{Related work.}
There exists a rich and expanding body of work in the field of MARL \citep{littman1994markov,bu2008comprehensive, nowe2012game}.
Despite recent successes in empirical works including the adaptation of deep learning \citep{hernandez2018multiagent}, many theoretical aspects of MARL are still under-explored.
As multiple agents learn and adapt their policies, the environment is non-stationary from a single agent's perspective \citep{hernandez2017survey}.
Therefore, convergence guarantees of SARL algorithms are mostly invalid for MARL problems.
Several works have extended SARL algorithms to independent or cooperative agents
and analyzed their convergence properties \citep{tan1993multi, greenwald2003correlated, 
matignon2012independent, kar2013cal, amato2015scalable, zhang2018fully, gagrani2018thompson, wai2018multi}.
However, most of these works do not take into account the performance during learning except \citet{bowling2005convergence}.
The algorithm of \citet{bowling2005convergence} has a regret bound of $O(\sqrt{T})$, but the analysis is limited to repeated games.
In contrast, we are interested in MARL in dynamical systems.

Regret analysis in online learning has been mostly focusing on multi-armed bandit (MAB) problems \citep{lai1985asymptotically}.
Upper-Confidence-Bound (UCB) \citep{auer2002finite,bubeck2012regret,dani2008stochastic} and Thompson Sampling \citep{thompson1933likelihood,kaufmann2012thompson,agrawal2013thompson, russo2014learning} are the two popular classes of algorithms that provide near-optimal regret guarantees in single-agent MAB.
These ideas have been extended to certain multi-agent MAB settings \citep{liu2010distributed,korda2016distributed,nayyar2016regret}. Multi-agent MAB can be viewed as a special class of MARL problems, but the lack of dynamics in MAB environments makes a drastic difference from the dynamical setting in this paper. 

In the learning of dynamical systems, recent works have adopted concepts from MAB to analyze the regret of SARL algorithms in MDP \citep{jaksch2010near, osband2013more,gopalan2015thompson,ouyang2017learning_nips} and LQ systems \citep{
abbasi2011regret,ibrahimi2012efficient, faradonbeh2017regret,faradonbeh2019applications, ouyang2017learning, 
faradonbeh2018optimality, abbasi2015bayesian, abeille2018improved}.
Our MARL algorithm builds on these SARL algorithms by using  the novel idea of constructing an auxiliary SARL problem for multi-agent coordination.

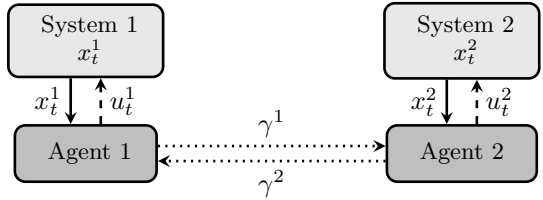
\begin{figure}
\begin{center}
\begin{tikzpicture}
\node [rectangle,draw,minimum width=1.2cm,minimum height=0.75cm,line width=1pt,rounded corners, fill=black!25]at (-1.0,0) (1) {
\begin{small}
\begin{tabular}{c}
Agent 1
\end{tabular}
\end{small}}; 
\node [rectangle,draw,minimum width=1.2cm,minimum height=0.75cm,line width=1pt,rounded corners,fill=black!25]at (4.0,0) (2) {
\begin{small}
\begin{tabular}{c}
Agent 2
\end{tabular}
\end{small}
}; 
\node [rectangle,draw,minimum width=1.5cm,minimum height=0.75cm,line width=1pt,rounded corners,fill=black!9]at (-1.0,1.5) (3){\begin{small}
\begin{tabular}{c}
System 1 \\ $x_t^1$
\end{tabular}
\end{small}};
\node [rectangle,draw,minimum width=1.5cm,minimum height=1cm,line width=1pt,rounded corners,fill=black!9]at (4.0,1.5) (4) {\begin{small}
\begin{tabular}{c}
System 2 \\ $x_t^2$
\end{tabular}
\end{small}}; 

\path[thick,->,>=stealth, dashed, line width=1pt]
	($(1.north) + (0.2,0)$) edge node{} ($(3.south) + (0.2,0)$)
	($(2.north) + (0.2,0)$) edge node{} ($(4.south) + (0.2,0)$)	
	;
\path[thick,->,>=stealth,line width=1pt]
	($(3.south) + (-0.2,0)$) edge node{} ($(1.north) + (-0.2,0)$)
	($(4.south) + (-0.2,0)$)	 edge node{} ($(2.north) + (-0.2,0)$)
	;

\path[thick,->,>=stealth, dotted, line width=1pt]
	($(1.east) + (0,0.1) $) edge node{} ($(2.west) + (0,0.1)$)
	($(2.west) + (0,-0.1) $) edge node{} ($(1.east) + (0,-0.1)$)
	;

\node[] at ($0.5*(1.north) + 0.5*(3.south) + (0.5,0)$) {$u_t^1$};
\node[] at ($0.5*(1.north) + 0.5*(3.south) + (-0.5,0)$) {$x_t^1$};

\node[] at ($0.5*(2.north) + 0.5*(4.south) + (0.5,0)$) {$u_t^2$};
\node[] at ($0.5*(2.north) + 0.5*(4.south) + (-0.5,0)$) {$x_t^2$};

\node[] at ($0.5*(1.east) + 0.5*(2.west) + (0,0.4)$) {$\gamma^1$};
\node[] at ($0.5*(1.east) + 0.5*(2.west) + (0,-0.4)$) {$\gamma^2$};

\end{tikzpicture}
\caption{Two-agent system model. Solid lines indicate communication links, dashed lines indicate control links, and dotted lines indicate the possibility of information sharing.}
\label{fig:SystemModel}
\end{center}
\end{figure}

\textbf{Notation.} The collection of matrices $A^1$ and $A^2$ (resp. vectors $x^1$ and $x^2$) is denoted as $A^{1,2}$ (resp. $x^{1,2}$). Given column vectors $x^1$ and $x^2$, the notation $\vecc(x^1, x^2)$ is used to denote the column vector formed by stacking  $x^1$ on top of  $x^2$. We use $[P^{\cdot, \cdot}]_{1,2}$ and $\textbf{diag}(P^1,P^2)$ to denote the following block matrices, $[P^{\cdot, \cdot}]_{1,2} := \begin{bmatrix}
P^{11} & P^{12}\\ P^{21} & P^{22} \end{bmatrix}$, $\textbf{diag}(P^1,P^2) = 
\begin{bmatrix}
P^{1} & \mathbf{0} \\ \mathbf{0}& P^{2} \end{bmatrix}$.

\section{Problem Formulation}
\label{problem_formulation}
Consider a multi-agent Linear-Quadratic (LQ) system consisting of two systems and two associated agents as shown in Figure \ref{fig:SystemModel}. The linear dynamics of systems 1 and 2 are given by
\begin{align}
x_{t+1}^1 &= A_*^{1} x_t^1 + B_*^{1} u_t^1 + w_t^1,
\notag \\
x_{t+1}^2 &= A_*^{2} x_t^2 + B_*^{2} u_t^2 + w_t^2,
\label{Model:dynamics}
\end{align}
where for $n \in \{1,2\}$, $x_t^n \in \R^{d_x^n}$ is the state of system $n$ and $u_t^n \in \R^{d_u^n}$ is the action of agent $n$. $A_*^{1,2}$ and $B_*^{1,2}$ are system matrices with appropriate dimensions. We assume that for $n \in \{1,2\}$, $w_t^n$, $t \geq0$, are i.i.d with standard Gaussian distribution $\mathcal{N}(\mathbf{0}, \mathbf{I})$. The initial states $x_0^{1,2}$ are assumed to be fixed and known.

The overall system dynamics can be written as,
\begin{align}
&x_{t+1} = A_* x_t + B_* u_t + w_t,
\label{Model:system_overall}
\end{align}
where we have defined $x_t = \vecc(x_t^1,x_t^2), u_t = \vecc(u_t^1,u_t^2), w_t = \vecc(w_t^1, w_t^2)$, $A_* = \textbf{diag}(A_*^1,A_*^2)$, and $B_* = \textbf{diag}(B_*^1,B_*^2)$.

At each time $t$, agent $n$, $n \in \{1,2\}$, perfectly observes the state $x_t^n$ of its respective system. The pattern of  information sharing plays an important role in the analysis of multi-agent systems. In order to capture different information sharing patterns between the agents, let $\gamma^n \in \{0,1\}$ be a fixed binary variable indicating the availability of a communication link from agent $n$ to the other agent. Then, $i_t^n$ which is the information sent by agent $n$ to the other agent can be written as,
\begin{align}
i_t^n = \begin{cases}
    x_t^n     & \quad \text{if } \gamma^n=1    \\
        \varnothing   & \quad \text{otherwise}
  \end{cases}.
\end{align}
At each time $t$, agent $n$'s action is a function $\pi_t^n$ of its information $h_t^n$, that is, $u_t^n = \pi_t^n(h_t^n)$ where $h_t^1 = \{x_{0:t}^1, u_{0:t-1}^1, i_{0:t}^{2}\}$ and $h_t^2 = \{x_{0:t}^2, u_{0:t-1}^2, i_{0:t}^{1}\}$. Let $\pi = (\pi^1, \pi^2)$ where $\pi^n = (\pi_0^n, \pi_1^n, \ldots)$. We will look at two following information sharing patterns:\footnote{The other possible pattern is two-way information sharing ($\gamma^1=\gamma^2=1$). In this case, both agents observe the states of both systems. Due to the lack of space, we delegate this case to Appendix \ref{sec:two_way}.}
\begin{enumerate}
\item No information sharing ($\gamma^1=\gamma^2=0$),
\item One-way information sharing from agent 1 to agent 2 ($\gamma^1=1$, $\gamma^2=0$).
\end{enumerate}

At time $t$, the system incurs an instantaneous cost $c(x_t,u_t)$, which is a quadratic function given by
\begin{align}
c(x_t,u_t) &=  x_t^\tp Q x_t + u_t^\tp R u_t, 
\label{Model:cost}
\end{align}
where $Q = [Q^{\cdot, \cdot}]_{1,2}$ is a known symmetric positive semi-definite (PSD) matrix and $R = [R^{\cdot, \cdot}]_{1,2}$ is a known symmetric positive definite (PD) matrix. 

\subsection{The optimal multi-agent linear-quadratic problem}
\label{sec:optimal_dec_LQ}
Let $\theta_*^n = [A_*^n,B_*^n]$ be the dynamics parameter of system $n$, $n \in \{1,2\}$. 
When $\theta_*^1$ and $\theta_*^2$ are perfectly known to the agents, minimizing the infinite horizon average cost  is a multi-agent stochastic Linear Quadratic (LQ) control problem. Let $J(\theta_*^{1,2})$ be the optimal infinite horizon average cost under $\theta_*^{1,2}$, that is,
\begin{align}
J(\theta_*^{1,2}) = \inf_{\pi} \limsup_{T\rightarrow\infty} \frac{1}{T} \sum_{t=0}^{T-1} \ee^{\pi}[c(x_t,u_t) | \theta_*^{1,2}].
\label{eq:optimal_cost_decentralized}
\end{align}
We make the following standard assumption about the multi-agent stochastic LQ problem.
\begin{assumption}
\label{assum:det_stb}
$(A_*,B_*)$ is stabilizable\footnote{$(A_*, B_*)$ is stabilizable if there exists a gain matrix $K$ such that $A_*+B_*K$ is stable.} and $(A_*,Q^{1/2})$ is detectable\footnote{$(A_*, Q^{1/2})$ is detectable if there exists a gain matrix $H$ such that $A_*+HQ^{1/2}$ is stable.}.
\end{assumption}
The above decentralized stochastic LQ problem has been studied by \citet{ouyang2018optimal}. The following lemma summarizes this result.
\begin{lemma}[\citet{ouyang2018optimal}]
\label{lm:opt_strategies}
Under Assumption \ref{assum:det_stb}, the optimal control strategies are given by
\begin{align}
u^{1}_t &= K^1(\theta_*^{1,2})  \begin{bmatrix}
\hat x_t^1 \\
\hat x_t^2 
\end{bmatrix}+ \tilde K^1 (\theta_*^{1}) (x_t^1 - \hat x_t^1),
\notag \\ 
u^{2}_t &= K^2(\theta_*^{1,2})  \begin{bmatrix}
\hat x_t^1 \\
\hat x_t^2 
\end{bmatrix} + \tilde K^2 (\theta_*^{2}) (x_t^2 - \hat x_t^2),
 \label{eq:opt_U}
\end{align}
{where the gain matrices $
K^1(\theta_*^{1,2}), K^2(\theta_*^{1,2}), \tilde K^1(\theta_*^{1})$, and $\tilde K^2(\theta_*^{2})$ can be computed offline\footnote{See Appendix \ref{lemma1_complete} for the complete description of this result.} and} $\hat x_t^n$, $n \in \{1,2\}$, can be computed recursively according to
\begin{align}
&\hat x_0^n =  x_0^n, 
\quad
\hat x_{t+1}^n=  \notag \\  
&
\begin{cases}
    x_{t+1}^n     & \text{if } \gamma^n=1    \\
     A_*^{n} \hat x_t^n + B_*^{n} K^{n}(\theta_*^{1,2}) \vecc(\hat x_t^1, \hat x_t^2)   & \text{otherwise}
  \end{cases}.
\label{eq:estimator_t_infinite}
\end{align}

\end{lemma}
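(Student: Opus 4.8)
The plan is to invoke the common-information approach to decentralized stochastic control. First I would partition each agent's data into a common component — everything both agents can reconstruct — and a private component. Under the two patterns considered, the shared signals $i_{0:t}^{1,2}$ together with the past actions (which are themselves functions of common data, by induction) constitute the common information $c_t$, while agent $n$'s unshared state trajectory is its private information. I would then define the common estimate $\hat x_t^n = \eec{x_t^n}{c_t}$ and observe the dichotomy built into \eqref{eq:estimator_t_infinite}: when $\gamma^n = 1$ the state $x_t^n$ is itself common, forcing $\hat x_t^n = x_t^n$, whereas when $\gamma^n = 0$ the estimate must be propagated purely from common data.

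Next I would recast the problem from the viewpoint of a fictitious coordinator who observes only $c_t$ and, at each step, issues prescriptions $\psi_t^n$ mapping agent $n$'s private information to its action. This converts the decentralized problem into an equivalent centralized (single-agent) control problem whose state is the conditional law of $(x_t^1, x_t^2)$ given $c_t$; because the primitives are linear-Gaussian, this law is characterized by the mean $\vecc(\hat x_t^1, \hat x_t^2)$ together with a deterministic covariance. Solving the coordinator's dynamic program with a quadratic value function then yields linear optimal prescriptions, giving actions of the form $u_t^n = K^n \vecc(\hat x_t^1, \hat x_t^2) + \tilde K^n e_t^n$ with $e_t^n := x_t^n - \hat x_t^n$, where $K^n$ arises from the coupled Riccati equation associated with the full matrices $A_*, B_*, Q, R$.

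The decisive structural point — and the reason $\tilde K^n$ depends only on $\theta_*^n$ — is the decoupling of the estimation error. I would write $x_t^n = \hat x_t^n + e_t^n$ and verify, using \eqref{Model:dynamics} and \eqref{eq:estimator_t_infinite}, that $e_t^n$ obeys $e_{t+1}^n = (A_*^n + B_*^n \tilde K^n) e_t^n + w_t^n$ when $\gamma^n = 0$ (and $e_t^n \equiv 0$ when $\gamma^n = 1$), so that each error process is driven solely by system $n$'s own parameters and noise and is orthogonal to $c_t$. This orthogonality lets the quadratic cost split additively into a term in $\vecc(\hat x_t^1, \hat x_t^2)$ and a sum of per-system error terms; minimizing the latter is an independent LQ regulation problem for system $n$ alone, producing $\tilde K^n(\theta_*^n)$ from the single-system Riccati equation. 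The recursion \eqref{eq:estimator_t_infinite} in the $\gamma^n = 0$ case then follows by applying $\eec{\cdot}{c_t}$ to the dynamics: since $\eec{e_t^n}{c_t} = 0$ and $\eec{w_t^n}{c_t} = 0$, only the common part $K^n \vecc(\hat x_t^1, \hat x_t^2)$ of $u_t^n$ survives.

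I expect the main obstacle to be justifying the linearity of the optimal prescriptions and the exact additive separation of the cost rigorously, rather than merely by analogy with centralized LQG. Establishing that the coordinator's information state collapses to the conditional mean plus a deterministic covariance, and that certainty equivalence holds so that the optimal prescriptions are affine in the private state, requires the careful induction carried out in \citet{ouyang2018optimal}; the dynamic decoupling of the two systems is ultimately what guarantees that the private correction gains factor across systems into $\tilde K^1(\theta_*^1)$ and $\tilde K^2(\theta_*^2)$.
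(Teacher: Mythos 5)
Your proposal is correct and follows essentially the same route as the paper's source for this result: the paper offers no independent proof of this lemma, importing it from \citet{ouyang2018optimal} (with the complete statement of the Riccati equations and gain matrices restated in Appendix \ref{lemma1_complete}), and that reference establishes it via precisely the common-information/coordinator argument you outline. The structural facts you identify --- the error dynamics $e_{t+1}^n = (A_*^n + B_*^n \tilde K^n(\theta_*^n))\, e_t^n + w_t^n$ when $\gamma^n = 0$, and the additive split of the cost into a common-estimate part and per-system error parts --- are exactly what the paper later re-derives and exploits in its own proofs of Lemma \ref{lem:connection_optimal_cost} and Lemma \ref{lem:expectation_equalities}.
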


\subsection{The multi-agent reinforcement learning problem}
The problem we are interested in is to minimize the infinite horizon average cost when the matrices $A_*$ and $B_*$ of the system are unknown. In this case, the control problem 
\red{described by \eqref{Model:dynamics}-\eqref{Model:cost}}
can be seen as a Multi-Agent Reinforcement Learning (\texttt{MARL}) problem where both agents need to learn the system parameters $\theta_*^1 = [A_*^{1}, B_*^1]$ and $\theta_*^2 = [A_*^{2}, B_*^2]$ in order to minimize the infinite horizon average cost. The learning performance of policy $\pi$ is measured by the cumulative regret over $T$ steps defined as,
\begin{align}
R(T, \pi) = \sum_{t=0}^{T-1} \left[ c(x_t, u_t) - J(\theta_*^{1,2}) \right],
\label{eq:regret_decentralized}
\end{align} 
which is the difference between the performance of the agents under policy $\pi$ and the
optimal infinite horizon cost under full information about the system dynamics. Thus, the regret
can be interpreted as a measure of the cost of not knowing the system dynamics.


\section{An Auxiliary  Single-Agent LQ Problem}
\label{sec:centralized_LQ}
In this section, we construct an auxiliary single-agent LQ control problem based on the \texttt{MARL} problem of Section \ref{problem_formulation}. 
\red{
This auxiliary single-agent LQ control problem is inspired by the \textit{common information based coordinator} (which has been developed in non-learning settings in \citet{nayyar2013decentralized} and \citet{asghari2018optimal} and the references therein).
We will later use the auxiliary problem as a coordination mechanism for our \texttt{MARL} algorithm.
}


Consider a single-agent system with dynamics

\vspace{-6mm}
\begin{align}
x_{t+1}^{\diamond} = A_* x_t^{\diamond} + B_* u_t^{\diamond} + \begin{bmatrix}
w_t^1 \\ \mathbf{0}
\end{bmatrix},
\label{Model:system_LQ}
\end{align}
where $x_t^{\diamond} \in \R^{d_x^1 + d_x^2}$ is the state of the system, $u_t^{\diamond} \in  \R^{d_u^1 + d_u^2}$ is the action of the auxiliary agent, $w_t^1$ is the noise vector of system 1 defined in \eqref{Model:dynamics}, and matrices $A_*$ and $B_*$ are as defined in \eqref{Model:system_overall}. The initial state $x_0^{\diamond}$ is assumed to be equal to $x_0$. The action $u_t^{\diamond} = \pi_t^{\diamond} (h_t^{\diamond})$ at time $t$ is a function of the history of observations $h_t^{\diamond} = \{x_{0:t}^{\diamond}, u_{0:t-1}^{\diamond}\}$. The auxiliary agent's strategy is denoted by $\pi^{\diamond} = (\pi_1^{\diamond}, \pi_2^{\diamond},\ldots)$. The instantaneous cost $c(x_t^{\diamond},u_t^{\diamond})$ of the system is a quadratic function given by
\begin{align}
c(x_t^{\diamond}, u_t^{\diamond}) &= 
(x_t^{\diamond})^\tp Q x_t^{\diamond} + (u_t^{\diamond})^\tp R u_t^{\diamond},
\label{Model:cost_LQ}
\end{align}
where matrices $Q$ and $R$ are as defined in \eqref{Model:cost}.



When the parameters $\theta_*^{1}$  and $\theta_*^{2}$ are unknown, we will have a Single-Agent Reinforcement Learning (\texttt{SARL}) problem. In this problem, the regret of a policy $\pi^{\diamond}$ over $T$ steps is given by
\begin{align}
R^{\diamond}(T, \pi^{\diamond}) =  \sum_{t=0}^{T-1} \left[ c(x_t^{\diamond}, u_t^{\diamond}) - J^{\diamond}(\theta_*^{1,2})  \right],
\label{eq:regret_centralized}
\end{align} 
where $J^{\diamond}(\theta_*^{1,2})$ is the optimal infinite horizon average cost under $\theta_*^{1,2}$.


Existing algorithms for the \texttt{SARL} problem are generally based on the two following approaches: Optimism in the Face of Uncertainty (OFU) \citep{abbasi2011regret, ibrahimi2012efficient, faradonbeh2017regret, faradonbeh2019applications} and Thompson Sampling (TS) (also known as posterior sampling) \citep{faradonbeh2017regret, abbasi2015bayesian, abeille2018improved}. In spite of the differences among these algorithms, all can be generally described as the \texttt{AL-SARL} algorithm (algorithm for the \texttt{SARL} problem). In this algorithm, at each time $t$, the agent interacts with a \texttt{SARL} learner (see Appendix \ref{sec:details_SARL_learner} for a detailed description the \texttt{SARL} learner) by feeding time $t$ and the state $x_t^{\diamond}$ to it and receiving \textit{estimates} $\theta_t^1 = [A_t^1, B_t^1]$ and $\theta_t^2 = [A_t^2, B_t^2]$ of the unknown parameters $\theta_*^{1,2}$. Then, the agent uses $\theta_t^{1,2}$ to calculate the gain matrix $K(\theta_t^{1,2})$ (see Appendix \ref{lemma1_complete} for a detailed description of this matrix) and executes the action $u_t^{\diamond} = K(\theta_t^{1,2}) x_t^{\diamond}$. As a result, a new state $x_{t+1}^{\diamond}$ is observed.

Among the existing algorithms, OFU-based algorithms of \citet{abbasi2011regret,ibrahimi2012efficient,faradonbeh2017regret, faradonbeh2019applications} and the TS-based algorithm of \citet{faradonbeh2017regret} achieve a $\tilde O (\sqrt{T})$ regret for the \texttt{SARL} problem (Here $\tilde O (\cdot)$ hides constants and logarithmic factors).
\begin{minipage}{.49\textwidth}
\begin{algorithm}[H]
   \caption{\texttt{AL-SARL}} 
   \label{alg:TS_centralized}
\begin{algorithmic}
  \State Initialize $\mathcal{L}$ and $x_0^{\diamond}$
   \For{$t=0,1,\ldots$}
\State Feed time $t$ and state $x_t^{\diamond}$ to $\mathcal{L}$ and get $\theta_t^{1}$ and $\theta_t^{2}$
   \State Compute $K(\theta_t^{1,2})$
   \State Execute $u_t^{\diamond} = K(\theta_t^{1,2}) x_t^{\diamond}$
   \State Observe new state $x_{t+1}^{\diamond}$
   \EndFor
\end{algorithmic}
\end{algorithm}
\end{minipage} 

\begin{minipage}{.5\textwidth}
\begin{center}
\begin{tikzpicture}
\node [rectangle,draw,minimum width=1.2cm,minimum height=0.75cm,line width=1pt,rounded corners, fill=black!25]at (-1.0,0) (1) {
\begin{small}
\begin{tabular}{c}
$\mathcal{L}$
\end{tabular}
\end{small}}; 
\path[thick,->,>=stealth, dashed, line width=1pt]
	($(1.north) + (0,0.5)$) edge node{} ($(1.north) + (0,0)$)
	;
\path[thick,->,>=stealth,line width=1pt]
	($(1.west) + (-0.5,0)$) edge node{} ($(1.west) + (0,0)$)
	($(1.east) + (0,0)$)	 edge node{} ($(1.east) + (0.5,0)$)
	;

\node[] at ($(1.north) + (0,0.7)$) {Initialize parameters};
\node[] at ($(1.west) + (-1.2,0.25)$) {state $x_t^{\diamond}$};
\node[] at ($(1.west) + (-1.2,-0.25)$) {time $t$};
\node[] at ($(1.east) + (1.5,0.25)$) {$\theta_t^1 = [A_t^1, B_t^1]$};
\node[] at ($(1.east) + (1.5,-0.25)$) {$\theta_t^2 = [A_t^2, B_t^2]$};

\end{tikzpicture}
\end{center}
\end{minipage}

\section{Main Results}
In this section, we start with the regret analysis for the case where the parameters of both systems are unknown (that is, $\theta_*^1$ and $\theta_*^2$ are unknown) and there is no information sharing between the agents (that is, $\gamma^1=\gamma^2=0$). The detailed proofs for all results are in the appendix.

\subsection{Unknown $\theta_*^1$ and $\theta_*^2$, no information sharing ($\gamma^1=\gamma^2=0$)}
\label{sec:main_results_1}
For the \texttt{MARL} problem of this section (it is called \texttt{MARL1} for future reference), we show that there is no learning algorithm with a sub-linear in $T$ regret for all instances of  the \texttt{MARL1} problem. The following theorem states this result. 

\begin{theorem}
\label{thm:negative_results_case1}
There is no algorithm that can achieve a lower-bound better than $\Omega(T)$ on the regret of all instances of the \texttt{MARL1} problem.
\end{theorem}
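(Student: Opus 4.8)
The plan is to prove the impossibility by an information-theoretic indistinguishability argument aimed specifically at agent $2$. Under $\gamma^1=\gamma^2=0$ the two closed loops are driven by independent noises $w^1\perp w^2$ through policies that depend only on their own histories, so agent $2$'s information $h_t^2=\{x_{0:t}^2,u_{0:t-1}^2\}$ is a measurable function of $(x_0^2,w_{0:t-1}^2)$ alone and therefore carries \emph{no} information about $\theta_*^1=[A_*^1,B_*^1]$. I would exploit this by fixing $\theta_*^2$ and constructing two instances of \texttt{MARL1} that share this $\theta_*^2$ but differ in the first system's parameter, $\theta_*^{1,a}\neq\theta_*^{1,b}$, both chosen to satisfy Assumption~\ref{assum:det_stb}. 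Because the law of $h_t^2$ is identical across the two instances, any algorithm (with private randomness) must induce the \emph{same} distribution of actions $u_t^2$ on both; no learner can tailor agent $2$'s behavior to the true $\theta_*^1$. The theorem then reduces to showing that this single forced behavior of agent $2$ is bounded away from optimality, by a constant that does not vanish in $t$, on at least one of the two instances, so that summing over $t=0,\dots,T-1$ in \eqref{eq:regret_decentralized} yields $\Omega(T)$.

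For the second step I would read off from Lemma~\ref{lm:opt_strategies} what agent $2$ \emph{should} be doing on each instance. The optimal action in \eqref{eq:opt_U} contains the coordination term $K^2(\theta_*^{1,2})\,\vecc(\hat x_t^1,\hat x_t^2)$, which depends on $\theta_*^1$ both through the gain $K^2(\theta_*^{1,2})$ and through the common-information trajectory $\hat x_t^1$ generated by \eqref{eq:estimator_t_infinite}. I would select $\theta_*^{1,a},\theta_*^{1,b}$, a known initial state $x_0$, and a cost coupling with $Q^{12}\neq 0$ in \eqref{Model:cost} so that the two associated optimal $u_t^2$-sequences are separated and, crucially, so that the benchmarks $J(\theta_*^{1,a},\theta_*^2)$ and $J(\theta_*^{1,b},\theta_*^2)$ differ. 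Substituting agent $2$'s common (instance-independent) behavior into the quadratic cost and comparing with each benchmark gives a per-instance excess cost; a triangle-inequality/averaging argument over the pair $\{a,b\}$ then forces at least one instance to incur a strictly positive instantaneous excess.

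The step I expect to be the main obstacle is upgrading this excess from a transient $O(1)$ penalty to a genuinely persistent per-step gap. The coordination signal $\hat x_t$ in \eqref{eq:estimator_t_infinite} is propagated by the \emph{stable} closed loop $A_*+B_*K(\theta_*^{1,2})$ and hence decays geometrically, so a naive comparison of coordination terms only produces a bounded (sub-linear) penalty — exactly the regime in which a decentralized learner could still hope for $\tilde O(\sqrt T)$. To reach $\Omega(T)$ I would therefore try to locate the irreducible gap in a part of the steady-state cost that does \emph{not} wash out, namely the mismatch that agent $2$'s inability to reconstruct $\theta_*^1$ forces in the component of its control interacting, through $Q^{12}$ (and $R^{12}$), with system $1$, and I would engineer the two instances so that any fixed agent-$2$ rule pays a constant excess on at least one of them for a positive fraction of the horizon. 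Establishing that this separation is uniform in $t$, rather than merely transient, is the delicate point; once a constant lower bound on the instantaneous regret is secured on one instance of the pair, summation over the horizon delivers the claimed $\Omega(T)$ bound and hence Theorem~\ref{thm:negative_results_case1}.
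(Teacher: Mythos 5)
Your first step --- the indistinguishability observation that, with $\gamma^1=\gamma^2=0$, the ignorant agent's history is a function of its own noise and actions alone, so its action distribution must be identical across instances that differ only in the other system's parameter --- is exactly the engine of the paper's proof (the paper applies it to agent~1 with $\theta_*^2$ unknown, a pure relabeling of your setup). The genuine gap is in your second step, and it is fatal to the route you propose: for any \emph{fixed} pair of instances both satisfying Assumption~\ref{assum:det_stb}, the ``irreducible steady-state gap'' you are hoping to engineer does not exist. Under no communication, the common-information estimate $\hat x_t$ in \eqref{eq:estimator_t_infinite} obeys a purely deterministic recursion driven by the stable closed-loop matrix $A_*+B_*K(\theta_*^{1,2})$, so $\hat x_t\to 0$ geometrically; by \eqref{eq:opt_U} the ignorant agent's optimal action then converges to $\tilde K^2(\theta_*^2)(x_t^2-\hat x_t^2)$, which depends only on the \emph{known} parameter $\theta_*^2$. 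The entire dependence of agent~2's optimal behavior on $\theta_*^1$ (through $K^2(\theta_*^{1,2})$ and $\hat x_t^1$) lives in this geometrically decaying transient, so the penalty that indistinguishability can force between two fixed instances sums to $O\bigl(1/(1-\rho^2)\bigr)=O(1)$, where $\rho<1$ is the fixed closed-loop spectral radius --- never $\Omega(T)$, no matter how you choose $Q^{12}$ or $R^{12}$.

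The paper escapes this by letting the hard instance depend on the horizon, i.e., by taking a supremum over a one-parameter family whose transient time scale diverges. Concretely, it uses the noiseless instance $x_{t+1}^1=u_t^1$, $x_{t+1}^2=a_*^2x_t^2$, $x_0^1=x_0^2=1$, with cost $(x_t^1-x_t^2)^2+(u_t^1-0.5u_t^2)^2$, so the informed benchmark is $J=0$ and the regret satisfies $R(T,\pi)\ge\sum_{t=1}^{T-1}\bigl(u_{t-1}^1-(a_*^2)^t\bigr)^2$ with $u^1_{t}$ effectively open loop. A two-point averaging over $a_*^2\in\{0,\alpha\}$ (your triangle-inequality step, applied to this family) gives
\begin{align}
\sup_{a_*^2\in(-1,1)} R(T,\pi)\;\ge\;\sum_{t=1}^{T-1}\frac{\alpha^{2t}}{4}\;=\;\frac{\alpha^2(1-\alpha^{2T})}{4(1-\alpha^2)}\qquad\text{for every }\alpha\in(-1,1),
\end{align}
and only the final limit $\alpha\to 1^-$ (equivalently, choosing $1-\alpha^2\lesssim 1/T$, so the ``transient'' lasts as long as the horizon) yields $T/4=\Omega(T)$. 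Note that for each fixed $\alpha$ this bound is $O(1)$, consistent with the obstruction above. To repair your proof you must replace ``two fixed instances with a persistent per-step gap'' by ``a family of instances approaching marginal stability, with the adversarial parameter chosen as a function of $T$''; the quantifier in the theorem (no algorithm does well on \emph{all} instances) permits exactly this.
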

A $\Omega(T)$ regret implies that the average performance of the learning algorithm has at least a constant gap from the ideal performance of informed agents. This prevent efficient learning performance even in the limit.
Theorem \ref{thm:negative_results_case1} implies that in a \texttt{MARL1} problem where the system dynamics are unknown, learning is not possible without communication between the agents. The proof of Theorem \ref{thm:negative_results_case1} also provides the following result.

\begin{lemma}
Consider a \texttt{MARL} problem where the parameter of system 2 (that is, $\theta_*^2$) is known to both agents and only the parameter of system 1 (that is, $\theta_*^1$) is unknown. Further, there is no communication between the agents. Then, there is no algorithm that can achieve a lower-bound better than $\Omega(T)$ on the regret of all instances of this \texttt{MARL} problem.
\end{lemma}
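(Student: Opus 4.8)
The plan is to establish this statement as the one-sided specialization of Theorem~\ref{thm:negative_results_case1}, reusing the same construction but now varying only $\theta_*^1$ while holding $\theta_*^2$ (and $Q$, $R$, $x_0$) fixed. The starting point is the observation that, with no communication ($\gamma^1=\gamma^2=0$) and decoupled dynamics~\eqref{Model:dynamics}, agent~$2$'s information $h_t^2=\{x_{0:t}^2,u_{0:t-1}^2\}$ is generated entirely by system~$2$: since $x_{t+1}^2$ depends only on $(x_t^2,u_t^2,w_t^2)$ and each $u_s^2=\pi_s^2(h_s^2)$, the joint law of $(x_{0:T}^2,u_{0:T-1}^2)$ is a function of $\theta_*^2$ and $\pi^2$ alone and is independent of $\theta_*^1$. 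Consequently, for any algorithm, agent~$2$'s action process is committed across every instance that shares $\theta_*^2$; coupling the realizations of $w^2$ across two such instances makes agent~$2$'s trajectory identical on both.

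Next I would exhibit a two-point family $\theta_*^1\in\{\theta_A,\theta_B\}$ for which the optimal response of agent~$2$ genuinely differs. By Lemma~\ref{lm:opt_strategies}, the optimal $u_t^2$ equals $K^2(\theta_*^{1,2})\vecc(\hat x_t^1,\hat x_t^2)+\tilde K^2(\theta_*^2)(x_t^2-\hat x_t^2)$, and both the coordinated gain $K^2(\theta_*^{1,2})$ and the common prediction $\hat x_t$ produced by~\eqref{eq:estimator_t_infinite} depend on $\theta_*^1$. I would choose $\theta_A,\theta_B$ (together with off-diagonal coupling in $Q$, so that $K^2$ truly mixes the two subsystems) so that the induced optimal action sequences for agent~$2$ are separated by a fixed amount. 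Since agent~$2$'s committed policy cannot match a $\theta_*^1$-dependent target in both instances simultaneously, a Le\,Cam-style two-point argument yields that in at least one of the two instances the expected per-step excess cost $\ee[c(x_t,u_t)]-J(\theta_*^{1,2})$ stays bounded below by a positive constant.

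Summing this constant gap over $t=0,\dots,T-1$ gives $R(T,\pi)=\Omega(T)$ for that instance, which is the claim; because the argument quantifies over an arbitrary algorithm, no algorithm can beat $\Omega(T)$ on all instances. This is exactly the branch of the proof of Theorem~\ref{thm:negative_results_case1} in which an agent fails to adapt to the parameter of the subsystem it never observes: Theorem~\ref{thm:negative_results_case1} applies it to both agents (varying $\theta_*^1$ and $\theta_*^2$), whereas here it is invoked for agent~$2$ and $\theta_*^1$ only, with $\theta_*^2$ known.

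I expect the main obstacle to be the quantitative step, namely showing that the mismatch between agent~$2$'s committed action and its $\theta_*^1$-dependent optimum produces a cost gap that is $\Omega(T)$ rather than a vanishing transient. This requires choosing the instance family so that the dependence of $K^2(\theta_*^{1,2})$ and of the common state estimate on $\theta_*^1$ is cost-relevant in steady state, and then converting the two-point indistinguishability into a uniform-in-$t$ lower bound on the instantaneous regret; this is precisely where the decentralized information structure, rather than mere parameter uncertainty, is what forces linear regret.
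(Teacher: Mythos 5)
Your structural reduction is the same one the paper relies on: with $\gamma^1=\gamma^2=0$ and decoupled dynamics, agent 2's observation process carries no information about $\theta_*^1$, so its action process is committed across all instances sharing $\theta_*^2$, and the adversary exploits this commitment through a two-point style argument. Indeed, the paper obtains this lemma by relabeling the explicit instance built for Theorem \ref{thm:negative_results_case1}: the unknown system is made autonomous, $x_{t+1}^1 = a_*^1 x_t^1$ with $x_0^1=1$ and $a_*^1\in(-1,1)$ (no noise), the known system is a trivial noiseless integrator $x_{t+1}^2=u_t^2$, and the cost contains a tracking term $(x_t^1-x_t^2)^2$ forcing agent 2 to track the trajectory $(a_*^1)^t$ it never observes; since agent 2's observations then carry no feedback at all, its policy is effectively open loop and the regret is at least $\sum_{t\geq 1}\bigl(u_{t-1}^2-(a_*^1)^t\bigr)^2$.

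However, your proposal stops short of the step that actually carries the proof, and you say so yourself when you flag the "quantitative step." Two concrete problems. First, the route you sketch --- generic $\theta_A,\theta_B$ with off-diagonal coupling in $Q$ so that $K^2(\theta_*^{1,2})$ and $\hat{x}_t$ depend on $\theta_*^1$ --- requires lower-bounding the excess cost of an \emph{arbitrary} process adapted to agent 2's filtration, not merely of a linear feedback law with the wrong gain; "the two optimal gains differ" does not by itself give a cost gap against every committed strategy, which is why the paper instead isolates an explicit parameter-dependent \emph{trajectory} that must be tracked. Second, and more fundamentally, your claim that the per-step excess cost "stays bounded below by a positive constant" is generally false for a fixed two-point family of stable parameters: the parameter-dependent part of the optimal behavior (in the paper's instance, the target $(a_*^1)^t$) decays geometrically, so the cumulative gap from any \emph{fixed} pair $\{\theta_A,\theta_B\}$ is $O(1)$, not $\Omega(T)$. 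The paper escapes this by keeping the whole family $a_*^1\in(-1,1)$ in play: averaging the regret over the pair $\{0,\alpha\}$ gives at least $\alpha^2(1-\alpha^{2T})/\bigl(4(1-\alpha^2)\bigr)$ for every $\alpha$, and only the limit $\alpha\to 1^-$ --- pushing the unknown parameter to the stability boundary so that the trajectory to be tracked remains of order one for all $t\leq T$ --- yields the $T/4=\Omega(T)$ bound. Without this boundary-of-stability device (or an equivalent $T$-dependent choice such as $\alpha=1-1/T$), the Le Cam-style argument you describe does not produce a linear lower bound.
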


The above results imply that if the parameter of a system is unknown, both agents should receive information about this unknown system; otherwise, there is no learning policy $\pi$ that can guarantee a sub-linear in $T$ regret for all instances of this \texttt{MARL} problem.


In the next section, we assume that $\theta_*^2$ is known to both agents and only $\theta_*^1$ is unknown. Further, we assume the presence of a communication link from agent 1 to agent 2, that is, $\gamma^1=1$. This communication link allows agent 2 to receive feedback about the state $x_t^1$ of system 1 and hence, remedies the impossibility of learning for agent 2.

\subsection{Unknown $\theta_*^1$, one-way information sharing from agent 1 to agent 2 ($\gamma^1=1$, $\gamma^2=0$)}
\label{sec:main_results_2}
In this section, we consider the case where only system 1 is unknown and there is one-way communication from agent 1  to agent 2. Despite this one-way information sharing, the two agents still have different information. In particular, at each time agent 2 observes the state $x^2_t$ of system 2 which is {not available} to agent 1.
For the \texttt{MARL} of this section (it is called \texttt{MARL2} for future reference), we propose the \texttt{AL-MARL} algorithm which builds on the auxiliary \texttt{SARL} problem of Section \ref{sec:centralized_LQ}.
\texttt{AL-MARL} algorithm is a decentralized multi-agent algorithm which is performed independently by the agents. Every agent independently constructs an auxiliary \texttt{SARL} problem where $x_t^{\diamond} = \vecc(x_t^1, \check x_t^2)$
and applies an \texttt{AL-SARL} algorithm with its own learner $\mathcal{L}$ to it in order to learn the unknown parameter $\theta_*^1$ of system 1. In this algorithm, $\check x_{t}^2$ (described in the \texttt{AL-MARL} algorithm) is a proxy for $\hat x_{t}^2$ of \eqref{eq:estimator_t_infinite} updated using the estimate $\theta_t^1$  instead of the unknown parameter $\theta_*^1$.


At time $t$, each agent feeds $\vecc(x_t^1, \check x_t^2)$ to its own \texttt{SARL} learner $\mathcal{L}$ and gets $\theta_t^1$ and $\theta_t^2$. Note that both agents already know the true parameter $\theta_*^2$, hence they only use $\theta_t^1$ to compute their gain matrix {$K^{\texttt{agent_ID}}(\theta_t^1, \theta_*^2)$} and use this gain matrix to compute their actions $u_t^1$ and $u_t^2$ according to  the \texttt{AL-MARL} algorithm.
Note that agent 2 needs $\tilde K^2(\theta_*^2)$ to calculate its actions $u_t^2$. However, we know that $\tilde K^2(\theta_*^2)$ is independent of the unknown parameter $\theta_*^1$ and hence, $\tilde K^2(\theta_*^2)$ can be calculated prior to the beginning of the algorithm. After the execution of the actions $u_t^{1}$ and $u_t^{2}$ by the agents, both agents observe the new state $x_{t+1}^1$ and agent 2 further observes the new state $x_{t+1}^2$. Finally, each agent independently computes $\check x_{t+1}^2$.


\red{
\begin{remark}
\label{rem:common_based_estimator}
The state $x_t^{\diamond}$ of the auxiliary \texttt{SARL} can be interpreted as an estimate of the state $x_t$ of the overall system \eqref{Model:system_overall} that each agent computes based on the common information between them. In fact, the \texttt{SARL} dynamics in \eqref{Model:system_LQ}  can be seen as the filtering equation for this common estimate.
\end{remark}
}

\red{
\begin{remark}
We want to emphasize that unlike the idea of \textit{centralized training with decentralized execution} \citep{foerster2016learning, dibangoye2018learning, hernandez2018multiagent}, the \texttt{AL-MARL} algorithm is an \textit{online decentralized learning} algorithm. This means that there is no centralized learning phase in the setup where agents can collect information or have access to a simulator. The agents are simultaneously learning and controlling the system.
\end{remark}
}

\begin{remark}
\label{rem:different_samples}
Since the \texttt{SARL} learner $\mathcal{L}$ can include taking samples and solving optimization problems, due to the independent execution of the \texttt{AL-MARL} algorithm, agents might receive different $\theta_t^{1,2}$ from their own learner $\mathcal{L}$.
\end{remark}

In order to avoid the issue pointed out in Remark \ref{rem:different_samples}, we make an assumption about the output of the \texttt{SARL} learner $\mathcal{L}$.

\begin{assumption}
\label{assum:seed}
Given the same time and same state input to the \texttt{SARL} learner $\mathcal{L}$, the outputs $\theta_t^{1,2}$ from different learners $\mathcal{L}$ are the same.
\end{assumption}
Note that Assumption \ref{assum:seed} can be easily achieved by setting the same initial sampling seed (if the \texttt{SARL} learner $\mathcal{L}$ includes taking samples) or by setting the same tie-breaking rule among possible similar solutions of an optimization problem (if the \texttt{SARL} learner $\mathcal{L}$ include solving optimization problems). Now, we present the following result which is based on Assumption \ref{assum:seed}. 

\begin{theorem}
\label{thm:regret_bound_case2}
Under Assumption \ref{assum:seed}, let $R(T, \texttt{AL-MARL})$ be the regret for the \texttt{MARL2} problem under the policy of the \texttt{AL-MARL} algorithm and $R^{\diamond}(T, \texttt{AL-SARL})$ be the regret for the auxiliary \texttt{SARL} problem under the policy of the \texttt{AL-SARL} algorithm. Then for any $\delta \in (0, 1/e)$, with probability at least $1 - \delta$, 
\begin{align}
R(T, \texttt{AL-MARL})  \leq R^{\diamond}(T, \texttt{AL-SARL}) +\log(\frac{1}{\delta}) \tilde K \sqrt{T}.
\end{align} 
\end{theorem}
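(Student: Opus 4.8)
The plan is to write $R(T,\texttt{AL-MARL})-R^\diamond(T,\texttt{AL-SARL})$ as a sum of per-step cost gaps and show that this sum concentrates at scale $\sqrt T$. First I would exploit the structure of the \texttt{AL-MARL} actions. Since $\gamma^1=1$ the first block of $x_t^\diamond$ is the \emph{true} state $x_t^1$, so $x_t-x_t^\diamond=\vecc(\mathbf 0,e_t)$ with $e_t:=x_t^2-\check x_t^2$, and agent~2's action differs from the second block of the \texttt{SARL} action $u_t^\diamond=K(\theta_t^1,\theta_*^2)x_t^\diamond$ only by the correction $\tilde K^2(\theta_*^2)e_t$, i.e. $u_t-u_t^\diamond=\vecc(\mathbf 0,\tilde K^2(\theta_*^2)e_t)$. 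Subtracting the recursions of $x_t^2$ and $\check x_t^2$, the gain-dependent terms cancel because both use $K^2(\theta_t^1,\theta_*^2)x_t^\diamond$, leaving the \emph{autonomous, learning-independent} recursion $e_{t+1}=F e_t+w_t^2$ with $F:=A_*^2+B_*^2\tilde K^2(\theta_*^2)$ stable and $e_0=\mathbf 0$. Hence $e_t=\sum_{s<t}F^{\,t-1-s}w_s^2$ is a zero-mean Gaussian process that is a function of $w^2$ alone.

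Next I would expand each cost gap via $a^\tp Q a-b^\tp Q b=(a-b)^\tp Q(a-b)+2b^\tp Q(a-b)$ (and likewise for $R$), obtaining $c(x_t,u_t)-c(x_t^\diamond,u_t^\diamond)=g(e_t)+\mathrm{cr}_t$, where $g(e_t)=e_t^\tp\big(Q^{22}+\tilde K^2(\theta_*^2)^\tp R^{22}\tilde K^2(\theta_*^2)\big)e_t$ is quadratic in $e_t$ and $\mathrm{cr}_t=2(x_t^\diamond)^\tp Q\,\vecc(\mathbf 0,e_t)+2(u_t^\diamond)^\tp R\,\vecc(\mathbf 0,\tilde K^2(\theta_*^2)e_t)$ is bilinear in $(x_t^\diamond,e_t)$. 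Using the full-information separation/orthogonality argument behind Lemma~\ref{lm:opt_strategies}, I would identify the constant $\lambda:=J(\theta_*^{1,2})-J^\diamond(\theta_*^{1,2})$ with the stationary mean $\bar g:=\lim_t\ee[g(e_t)]$ of the error cost. This yields the exact identity $R(T,\texttt{AL-MARL})-R^\diamond(T,\texttt{AL-SARL})=\sum_{t<T}\big(g(e_t)-\bar g\big)+\sum_{t<T}\mathrm{cr}_t$.

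It then remains to bound the two sums with high probability. For $\sum_{t<T}(g(e_t)-\bar g)$ I would first note that, since $e_0=\mathbf 0$ and $\ee[g(e_t)]\to\bar g$ geometrically, the mean $\sum_{t<T}(\ee[g(e_t)]-\bar g)=\ord(1)$; I would then view $\sum_{t<T}g(e_t)$ as a quadratic form in the Gaussian vector $(w_0^2,\dots,w_{T-1}^2)$, whose associated matrix has operator norm $\ord(1)$ and Frobenius norm $\ord(\sqrt T)$ by stability of $F$, so the Hanson--Wright inequality gives a deviation $\ord(\sqrt T\,\log(1/\delta))$ from its mean. For the bilinear term, the crucial observation is that $x_t^\diamond$, and hence $u_t^\diamond$ and the gains $K(\theta_t^1,\theta_*^2)$, depend only on $w^1$ and the shared learner randomness, which are independent of $w^2$; under Assumption~\ref{assum:seed} both agents compute the same $x_t^\diamond$, so this is well defined. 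Conditioning on the $\sigma$-field generated by $w^1$ and the seed, $\sum_{t<T}\mathrm{cr}_t$ becomes a fixed linear functional of the Gaussian process $(e_t)_t$, hence a centered Gaussian whose conditional variance I would bound, using stability of $F$ and boundedness of the gains, by $\ord\big(\sum_{t<T}\norm{x_t^\diamond}^2\big)$; a Gaussian tail bound then controls it by $\ord\big(\log(1/\delta)\,(\sum_{t<T}\norm{x_t^\diamond}^2)^{1/2}\big)$.

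Finally I would close the argument by noting that under the \texttt{AL-SARL} algorithm $\sum_{t<T}\norm{x_t^\diamond}^2=\ord(T)$ with high probability (a standard consequence of the stabilizing, sublinear-regret guarantees of the base \texttt{SARL} learner $\mathcal L$), so the bilinear term is $\ord(\sqrt T\log(1/\delta))$; a union bound over the relevant high-probability events, together with absorbing all problem-dependent constants and logarithmic-in-$T$ factors into $\tilde K$, yields the claimed bound. I expect the main obstacle to be the bilinear cross term: it couples the learning-driven, possibly transiently large auxiliary state $x_t^\diamond$ (a function of $w^1$ and the estimates $\theta_t^1$) with the estimation error $e_t$ (a function of $w^2$), and the argument hinges on \emph{simultaneously} using the independence $w^1\!\perp\!w^2$ for conditional Gaussianity and a high-probability $\ord(T)$ bound on $\sum_{t<T}\norm{x_t^\diamond}^2$ drawn from the \texttt{SARL} analysis, which must be combined carefully through the union bound.
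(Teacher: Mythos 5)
Your proposal follows the same skeleton as the paper's proof: the autonomous, learning-independent error recursion $e_{t+1}=\bigl(A_*^2+B_*^2\tilde K^2(\theta_*^2)\bigr)e_t+w_t^2$, the coupling $x_t^\diamond=\vecc(x_t^1,\check x_t^2)$ between the two problems, the identification $J(\theta_*^{1,2})-J^\diamond(\theta_*^{1,2})=\tr(D\Sigma)$ (the paper's Lemma \ref{lem:connection_optimal_cost}), and a Hanson--Wright concentration bound for $\sum_t\bigl(e_t^\tp D e_t-\tr(D\Sigma)\bigr)$ exploiting stability (the paper's Lemma \ref{thm:prob_regret}). The genuine divergence is your cross term $\mathrm{cr}_t$. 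The paper's Lemma \ref{lem:expectation_equalities} asserts the \emph{pointwise} identity $c(x_t,u_t)\vert_{\texttt{AL-MARL}}=c(x_t^\diamond,u_t^\diamond)\vert_{\texttt{AL-SARL}}+e_t^\tp D e_t$, so in the paper's proof there is nothing like $\mathrm{cr}_t$ to control and Theorem \ref{thm:regret_bound_case2} reduces entirely to the concentration lemma. Your instinct that this term is the main obstacle is well founded: expanding the quadratics, $\mathrm{cr}_t=2(x_t^\diamond)^\tp Q\,\vecc(\mathbf 0,e_t)+2(u_t^\diamond)^\tp R\,\vecc(\mathbf 0,\tilde K^2(\theta_*^2)e_t)$ does not cancel realization-by-realization for general $Q,R$ (take any instance with $Q^{22}\neq\mathbf 0$ or $Q^{12}\neq\mathbf 0$); it vanishes only in conditional expectation given $(w^1,\text{seed})$, because $e_t$ is zero mean and independent of $x_t^\diamond$. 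So your route handles explicitly a term that the paper's argument passes over, and your conditional-Gaussianity treatment is the natural way to do it.

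What this buys and what it costs: the paper's route (granting its Lemma \ref{lem:expectation_equalities}) yields the bound for \emph{any} learner $\mathcal L$ satisfying Assumption \ref{assum:seed}, with $\tilde K$ depending only on system constants. Your route makes the conditional variance of $\sum_t\mathrm{cr}_t$ scale with $\sum_t\norm{x_t^\diamond}^2$, so you must additionally import (i) boundedness of the gains $K(\theta_t^1,\theta_*^2)$ produced by the learner and (ii) a high-probability $O(T)$ bound on $\sum_t\norm{x_t^\diamond}^2$. Neither follows from Assumption \ref{assum:seed}, nor directly from the learner's $\tilde O(\sqrt T)$ regret guarantee, since $Q$ is only positive semi-definite and small cost does not by itself force small state; these are properties of the specific \texttt{SARL} algorithms cited in Corollary \ref{cor:regret_bound_case}, which do establish them as part of their analyses. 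Consequently, your argument proves the corollary-level claim ($\tilde O(\sqrt T)$ regret with those learners), but as a proof of Theorem \ref{thm:regret_bound_case2} as literally stated --- an inequality holding for an arbitrary $\mathcal L$ under Assumption \ref{assum:seed}, with a learner-independent constant $\tilde K$ --- it is conditional on these extra hypotheses. That dependence is the one concrete gap you should flag; everything else in your decomposition and both concentration steps is sound and, on the cross-term point, more careful than the paper's own write-up.
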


\begin{algorithm}[tb]
   \caption{\texttt{AL-MARL}} 
   \label{alg:TS_decentralized}
\begin{algorithmic}
\State \textbf{Input}: \texttt{agent_ID}, learner $\mathcal{L}$, $x_0^1$, and $x_0^2$
  \State Initialize $\mathcal{L}$ and $\check x_0^2 = x_0^2$
   \For{$t=0,1,\ldots$}
\State Feed time $t$ and state $\vecc(x_t^1, \check x_t^2)$ to $\mathcal{L}$ and 
\State \hspace{2mm} get $\theta_t^1 = [A_t^1, B_t^1]$ and $\theta_t^2 = [A_t^2, B_t^2]$
 \State Compute $K^{\texttt{agent_ID}}(\theta_t^1, \theta_*^2)$ 
   \If{$\texttt{agent_ID}=1$}
   \State Execute $u_t^1= K^1(\theta_t^1, \theta_*^2) \vecc(x_t^1, \check x_t^2)$
   \Else
  \State Execute $u_t^2= K^2(\theta_t^1, \theta_*^2) \vecc(x_t^1, \check x_t^2)$
  \State \hspace{18mm} $+ \tilde K^2(\theta_*^2) (x_t^2 - \check x_t^2)$ 
   \EndIf
   \State  Observe new state $x_{t+1}^1$
   \State Compute $\check x_{t+1}^2 = A_*^2 \check x_t^2$
   \State \hspace{22mm}  $+ B_*^2 K^2(\theta_t^1, \theta_*^2) \vecc(x_t^1, \check x_t^2)  $
     \If{$\texttt{agent_ID}=2$}
     \State Observe new state $x_{t+1}^2$
     \EndIf
   \EndFor
\end{algorithmic}
\end{algorithm}

This result shows that under the policy of the \texttt{AL-MARL} algorithm, the regret for the \texttt{MARL2} problem is upper-bounded by the regret for the auxiliary \texttt{SARL} problem constructed in Section \ref{sec:centralized_LQ} under the policy of the \texttt{AL-SARL} algorithm plus a term bounded by $O(\sqrt{T})$. 

\begin{corollary}
\label{cor:regret_bound_case}
\texttt{AL-MARL} algorithm with the OFU-based \texttt{SARL} learner $\mathcal{L}$ of \citet{abbasi2011regret,ibrahimi2012efficient,faradonbeh2017regret, faradonbeh2019applications} or the TS-based \texttt{SARL} learner $\mathcal{L}$ of \citet{faradonbeh2017regret} achieves a $\tilde O(\sqrt{T})$ regret for the \texttt{MARL2} problem.
\end{corollary}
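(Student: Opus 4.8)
The plan is to derive the corollary as an immediate consequence of Theorem \ref{thm:regret_bound_case2} combined with the off-the-shelf single-agent regret guarantees cited in the excerpt; no new analysis of the \texttt{AL-MARL} algorithm is needed, and essentially all of the work is probabilistic bookkeeping. First I would invoke Theorem \ref{thm:regret_bound_case2}, which states that for any $\delta \in (0, 1/e)$, with probability at least $1 - \delta$,
\begin{align}
R(T, \texttt{AL-MARL}) \leq R^{\diamond}(T, \texttt{AL-SARL}) + \log(1/\delta)\, \tilde K \sqrt{T}.
\end{align}
This reduces the task to controlling the auxiliary single-agent regret $R^{\diamond}(T, \texttt{AL-SARL})$, for which the cited \texttt{SARL} results apply.

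Next I would verify that the auxiliary \texttt{SARL} problem of Section \ref{sec:centralized_LQ} falls within the scope of those results. The auxiliary dynamics \eqref{Model:system_LQ} are driven by the same matrices $A_*, B_*$ and weighted by the same cost matrices $Q, R$ as the original system, so Assumption \ref{assum:det_stb} directly supplies the stabilizability and detectability conditions required by the OFU-based learners of \citet{abbasi2011regret, ibrahimi2012efficient, faradonbeh2017regret, faradonbeh2019applications} and the TS-based learner of \citet{faradonbeh2017regret}. Invoking those results then yields $R^{\diamond}(T, \texttt{AL-SARL}) = \tilde O(\sqrt{T})$ with high probability.

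Finally I would combine the two high-probability events by a union bound. On their intersection both bounds hold simultaneously, so
\begin{align}
R(T, \texttt{AL-MARL}) \leq \tilde O(\sqrt{T}) + \log(1/\delta)\, \tilde K \sqrt{T}.
\end{align}
Taking $\delta$ polynomially small in $T$ (for example $\delta = 1/T$) turns $\log(1/\delta)$ into a logarithmic factor that is absorbed by the $\tilde O(\cdot)$ notation, leaving the right-hand side equal to $\tilde O(\sqrt{T})$. The step I expect to require the most care is precisely this probabilistic alignment: the cited single-agent bounds come with their own failure probabilities and regularity hypotheses, so one must confirm the auxiliary problem satisfies those hypotheses and that the two failure events can be controlled jointly without inflating the $\sqrt{T}$ rate. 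Once that is checked, the corollary follows directly from Theorem \ref{thm:regret_bound_case2}.
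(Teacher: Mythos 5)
Your proposal is correct and follows essentially the same route as the paper: the paper treats Corollary \ref{cor:regret_bound_case} as an immediate consequence of Theorem \ref{thm:regret_bound_case2} combined with the cited $\tilde O(\sqrt{T})$ single-agent guarantees, exactly as you do. Your additional bookkeeping (checking Assumption \ref{assum:det_stb} covers the auxiliary problem, union-bounding the two failure events, and taking $\delta$ polynomially small in $T$ so $\log(1/\delta)$ is absorbed into $\tilde O(\cdot)$) is a slightly more explicit rendering of the same argument.
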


\red{
\begin{remark}
The idea of constructing a centralized problem for \texttt{MARL} is similar in spirit to the centralized algorithm perspective adopted in \citet{dibangoye2018learning}.
However, we would like to emphasize that the auxiliary \texttt{SARL}  problem is different from the centralized oMDP in \citet{dibangoye2018learning}. The oMDP is a \textbf{deterministic }MDP with no observations of the belief state. Our single agent problem is inspired by the \textit{common information based coordinator}  developed in non-learning settings in \citet{nayyar2013decentralized} and \citet{asghari2018optimal}.
The difference from oMDP is reflected in the fact that the state evolution in the \texttt{SARL} is \textbf{stochastic} (see \eqref{Model:system_LQ}). 
As discussed in Remark \ref{rem:common_based_estimator}, the state of the auxiliary \texttt{SARL} can be interpreted as the common information based state estimate. In our \texttt{AL-MARL} algorithm, both agents use this randomly evolving, common information based state estimate to learn the unknown parameters in an identical manner. This removes the potential mis-coordination among agents due to difference in information and allows for efficient learning. 
\end{remark}
}

\subsection{Extension to \texttt{MARL} problems with more than 2 systems and 2 agents}
While the results of Sections \ref{sec:main_results_1} and \ref{sec:main_results_2} are for \texttt{MARL} problems with 2 systems and 2 agents, these results can be extended to \texttt{MARL} problems with an arbitrary number $N$ of agents and systems in the following sense.

\begin{lemma}
Consider a \texttt{MARL} problem with $N$  agents and systems ($N \geq 2$). Suppose there is a system $n$ and an agent $m$, $m \neq n$, such that system $n$ is unknown and there is no communication  from agent $n$ to agent $m$. Then, there is no algorithm that can achieve a lower-bound better than $\Omega(T)$ on the regret of all instances of this \texttt{MARL} problem.
\end{lemma}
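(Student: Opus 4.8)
The plan is to prove this by reducing the general $N$-agent problem to the two-agent impossibility results already established (Theorem~\ref{thm:negative_results_case1} and the two-agent lemma that follows it). Since the claim is a lower bound of the ``no algorithm works for all instances'' type, I have the freedom to exhibit a single hard family of instances; I will choose it so that all the difficulty is concentrated in the interaction between system $n$ and agent $m$, while every other system is rendered inconsequential.

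First I would construct the instance family. Fix the dynamics of all systems other than $n$ to be \emph{known} and stabilizable, and choose the cost matrices $Q$ and $R$ so that every cross-block vanishes except the $(n,m)$ and $(m,n)$ blocks of $Q$; with this choice the cost couples \emph{only} systems $n$ and $m$, exactly as in the two-agent construction, and the other $N-2$ systems contribute additively and can be controlled optimally in isolation. Letting $\theta_*^n$ range over the hard parameter family used in the proof of the two-agent lemma (with $\theta_*^m$ held fixed and known), this embeds the two-agent ``system $n$ unknown, system $m$ known, no link from $n$ to $m$'' problem inside the $N$-agent problem.

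The crux is to show that agent $m$'s information, and hence its action $u_t^m$, carries no information about $\theta_*^n$. I would define $S$ to be the set of agents from which a directed communication path reaches $m$ (including $m$ itself); the hypothesis that there is no communication from $n$ to $m$ means $n \notin S$. Then, by induction on $t$, I would argue that the joint process $\{x_{0:t}^k, u_{0:t-1}^k : k \in S\}$ is statistically independent of $\theta_*^n$ and of $x^n$. The inductive step uses three facts: the systems are dynamically decoupled (each $x_{t+1}^k$ depends only on $x_t^k$, $u_t^k$, and the independent noise $w_t^k$), the noises $w^k$ are mutually independent, and no message originating from system $n$ ever enters the set $S$. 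Consequently $h_t^m$ is a function of $\theta_*^n$-independent quantities, and I can place the instances on a common probability space so that agent $m$ emits the \emph{same} action sequence $u_{0:T-1}^m$ across the whole family, regardless of $\theta_*^n$.

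Finally I would close the argument exactly as in the two-agent case: because agent $m$'s actions cannot adapt to $\theta_*^n$ while the coupled optimal policy requires $u_t^m$ to track a $\theta_*^n$-dependent estimate of $x_t^n$ (as dictated by the optimal strategy form in Lemma~\ref{lm:opt_strategies}), there exist two members of the family for which the optimal responses of agent $m$ differ by a fixed amount; agent $m$ therefore mismatches on at least one of them, incurring a per-step excess cost bounded below by a positive constant and hence cumulative regret $\Omega(T)$. Since these are instances of the $N$-agent problem, no algorithm can achieve sub-linear regret on all of them. \textbf{The main obstacle} I anticipate is the information-independence step: rigorously ruling out \emph{indirect} leakage of $x^n$ to agent $m$ through intermediary agents (an agent receiving $x^n$ could in principle encode it into its own state and relay it), which is precisely why the reachable-set $S$ and the inductive coupling argument are needed. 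Once that independence is in hand, everything else is a direct reduction to the already-proven two-agent lower bound.
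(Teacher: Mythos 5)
Your proposal is correct under your stated reading of the hypothesis, and its core is the same as the paper's: the paper's entire proof of this lemma is the remark that it ``follows from the proof of Theorem~\ref{thm:negative_results_case1},'' i.e., one embeds the two-agent hard family (system $n$ in the role of the unknown, uncontrolled scalar system $x_{t+1}^n = a_*^n x_t^n$, agent $m$ in the role of the tracking agent with $x_{t+1}^m = u_t^m$, cost coupling only the pair $(n,m)$, all other systems rendered irrelevant) and invokes the supremum-over-$a_*^n$ argument of Appendix~\ref{app:proof_thm1}. What you add beyond the paper is the reachable-set independence induction; the paper is silent on why agent $m$'s information stays uninformative about $\theta_*^n$ in the presence of other agents, so your treatment is genuinely more careful on the one point where the $N$-agent case differs from the two-agent case.

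Two caveats. First, your step ``no communication from agent $n$ to agent $m$ means $n \notin S$'' interprets the hypothesis as the absence of a directed \emph{path}; the paper's usage of ``communication'' (see the statement of Theorem~\ref{lem:regret_bound_extension}) refers to direct links, and under that reading an indirect path $n \to k \to m$ may exist, in which case your independence induction fails as stated (agent $k$ can encode $x^n$ into $x^k$ through $u^k$). The gap closes because the instance is yours to design: choose every intermediary system $k$ to satisfy $A_*^k = B_*^k = \mathbf{0}$ with no noise and $x_0^k = \mathbf{0}$, so that $x_t^k \equiv \mathbf{0}$ carries no information and relaying is impossible regardless of the communication pattern; then agent $m$'s history is independent of $a_*^n$ even when paths exist, and the rest of your argument goes through verbatim. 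Second, two small inaccuracies in your description of the embedding: faithfully replicating the paper's construction requires the $(m,n)$ cross-block of $R$ as well (the term $(u_t^m - 0.5\, u_t^n)^2$), not only of $Q$, since that is what makes the optimal average cost exactly zero; and the conclusion is not obtained from a per-step excess bounded below by a positive constant --- for any fixed pair $\{0,\alpha\}$ the per-step gap $\alpha^{2t}/4$ decays geometrically, and $\Omega(T)$ emerges only from the supremum over $\alpha$, i.e., the limit $\alpha \to 1^-$ in the paper's final display. Neither point affects the validity of your reduction, since you defer those steps to the already-proven two-agent bound.
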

The above lemma follows from the proof of Theorem \ref{thm:negative_results_case1}.

\begin{theorem}
\label{lem:regret_bound_extension}
Consider a \texttt{MARL} problem with  $N$  agents and systems ($N \geq 2$) where the first $N_1$ systems are unknown and the rest $N - N_1$ systems are known. Further, for any $1 \leq i \leq N_1$, there is  communication  from agent $i$ to all other agents and for any $N_1 +1 \leq j \leq N$, there is no communication from agent $j$ to any other agent. 
Then, there is a learning algorithm that achieves a $\tilde O(\sqrt{T})$ regret for this \texttt{MARL} problem.
\end{theorem}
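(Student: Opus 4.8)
The plan is to reduce the $N$-system problem to the two-agent \texttt{MARL2} setting by a grouping argument and then invoke Theorem~\ref{thm:regret_bound_case2} together with Corollary~\ref{cor:regret_bound_case}. First I would aggregate the $N_1$ unknown systems into a single composite \emph{system}~1 with state $\vecc(x_t^1,\ldots,x_t^{N_1})$, block-diagonal parameters $\theta_*^{1:N_1}$, and composite \emph{agent}~1, and aggregate the $N-N_1$ known systems into a composite \emph{system}~2 with state $\vecc(x_t^{N_1+1},\ldots,x_t^N)$ and composite \emph{agent}~2. Since the systems are dynamically decoupled and coupled only through the quadratic cost, the aggregated matrices $A_*$, $B_*$ remain block diagonal and $Q$, $R$ can be re-partitioned conformally, so the aggregated instance is exactly of the form treated in Section~\ref{sec:main_results_2}. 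The communication hypotheses translate directly: every unknown-system agent $i\le N_1$ broadcasts to all others, so composite agent~1 has full access to its own composite state and broadcasts it ($\gamma^1=1$), while no known-system agent communicates, so composite agent~2 stays silent ($\gamma^2=0$). Hence the common information shared by all agents is precisely $\{x_{0:t}^{1},\ldots,x_{0:t}^{N_1}\}$, matching the common-information structure of \texttt{MARL2}.

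The auxiliary \texttt{SARL} problem for the aggregated instance has state $x_t^{\diamond}=\vecc(x_t^{1},\ldots,x_t^{N_1},\check x_t^{N_1+1},\ldots,\check x_t^{N})$ driven by noise $\vecc(w_t^{1},\ldots,w_t^{N_1})$ only in the unknown-system blocks, which is the natural generalization of \eqref{Model:system_LQ}; as in Remark~\ref{rem:common_based_estimator}, each $\check x_t^{j}$ is the common-information-based estimate of $x_t^{j}$ propagated by the deterministic prediction $\check x_{t+1}^{j}=A_*^{j}\check x_t^{j}+B_*^{j}K^{j}(\theta_t^{1:N_1},\theta_*^{N_1+1:N})\,x_t^{\diamond}$. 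The essential structural ingredient is the $N$-system analogue of Lemma~\ref{lm:opt_strategies}: because the estimation error $x_t^{j}-\hat x_t^{j}$ of each known system evolves autonomously (its own noise drives only its own state), the optimal policy of composite agent~2 decomposes into per-agent controls $u_t^{j}=K^{j}(\theta_*^{1:N})\vecc(\hat x_t^{1:N})+\tilde K^{j}(\theta_*^{j})(x_t^{j}-\hat x_t^{j})$ in which the private-innovation gain $\tilde K^{j}(\theta_*^{j})$ is block diagonal and depends only on the known parameters of system $j$. Consequently $\tilde K^{j}(\theta_*^{j})$ is computable offline, and every agent can form the entire common estimate $\vecc(\hat x_t^{1:N})$ from the shared information, so the aggregated \texttt{AL-MARL} policy is implementable in a genuinely decentralized fashion by the $N$ physical agents even though composite agent~2 is a fiction.

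With the aggregated instance identified with a \texttt{MARL2} problem, and Assumption~\ref{assum:seed} guaranteeing that every agent's learner returns identical estimates $\theta_t^{1:N_1}$ (hence identical gains and identical common estimates), Theorem~\ref{thm:regret_bound_case2} applies verbatim to give, with probability at least $1-\delta$,
\begin{align}
R(T,\texttt{AL-MARL})\leq R^{\diamond}(T,\texttt{AL-SARL})+\log\!\Big(\tfrac{1}{\delta}\Big)\tilde K\sqrt{T}.
\end{align}
Finally, choosing the \texttt{SARL} learner $\mathcal{L}$ to be one of the $\tilde O(\sqrt{T})$-regret algorithms as in Corollary~\ref{cor:regret_bound_case} bounds $R^{\diamond}(T,\texttt{AL-SARL})$ by $\tilde O(\sqrt{T})$, yielding the claimed $\tilde O(\sqrt{T})$ regret.

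I expect the main obstacle to be the second paragraph: establishing that the composite agent~2 policy genuinely decomposes across the individual known-system agents. The literal two-agent reduction endows composite agent~2 with the full stacked state $\vecc(x_t^{N_1+1},\ldots,x_t^N)$, whereas the actual known-system agents are decentralized among themselves and each observes only its own state. Making the reduction rigorous therefore requires the block-diagonal correction structure of the $N$-system optimal controller (the generalization of \citet{ouyang2018optimal} underlying Lemma~\ref{lm:opt_strategies}), which ensures that agent $j$'s private innovation $x_t^{j}-\hat x_t^{j}$ affects only $u_t^{j}$; without this property the aggregated policy would demand information the physical agents do not possess and the reduction would break down.
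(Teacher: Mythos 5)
Your reduction has a genuine gap, and it sits exactly where you flagged it: the claim that the composite agent~2 policy ``decomposes'' with block-diagonal innovation gains $\tilde K^{j}(\theta_*^{j})$. That block-diagonal structure is a property of the optimal solution of the \emph{$N$-agent decentralized} problem (the $N$-system generalization of \citet{ouyang2018optimal}, stated in the paper as Lemma~\ref{lm:opt_strategies_ext}, where each $\tilde K^{j}$ solves a Riccati equation involving only the diagonal blocks $Q^{jj},R^{jj}$). It is \emph{not} a property of the aggregated two-agent problem you construct. In that aggregated problem, composite agent~2 genuinely observes the whole stacked state $\vecc(x_t^{N_1+1},\ldots,x_t^{N})$, and Lemma~\ref{lm:opt_strategies_appendix} applied to it yields a correction gain $\tilde K^{2}_{\mathrm{agg}}$ solving a Riccati equation in the \emph{composite} blocks $Q^{22}_{\mathrm{comp}},R^{22}_{\mathrm{comp}}$, which contain the cross-coupling terms $Q^{jj'},R^{jj'}$ between known systems. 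Whenever such coupling is present, $\tilde K^{2}_{\mathrm{agg}}$ is not block diagonal, the aggregated benchmark satisfies $J_{\mathrm{agg}}<J(\theta_*^{1:N})$ (composite agent~2 has strictly more information than any physical agent), and the aggregated \texttt{AL-MARL} policy requires each known agent to know the other known agents' innovations, so it is not implementable. Conversely, the implementable policy with per-system gains $\tilde K^{j}$ is \emph{not} the \texttt{AL-MARL} policy of the aggregated instance, so Theorem~\ref{thm:regret_bound_case2} does not apply to it ``verbatim''; worse, measured against the aggregated benchmark $J_{\mathrm{agg}}$ that Theorem~\ref{thm:regret_bound_case2} would supply, that policy carries a constant per-step excess $\sum_{j}\tr(D^{j}\Sigma^{j})-\tr(D_{\mathrm{agg}}\Sigma_{\mathrm{agg}})>0$, i.e., linear regret relative to the wrong benchmark.

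What is actually needed---and what the paper does in Appendix~\ref{proof:lem:regret_bound_extension}---is to re-derive, not invoke, the two key lemmas behind Theorem~\ref{thm:regret_bound_case2} in the $N$-agent setting: (i) the benchmark decomposition $J(\theta_*^{1:N}) = J^{\diamond}(\theta_*^{1:N}) + \sum_{j>N_1}\tr(D^{j}\Sigma^{j})$ (Lemma~\ref{lem:connection_optimal_cost_ext}), which requires the $N$-agent optimal cost formula with per-system Riccati solutions, and (ii) the cost decomposition $c(x_t,u_t) = c(x_t^{\diamond},u_t^{\diamond}) + \sum_{j>N_1}(e_t^{j})^{\tp}D^{j}e_t^{j}$ (Lemma~\ref{lem:expectation_equalities_ext}), followed by applying the concentration bound of Lemma~\ref{thm:prob_regret_appendix} separately to each independent error process $e_t^{j}$. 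Your construction of the auxiliary \texttt{SARL} problem and of the algorithm itself is the right one (it coincides with the paper's \texttt{AL-MARL2}), and your argument does go through in the special case where $Q$ and $R$ have no cross-coupling among the known systems, since then the composite and per-system Riccati solutions coincide; but in general the grouping argument cannot shortcut the re-derivation, because the object Theorem~\ref{thm:regret_bound_case2} analyzes is tied to the two-agent correction structure, which differs from the implementable $N$-agent one.
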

The proof of above theorem requires constructing an auxiliary \texttt{SARL} problem and following the same steps as in the proof of Theorem \ref{thm:regret_bound_case2}.



\section{Key Steps in the Proof of Theorem \ref{thm:regret_bound_case2}}
\label{sec:proof_theorem_2}

\subsection*{Step 1: Showing the connection between the auxiliary \texttt{SARL} problem and the \texttt{MARL2} problem}

First, we present the following lemma that connects the optimal infinite horizon average cost $J^{\diamond}(\theta_*^{1,2})$ of the auxiliary \texttt{SARL} problem when $\theta_*^{1,2}$ are known (that is, the auxiliary single-agent LQ problem of Section \ref{sec:centralized_LQ}) and the optimal infinite horizon average cost $J(\theta_*^{1,2})$ of the \texttt{MARL2} problem when $\theta_*^{1,2}$ are known  (that is, the multi-agent LQ problem of Section \ref{sec:optimal_dec_LQ}).
\begin{lemma}
\label{lem:connection_optimal_cost}
$J(\theta_*^{1,2}) = J^{\diamond}(\theta_*^{1,2}) + \tr(D \Sigma),$ where we have defined $D:= Q^{22} + (\tilde K^2(\theta_*^2))^{\tp} R^{22} \tilde K^2(\theta_*^2)$ and $\Sigma$ is as defined in Lemma \ref{lem:Sigma_convergence} in the appendix.
\end{lemma}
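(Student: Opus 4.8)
The plan is to exploit the explicit optimal multi-agent policy of Lemma~\ref{lm:opt_strategies} and to show that, along the optimal trajectory, the overall state $x_t$ splits additively into the auxiliary state $x_t^{\diamond}$ plus an estimation error that is statistically independent of $x_t^{\diamond}$. First I would specialize Lemma~\ref{lm:opt_strategies} to the \texttt{MARL2} pattern ($\gamma^1=1,\gamma^2=0$). Since agent~1's state is shared, $\hat x_t^1 = x_t^1$, so the correction term $\tilde K^1(x_t^1-\hat x_t^1)$ in $u_t^1$ vanishes and $u_t^1 = K^1(\theta_*^{1,2})\vecc(x_t^1,\hat x_t^2)$, while $\hat x_t^2$ obeys the deterministic recursion in \eqref{eq:estimator_t_infinite}. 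Setting $x_t^{\diamond} := \vecc(x_t^1,\hat x_t^2)$ and $u_t^{\diamond} := K(\theta_*^{1,2})x_t^{\diamond}$, a direct substitution shows that $x_t^{\diamond}$ satisfies exactly the auxiliary dynamics \eqref{Model:system_LQ}, with noise entering only the first block; this identifies $x_t^{\diamond}$ as the optimally controlled state of the auxiliary \texttt{SARL} problem, so the time-average of $c(x_t^{\diamond},u_t^{\diamond})$ equals $J^{\diamond}(\theta_*^{1,2})$ by optimality of the LQR gain $K(\theta_*^{1,2})$ for that problem.

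Next I would analyze the estimation error $e_t^2 := x_t^2 - \hat x_t^2$. Subtracting the estimator recursion from the true dynamics of system~2 and using $u_t^2 = K^2\vecc(x_t^1,\hat x_t^2) + \tilde K^2(\theta_*^2) e_t^2$ gives the closed-form recursion $e_{t+1}^2 = (A_*^2 + B_*^2 \tilde K^2(\theta_*^2)) e_t^2 + w_t^2$ with $e_0^2 = \mathbf{0}$. Two facts follow: $e_t^2$ is a zero-mean process driven only by $w^2$, whereas $x_t^{\diamond}$ is driven only by $w^1$; since $w^1$ and $w^2$ are independent, $e_t^2$ is independent of $x_t^{\diamond}$ (and of $u_t^{\diamond}$). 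I would also record that $\tilde K^2(\theta_*^2)$ is stabilizing for system~2, so $\ee[e_t^2 (e_t^2)^{\tp}] \to \Sigma$, the solution of the Lyapunov equation of Lemma~\ref{lem:Sigma_convergence}.

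With the decomposition $x_t = x_t^{\diamond} + \vecc(\mathbf{0},e_t^2)$ and $u_t = u_t^{\diamond} + \vecc(\mathbf{0},\tilde K^2(\theta_*^2) e_t^2)$ in hand, I would expand the quadratic cost:
\begin{align}
c(x_t,u_t) - c(x_t^{\diamond},u_t^{\diamond}) &= 2(x_t^{\diamond})^{\tp}Q\,\vecc(\mathbf{0},e_t^2) + 2(u_t^{\diamond})^{\tp}R\,\vecc(\mathbf{0},\tilde K^2(\theta_*^2) e_t^2) \notag \\
&\quad + (e_t^2)^{\tp}\bigl(Q^{22} + (\tilde K^2(\theta_*^2))^{\tp}R^{22}\tilde K^2(\theta_*^2)\bigr)e_t^2. \notag
\end{align}
Taking expectations, the two cross terms vanish because $\vecc(\mathbf{0},e_t^2)$ is zero-mean and independent of $x_t^{\diamond}$ and $u_t^{\diamond}$, leaving $\ee[c(x_t,u_t)] - \ee[c(x_t^{\diamond},u_t^{\diamond})] = \tr\!\bigl(D\,\ee[e_t^2(e_t^2)^{\tp}]\bigr)$ with $D$ as defined in the statement. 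Averaging over $t=0,\dots,T-1$, letting $T\to\infty$, and using $\ee[e_t^2(e_t^2)^{\tp}]\to\Sigma$ yields $J(\theta_*^{1,2}) = J^{\diamond}(\theta_*^{1,2}) + \tr(D\Sigma)$.

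The main obstacle I anticipate is the bookkeeping needed to rigorously justify that the two average costs are taken along matched trajectories and converge: one must confirm (citing \citet{ouyang2018optimal}) that the same gain $K(\theta_*^{1,2})$ appearing in Lemma~\ref{lm:opt_strategies} is precisely the optimal LQR gain for the auxiliary problem, so that the $x_t^{\diamond}$-trajectory achieves $J^{\diamond}(\theta_*^{1,2})$, and that both the error process and $x_t^{\diamond}$ reach stationarity so the limsup defining the average costs are genuine limits. The independence and zero-mean argument that cancels the cross terms is the conceptual crux; once it is established the remainder is routine.
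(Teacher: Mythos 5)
Your proposal is correct and follows essentially the same route as the paper's proof: both use the optimal MARL policy of Lemma~\ref{lm:opt_strategies} to decompose $x_t = \vecc(x_t^1,\hat x_t^2) + \vecc(\mathbf{0},e_t)$, identify $\vecc(x_t^1,\hat x_t^2)$ with the optimally controlled auxiliary state $x_t^{\diamond}$ (same gain $K(\theta_*^{1,2})$, per Lemma~\ref{lm:opt_strategies_centralized_appendix}), derive the error recursion $e_{t+1} = (A_*^2+B_*^2\tilde K^2(\theta_*^2))e_t + w_t^2$, cancel the cross terms in the quadratic cost via zero-mean and independence of the $w^1$- and $w^2$-driven processes, and pass to the limit using $\Sigma_t \to \Sigma$ from Lemma~\ref{lem:Sigma_convergence}. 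Your version is, if anything, more explicit about the cross-term cancellation, which the paper leaves implicit in its cost-splitting step.
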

%
%
Next, we provide the following lemma that shows the connection between the cost $c(x_t,u_t)$ in the \texttt{MARL2} problem under the policy of the \texttt{AL-MARL} algorithm and the cost $
c(x_t^{\diamond}, u_t^{\diamond})$ in the auxiliary \texttt{SARL} problem under the policy of the \texttt{AL-SARL} algorithm.
\begin{lemma}
\label{lem:expectation_equalities}
$c(x_t,u_t) \vert_{\texttt{AL-MARL}} = c(x_t^{\diamond}, u_t^{\diamond})\vert_{\texttt{AL-SARL}} + e_t^{\tp} D e_t,$ where $e_t = x_t^2 - \check x_t^2$ and $D$ is as defined in Lemma \ref{lem:connection_optimal_cost}.
\end{lemma}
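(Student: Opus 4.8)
The plan is to establish the pointwise cost identity in Lemma 5 by directly comparing the two systems' states and controls under their respective algorithms, and showing they differ by exactly the estimation error $e_t = x_t^2 - \check x_t^2$. The first step is to exhibit the precise relationship between the \texttt{MARL2} trajectory $(x_t, u_t)$ under \texttt{AL-MARL} and the auxiliary \texttt{SARL} trajectory $(x_t^{\diamond}, u_t^{\diamond})$ under \texttt{AL-SARL}. By Assumption \ref{assum:seed}, both agents feed the \emph{same} input $\vecc(x_t^1, \check x_t^2)$ to their learners and therefore obtain identical estimates $\theta_t^{1,2}$, so the gain matrices $K^1(\theta_t^1,\theta_*^2)$, $K^2(\theta_t^1,\theta_*^2)$ are common to both agents. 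Since the auxiliary \texttt{SARL} state is exactly $x_t^{\diamond} = \vecc(x_t^1, \check x_t^2)$ and its control is $u_t^{\diamond} = K(\theta_t^{1,2}) x_t^{\diamond}$, I would verify by induction on $t$ that $x_t^{\diamond}$ coincides with $\vecc(x_t^1, \check x_t^2)$ along the coupled trajectories. The base case is $x_0^{\diamond} = x_0 = \vecc(x_0^1, x_0^2) = \vecc(x_0^1, \check x_0^2)$, and the inductive step follows because the first block of \eqref{Model:system_LQ} reproduces the true dynamics of $x_{t+1}^1$ (same noise $w_t^1$), while the second block reproduces the deterministic recursion for $\check x_{t+1}^2$ in the \texttt{AL-MARL} algorithm (the $\mathbf{0}$ noise entry matching the noise-free proxy update).

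With this identification in hand, the second step is to decompose each of the two quadratic terms in $c(x_t,u_t)$. Writing $x_t = \vecc(x_t^1, x_t^2) = x_t^{\diamond} + \vecc(\mathbf{0}, e_t)$ and using $Q = [Q^{\cdot,\cdot}]_{1,2}$, the state-cost splits as $x_t^{\tp} Q x_t = (x_t^{\diamond})^{\tp} Q x_t^{\diamond} + 2 (x_t^{\diamond})^{\tp} Q \vecc(\mathbf{0}, e_t) + e_t^{\tp} Q^{22} e_t$. For the control cost, I would use that agent~1's action equals $u_t^{\diamond,1} = K^1(\theta_t^1,\theta_*^2)\vecc(x_t^1,\check x_t^2)$ exactly, whereas agent~2's action is $u_t^2 = u_t^{\diamond,2} + \tilde K^2(\theta_*^2) e_t$ by the \texttt{AL-MARL} execution line. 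Hence $u_t = u_t^{\diamond} + \vecc(\mathbf{0}, \tilde K^2(\theta_*^2) e_t)$, and expanding $u_t^{\tp} R u_t$ with $R = [R^{\cdot,\cdot}]_{1,2}$ produces $(u_t^{\diamond})^{\tp} R u_t^{\diamond}$, cross terms, and the pure term $e_t^{\tp}(\tilde K^2(\theta_*^2))^{\tp} R^{22} \tilde K^2(\theta_*^2) e_t$. Summing, the two pure quadratic remainders combine into precisely $e_t^{\tp} D e_t$ with $D = Q^{22} + (\tilde K^2(\theta_*^2))^{\tp} R^{22}\tilde K^2(\theta_*^2)$ as defined in Lemma \ref{lem:connection_optimal_cost}.

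The main obstacle is showing that all the \emph{cross terms} vanish. These are $2(x_t^{\diamond})^{\tp} Q \vecc(\mathbf{0},e_t)$ from the state cost and the analogous $2(u_t^{\diamond})^{\tp} R \vecc(\mathbf{0}, \tilde K^2(\theta_*^2) e_t)$ from the control cost. Unlike in an expected-cost argument—where one would invoke orthogonality of the estimation error $e_t$ to the common-information estimate and take conditional expectations to kill cross terms—here the statement is a \emph{pointwise} (sample-path) identity, so the cross terms cannot be discarded by an expectation. I therefore expect that the correct reading of Lemma \ref{lem:expectation_equalities} is that it holds in expectation, or that the cross terms are themselves absorbed; the delicate point is to confirm which quadratic forms actually appear. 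Concretely, I would track whether the auxiliary single-agent optimal gain $K(\theta_*^{1,2})$ and the decentralized gains $K^{1,2}(\theta_*^{1,2})$, $\tilde K^2(\theta_*^2)$ satisfy a structural orthogonality (stemming from the separation/certainty-equivalence structure of Lemma \ref{lm:opt_strategies}) that forces $e_t$ to enter only through $e_t^{\tp} D e_t$. Resolving this compatibility between the algorithmic gains and the block structure of $Q,R$—so that the first block of $x_t^{\diamond}$ and the error $e_t$ decouple in the quadratic forms—is the crux; once it is settled, the identity follows by assembling the remainders as above.
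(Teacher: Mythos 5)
Your Steps 1 and 2 are precisely the paper's own proof: the paper couples the two trajectories by showing (its Eqs.\ \eqref{eq:X_diamond_dynamics_2}--\eqref{eq:equality_Xs_2}) that under Assumption \ref{assum:seed} the identical learner outputs and the shared noise $\vecc(w_t^1,\mathbf{0})$ force $x_t^{\diamond} = \vecc(x_t^1,\check x_t^2) =: \bar x_t$ for all $t$, and then it substitutes $x_t = \bar x_t + \vecc(\mathbf{0},e_t)$ and $u_t = K(\theta_t^1,\theta_*^2)\bar x_t + \vecc(\mathbf{0},\tilde K^2(\theta_*^2)e_t)$ into the quadratic cost, exactly as you do. So up to the expansion of the quadratics you and the paper coincide.

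The difference is what happens next, and here your flagged ``crux'' is a real issue rather than a step you failed to see: the cross terms do \emph{not} vanish pointwise, and the paper's proof silently drops them (its second equality is justified only ``because of'' the two substitution identities). Writing $K_t^1, K_t^2$ for the row blocks of $K(\theta_t^1,\theta_*^2)$ and $\tilde K^2$ for $\tilde K^2(\theta_*^2)$, the exact expansion gives
\begin{align}
c(x_t,u_t)\vert_{\texttt{AL-MARL}} &= c(x_t^{\diamond},u_t^{\diamond})\vert_{\texttt{AL-SARL}} + e_t^{\tp} D e_t + \mathrm{CT}_t, \notag \\
\mathrm{CT}_t &:= 2\left[(x_t^1)^{\tp} Q^{12} e_t + (\check x_t^2)^{\tp} Q^{22} e_t\right] + 2\left[(K_t^1 \bar x_t)^{\tp} R^{12} \tilde K^2 e_t + (K_t^2 \bar x_t)^{\tp} R^{22} \tilde K^2 e_t\right]. \notag
\end{align}
There is no structural orthogonality of the gains that kills $\mathrm{CT}_t$: the orthogonality is purely probabilistic. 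Since $e_{t+1} = C e_t + w_t^2$ with $e_0 = \mathbf{0}$, the error $e_t$ is a function of $w_{0:t-1}^2$ alone, while $\bar x_t$ (and the learner outputs) are functions of $w_{0:t-1}^1$ and the learner's internal randomness alone; hence $e_t$ is independent of $\bar x_t$, zero-mean, and $\ee[\mathrm{CT}_t]=0$ --- but $\mathrm{CT}_t \neq 0$ almost surely whenever the coefficient vector multiplying $e_t$ is nonzero (e.g.\ already $Q^{12}=\mathbf{0}$, $Q^{22}=\mathbf{I}$ gives the term $2(\check x_t^2)^{\tp}e_t$). This is exactly why the analogous substitution in the proof of Lemma \ref{lem:connection_optimal_cost}, which is stated under an expectation, is valid, while the sample-path identity claimed in Lemma \ref{lem:expectation_equalities} is false as written: it holds in expectation, or pointwise only with $\mathrm{CT}_t$ retained. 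Consequently the chain \eqref{eq:regret_inequality_case2} in the proof of Theorem \ref{thm:regret_bound_case2} inherits the extra term $\sum_{t<T}\mathrm{CT}_t$; the theorem can plausibly be repaired, since $\sum_{t<T}\mathrm{CT}_t$ is, conditionally on the $w^1$-randomness, a zero-mean Gaussian whose variance is controlled by $\sum_{t<T}\norm{\bar x_t}_2^2$, so a concentration argument in the spirit of Lemma \ref{thm:prob_regret} would bound it at the $O(\sqrt{T})$ scale --- but this needs high-probability control of $\norm{x_t^{\diamond}}_2$ (i.e.\ of the \texttt{SARL} algorithm's state), an ingredient supplied neither by your attempt nor by the paper.
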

%
%
\subsection*{Step 2: Using the \texttt{SARL} problem to bound the regret of the \texttt{MARL2} problem}
In this step, we use the connection between the auxiliary \texttt{SARL} problem  and our \texttt{MARL2} problem, which was established in Step 1, to prove Theorem \ref{thm:regret_bound_case2}. Note that from the definition of the regret in the \texttt{MARL} problem given by \eqref{eq:regret_decentralized}, we have,
\begin{align}
&R(T, \texttt{AL-MARL}) =  \sum_{t=0}^{T-1} \left[ c(x_t,u_t) \vert_{\texttt{AL-MARL}} - J(\theta_*^{1,2})   \right] \notag \\
& = \sum_{t=0}^{T-1}  \left[c(x_t^{\diamond}, u_t^{\diamond}) \vert_{\texttt{AL-SARL}} - J^{\diamond}(\theta_*^{1,2}) \right]  
\notag \\
&+  \sum_{t=0}^{T-1} \left[ e_t^{\tp} D e_t - \tr(D \Sigma) \right] 
\notag \\
&  \leq R^{\diamond}(T, \texttt{AL-SARL}) +\log(\frac{1}{\delta}) \tilde K \sqrt{T},
 \label{eq:regret_inequality_case2}
\end{align} 
where the second equality is correct because of  Lemma \ref{lem:connection_optimal_cost} and Lemma \ref{lem:expectation_equalities}. Further, the last inequality is correct because of the definition of the regret in the the \texttt{SARL} problem given by \eqref{eq:regret_centralized} and the fact that $\sum_{t=0}^{T-1} \left[ e_t^{\tp} D e_t - \tr(D \Sigma) \right] $ is bounded by $\log(\frac{1}{\delta}) \tilde K \sqrt{T}$ from Lemma \ref{thm:prob_regret} in the appendix. 

\section{Experiments}
\label{sec:experiments_main}
In this section, we illustrate the performance of the \texttt{AL-MARL} algorithm through numerical experiments. Our proposed algorithm requires a \texttt{SARL} learner. As the TS-based algorithm of \citet{faradonbeh2017regret} achieves a $\tilde O(\sqrt{T})$ regret for a \texttt{SARL} problem, we use the \texttt{SARL} learner of this algorithm (The details for this \texttt{SARL} learner are presented in Appendix \ref{sec:details_SARL_learner}).
\begin{figure}
 \begin{center}
\includegraphics[width=8cm,height=6cm]{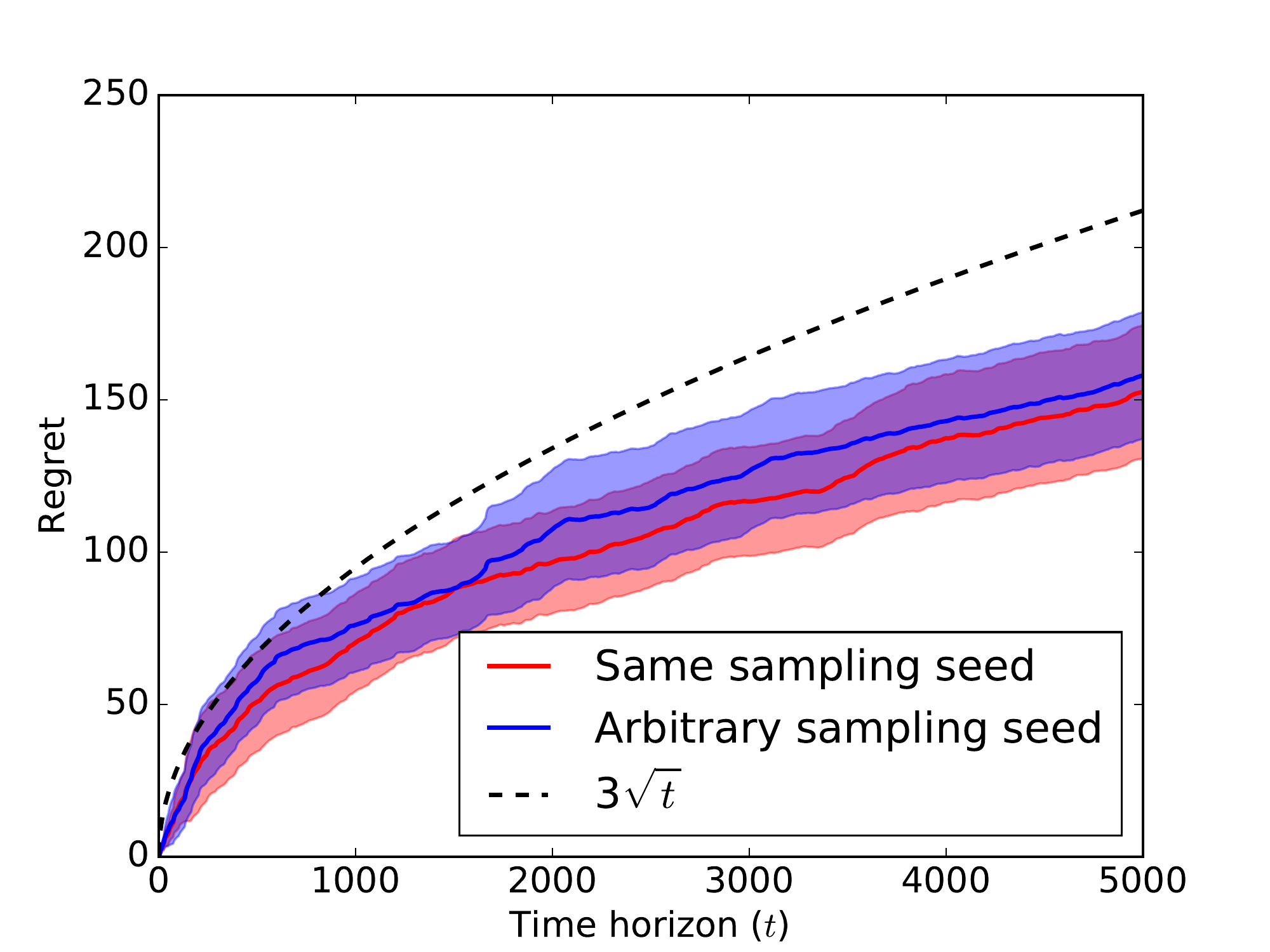}
\end{center}
\caption{\texttt{AL-MARL} algorithm with the \texttt{SARL} learner of \citet{faradonbeh2017regret}}
\label{fig:MARL_1}
\end{figure}

We consider an instance of the \texttt{MARL2} problem (See Appendix \ref{sec:experiments_appendix} for the details). The theoretical result of Theorem \ref{thm:regret_bound_case2} holds when Assumption \ref{assum:seed} is true. Since we use the TS-based learner of \citet{faradonbeh2017regret}, this assumption can be satisfied by setting the same sampling seed between the agents. Here, we consider both cases of same sampling seed and arbitrary sampling seed for the experiments. We ran 100 simulations and show the mean of regret with the 95\% confidence interval for each scenario.

As it can be seen from Figure \ref{fig:MARL_1}, for both of theses cases, our proposed algorithm with the TS-based learner $\mathcal{L}$ of \citet{faradonbeh2017regret} achieves a $\tilde O(\sqrt{T})$ regret for our \texttt{MARL2} problem, which matches the theoretical results of Corollary \ref{cor:regret_bound_case}.

\section{Conclusion}

\red{
In this paper, we tackled the challenging problem of regret analysis in Multi-Agent Reinforcement Learning (MARL). We attempted to solve this challenge in the context of online decentralized learning in multi-agent linear-quadratic (LQ) dynamical systems. First, we showed that if a system is unknown, then all the agents should receive information about the state of this system; otherwise, there is no learning policy that can guarantee sub-linear regret for all instances of the decentralized MARL problem. Further, when a system is unknown but there is one-directional communication from the agent controlling the unknown system to the other agents, we proposed a MARL algorithm with regret bounded by $\tilde{O}(\sqrt{T})$.  
}

\red{
The MARL algorithm is based on the construction of an auxiliary single-agent LQ problem.
The auxiliary single-agent problem serves as an implicit coordination mechanism among the learning agents. The state of the auxiliary SARL can be interpreted as an estimate of the state of the overall system that each agent computes based on the common information among them. While there is a strong connection between the MARL and auxiliary SARL problems, the MARL problem is not reduced to a SARL problem. In particular, Lemma \ref{lem:expectation_equalities} shows that the costs of the two problems actually differ by a term that depends on the random process $e_t$, which is dynamically controlled by the MARL algorithm. Therefore, the auxiliary SARL problem is not equivalent to the MARL problem. Nevertheless, the proposed MARL algorithm can bound the additional regret due to the process $e_t$ and achieve the same regret order as a SARL algorithm.
}

\red{
The use of the common state estimate plays a key role in the MARL algorithm. The current theoretical analysis uses this common state estimate along with some properties of LQ structure (e.g. certainty equivalence which connects estimates to optimal control \citep{kumar2015stochastic}) to quantify the regret bound. However, certainty equivalence is often used in general systems with continuous state and action spaces as a heuristic with some good empirical performance. This suggests that our algorithm combined with linear approximation of dynamics could potentially be applied to non-LQ systems as a heuristic. That is, each agent constructs an auxiliary SARL with the common estimate as the state, solves this SARL problem heuristically using approximate linear dynamics and/or certainty equivalence, and then modifies the SARL outputs according to the agent's private information.
}

\newpage
\newpage
\bibliography{references}

\newpage
\onecolumn

\appendix

\begin{Large}
\textbf{\begin{center}
Regret Bounds for Decentralized Learning in Cooperative Multi-Agent Dynamical Systems
\\
(Supplementary File)
\end{center}}
\end{Large}

\textbf{Outline}. The supplementary material of this paper is organized as follows.
\begin{itemize}
\item Appendix \ref{app:notation} presents the notation which is used throughout this Supplementary File. 

\item Appendix \ref{app:prelim} presents a set of preliminary results, which are useful in proving the main results of this paper.

\item Appendix \ref{app:proof_thm1} provides the proof of Theorem \ref{thm:negative_results_case1}.
\item Appendix \ref{sec:proof_theorem_2} provides the proof of Theorem \ref{thm:regret_bound_case2}.

\item Appendix \ref{proof_lem:Sigma_convergence} provides the proof of Lemma \ref{lem:Sigma_convergence}. Note that this lemma has been stated in Appendix \ref{sec:proof_theorem_2} and is required for the proof of Theorem \ref{thm:regret_bound_case2}.

\item Appendix \ref{proof_lemma:prob_regret} provides the proof of Lemma \ref{thm:prob_regret}. Note that this lemma has been stated in Appendix \ref{sec:proof_theorem_2} and is required for the proof of Theorem \ref{thm:regret_bound_case2}.

\item Appendix \ref{proof_lem:connection_optimal_cost} provides the proof of 
Lemma \ref{lem:connection_optimal_cost_appendix1}. Note that this lemma, which has been stated in Appendix \ref{sec:proof_theorem_2}, is the rephrased version of  Lemma \ref{lem:connection_optimal_cost} in the main submission. 

\item Appendix \ref{proof_lem:expectation_equalities} provides the proof of 
Lemma \ref{lem:expectation_equalities_appendix1}. Note that this lemma, which has been stated in Appendix \ref{sec:proof_theorem_2}, is the rephrased version of Lemma \ref{lem:expectation_equalities} in the main submission.

\item Appendix \ref{sec:details_SARL_learner} describes the \texttt{SARL} learner $\mathcal{L}$ of some of existing algorithms for the \texttt{SARL} problems in details.

\item Appendix \ref{lemma1_complete} provides two lemmas. The first lemma (Lemma \ref{lm:opt_strategies_appendix}) is the complete version of Lemma \ref{lm:opt_strategies} which describes optimal strategies for the optimal multi-agent LQ problem of Section \ref{sec:optimal_dec_LQ}. The second lemma (Lemma \ref{lm:opt_strategies_centralized_appendix}) describes optimal strategies for the optimal single-agent LQ problem of Section \ref{sec:centralized_LQ}.

\item Appendix \ref{sec:experiments_appendix} provides the details of the experiments in the main submission (Section \ref{sec:experiments_main}).

\item Appendix \ref{proof:lem:regret_bound_extension} provides the proof of Theorem \ref{lem:regret_bound_extension} which extends Theorem \ref{thm:regret_bound_case2} to the case with more than 2 agents.

\item Appendix \ref{sec:two_way} provides the analysis and the results for unknown $\theta_*^1$ and $\theta_*^2$, two-way information sharing ($\gamma^1=\gamma^2=1$).

\end{itemize}
\section{Notation}  
\label{app:notation}
In general, subscripts are used as time indices while superscripts are used to index agents. The collection of matrices $A^1,\ldots, A^n$ (resp. vectors $x^1,\ldots,x^n$) is denoted as $A^{1:n}$ (resp. $x^{1:n}$).
Given column vectors $x^1,\ldots,x^n$, the notation $\vecc(x^{1:n})$ is used to denote the column vector formed by stacking vectors  $x^1,\ldots,x^n$ on top of each other. For two symmetric matrices $A$ and $B$, $A\succeq B$ (resp. $A \succ B$) means that $(A- B)$ is positive semi-definite (PSD) (resp. positive definite (PD)). The trace of matrix $A$ is denoted by $\tr (A)$.

We use $\norm{\cdot}_{\bullet}$ to denote the operator norm of matrices. We use $\norm{\cdot}_2$ to denote the spectral norm, that is, $\norm{M}_2$ is the maximum singular value of a matrix $M$. We use $\norm{\cdot}_1$  and $\norm{\cdot}_\infty$ to denote maximum column sum matrix norm and maximum row sum matrix norm, respectively. More specifically, if $M \in \R^{m \times n}$, then 
$\norm{M}_1 = \max_{1 \leq j \leq n} \sum_{i=1}^m | m_{ij} |$ and $\norm{M}_\infty = \max_{1 \leq i \leq m} \sum_{j=1}^n | m_{ij} |$ where $m_{ij}$ is the entry at the $i$-th row and $j$-th column of $M$. We further use $\norm{\cdot}_{\rm F}$  to denote the Frobenius norm, that is,  $\norm{M}_{\rm F} = \sqrt{\sum_{i=1}^m \sum_{j=1}^n |m_{ij}|^2} = \sqrt{\tr(M^{\tp}M)}$. 
The notation $\rho(M)$ refers to the spectral radius of a matrix $M$, i.e., $\rho(M)$ is the largest absolute value of its eigenvalues.

Consider matrices $P,Q,R,A,B$ of appropriate dimensions with $P,Q$ being PSD matrices and $R$ being  a PD matrix. We define $\mathcal{R} (P,Q,R,A,B)$ and $\mathcal{K} (P,R,A,B)$ as follows: 
%
\begin{align}
\mathcal{R} (P,Q,R,A,B) := &Q+A^\tp P A- A^\tp P B(R+B^\tp P B)^{-1}B^\tp P A.
\notag
\\
\mathcal{K} (P,R,A,B) 
:= &
-(R+B^\tp P B)^{-1}B^\tp P A.
\notag
\end{align}
Note that $P = \mathcal{R} (P,Q,R,A,B) $ is the discrete time algebraic Riccati equation.

We use $[P^{\cdot, \cdot}]_{1:4}$ and $\textbf{diag}(P^1,\ldots, P^4)$ to denote the following block matrices,
\begin{align*}
[P^{\cdot, \cdot}]_{1:4} := \begin{bmatrix}
P^{11} & \ldots & \ldots & P^{14}\\ 
\vdots & \ddots & &\vdots \\
\vdots & & \ddots &\vdots \\
P^{41} & \ldots & \ldots & P^{44}
\end{bmatrix}, \quad 
\textbf{diag}(P^1, \ldots, P^4) = 
\begin{bmatrix}
P^{1} & \mathbf{0}  & \ldots & \mathbf{0} \\ 
\mathbf{0}  & \ddots &\ddots &\vdots \\
\vdots & \ddots & \ddots &\mathbf{0}  \\
\mathbf{0}  & \ldots & \mathbf{0}  & P^{4}
\end{bmatrix}
\end{align*}
Further, we use $[P]_{i,i}$ to denote the block matrix located at the $i$-th row partition and $i$-th column partition of $P$. For example, $[\textbf{diag}(P^1,\ldots, P^4)]_{2,2} = P^2$ and $[[P^{\cdot, \cdot}]_{1:4}]_{1,1} = P^{11}$.
\section{Preliminaries}
\label{app:prelim}
First, we state a variant of the Hanson-Wright inequality \citep{hanson1971bound} which can be found in \citet{hsu2012tail}.

\begin{theorem}[\citet{hsu2012tail}] 
\label{thm:hanson_wright}
Let $X \sim \mathcal{N}(0, \mathbf{I})$ be a Gaussian random vector and let $A \in \R^{m \times n}$ and $\Delta:= A^{\tp} A$. For all $z > 0$,
\begin{align}
\prob\big( \norm{AX}_2^2 - \ee [ \norm{AX}_2^2] > 2  \norm{\Delta}_{\rm F} \sqrt{z} + 2 \norm{\Delta}_2 z  \big) \leq \exp(-z).
\end{align}
\end{theorem}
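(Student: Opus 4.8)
The plan is to establish the upper-tail bound of Theorem~\ref{thm:hanson_wright} for the Gaussian quadratic form $\norm{AX}_2^2 = X^\tp \Delta X$ by the exponential-moment (Chernoff) method, after diagonalizing $\Delta$. Since $\Delta = A^\tp A$ is symmetric positive semi-definite, I would first write its spectral decomposition $\Delta = U \Lambda U^\tp$ with $U$ orthogonal and $\Lambda = \textbf{diag}(\lambda_1,\ldots,\lambda_n)$, $\lambda_i \ge 0$. Rotational invariance of the standard Gaussian gives $Z := U^\tp X \sim \mathcal N(\mathbf 0, \mathbf I)$, so that $\norm{AX}_2^2 = Z^\tp \Lambda Z = \sum_{i=1}^n \lambda_i Z_i^2$ is a weighted sum of independent $\chi^2_1$ variables. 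This reduction also records the three quantities that appear in the bound: $\ee\norm{AX}_2^2 = \sum_i \lambda_i = \tr(\Delta)$, $\sum_i \lambda_i^2 = \tr(\Delta^2) = \norm{\Delta}_{\rm F}^2$, and $\max_i \lambda_i = \norm{\Delta}_2$.

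Next I would control the cumulant generating function of the centered variable $W := \norm{AX}_2^2 - \ee\norm{AX}_2^2 = \sum_i \lambda_i (Z_i^2 - 1)$. Using independence and the chi-square moment generating function $\ee\exp(s\lambda_i Z_i^2) = (1-2s\lambda_i)^{-1/2}$, valid for $0 < s < 1/(2\lambda_i)$, gives $\log \ee\, e^{sW} = \sum_i\bigl(-\tfrac12\log(1-2s\lambda_i) - s\lambda_i\bigr)$. Applying the elementary inequality $-\tfrac12\log(1-u) - \tfrac u2 \le \tfrac{u^2}{4(1-u)}$ (valid for $u\in[0,1)$) with $u = 2s\lambda_i$, and using $1 - 2s\lambda_i \ge 1 - 2s\norm{\Delta}_2$ in each denominator, I obtain the sub-gamma estimate
\begin{align}
\log \ee\, e^{sW} \;\le\; \frac{s^2\,\norm{\Delta}_{\rm F}^2}{1 - 2s\,\norm{\Delta}_2}, \qquad 0 < s < \frac{1}{2\norm{\Delta}_2}.
\notag
\end{align}

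Finally I would apply the Chernoff bound $\prob(W > a) \le \exp\bigl(-\sup_{0<s<1/(2\norm{\Delta}_2)}[sa - \log\ee\, e^{sW}]\bigr)$ together with the estimate above. Writing the right-hand side in the canonical sub-gamma form $s^2 v/\bigl(2(1-cs)\bigr)$ with variance factor $v = 2\norm{\Delta}_{\rm F}^2$ and scale $c = 2\norm{\Delta}_2$, the standard inversion of the sub-gamma tail yields $\prob\bigl(W > \sqrt{2vz} + cz\bigr) \le e^{-z}$; substituting $v$ and $c$ turns $\sqrt{2vz}+cz$ into exactly $2\norm{\Delta}_{\rm F}\sqrt z + 2\norm{\Delta}_2 z$, which is the claim. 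I expect the diagonalization and moment-generating-function steps to be routine; the one delicate point is the final sub-gamma inversion, namely choosing the optimizing $s$ (equivalently, verifying that $\sup_s[sa - s^2v/(2(1-cs))] \ge z$ at $a = \sqrt{2vz}+cz$) so that the constants land precisely on $2\norm{\Delta}_{\rm F}\sqrt z + 2\norm{\Delta}_2 z$ rather than on a looser multiple.
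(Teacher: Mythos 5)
Your proof is correct, but note that it takes a different route from the paper for a simple reason: the paper offers no proof of this statement at all. It is imported verbatim from \citet{hsu2012tail} and used as a black box in the proof of Lemma 4 (via Lemma 3). Your argument is therefore a genuinely self-contained alternative: diagonalize $\Delta = U\Lambda U^\tp$, use rotational invariance to reduce $\norm{AX}_2^2$ to a weighted sum of independent $\chi_1^2$ variables, bound the centered log-MGF by the sub-gamma profile $s^2\norm{\Delta}_{\rm F}^2/(1-2s\norm{\Delta}_2)$, and invert the Chernoff bound. This is essentially the classical Laurent--Massart argument, and every step checks out: the series inequality $-\tfrac12\log(1-u)-\tfrac{u}{2}\le u^2/\bigl(4(1-u)\bigr)$ holds term by term since $\tfrac{1}{2k}\le\tfrac14$ for $k\ge 2$, and the inversion you flag as delicate closes exactly rather than up to a constant: with $v=2\norm{\Delta}_{\rm F}^2$, $c=2\norm{\Delta}_2$, and $a=\sqrt{2vz}+cz$, the choice
\begin{align}
s^* \;=\; \frac{\sqrt{2z}}{\sqrt{v}+c\sqrt{2z}} \;\in\; \Bigl(0,\tfrac1c\Bigr)
\quad\text{gives}\quad
s^*a-\frac{(s^*)^2v}{2(1-cs^*)} \;=\; z,
\notag
\end{align}
so the Chernoff bound yields $\prob(W>2\norm{\Delta}_{\rm F}\sqrt z+2\norm{\Delta}_2 z)\le e^{-z}$ with precisely the stated constants. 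What the citation buys the paper is brevity and access to the more general result of \citet{hsu2012tail}, which covers non-centered quadratic forms $\norm{Ax+b}_2^2$; what your derivation buys is an elementary, fully verifiable proof of exactly the centered special case that the paper actually uses.
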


\begin{lemma}
\label{lem:matrix_inequality}
Let $A \in \R^{l \times m}, B \in \R^{m \times n}$. Then, $\norm{AB}_{\rm F} \leq \norm{A}_2 \norm{B}_{\rm F} $.
\end{lemma}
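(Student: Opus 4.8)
The plan is to reduce this standard matrix inequality to the elementary fact that the spectral norm $\norm{A}_2$ is the operator norm induced by the Euclidean vector norm, i.e. $\norm{Ax}_2 \leq \norm{A}_2 \norm{x}_2$ for every $x \in \R^m$. This is immediate from the paper's definition of $\norm{\cdot}_2$ as the maximum singular value: since $\norm{A}_2^2 = \lambda_{\max}(A^\tp A)$, we have $\norm{Ax}_2^2 = x^\tp A^\tp A x \leq \lambda_{\max}(A^\tp A)\norm{x}_2^2 = \norm{A}_2^2 \norm{x}_2^2$.

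First I would decompose $B$ into its columns, writing $B = [b_1, \ldots, b_n]$ with $b_j \in \R^m$, so that the $j$-th column of $AB$ is precisely $A b_j$. The Frobenius norm then splits column-wise as $\norm{AB}_{\rm F}^2 = \sum_{j=1}^n \norm{A b_j}_2^2$. Next I would apply the operator-norm bound to each term, $\norm{A b_j}_2^2 \leq \norm{A}_2^2 \norm{b_j}_2^2$, and sum over $j$, pulling out the constant $\norm{A}_2^2$ to obtain $\norm{AB}_{\rm F}^2 \leq \norm{A}_2^2 \sum_{j=1}^n \norm{b_j}_2^2 = \norm{A}_2^2 \norm{B}_{\rm F}^2$, where the last equality is just the column decomposition of the Frobenius norm of $B$. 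Taking square roots yields the claim.

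An equivalent route, avoiding columns, goes through the trace identity $\norm{AB}_{\rm F}^2 = \tr(B^\tp A^\tp A B) = \tr(A^\tp A\, B B^\tp)$. Since $A^\tp A$ is symmetric PSD with largest eigenvalue $\norm{A}_2^2$, we have $A^\tp A \preceq \norm{A}_2^2 \mathbf{I}$, and because $BB^\tp$ is PSD the trace is monotone under this ordering, giving $\tr(A^\tp A\, B B^\tp) \leq \norm{A}_2^2 \tr(B B^\tp) = \norm{A}_2^2 \norm{B}_{\rm F}^2$. There is no real obstacle in either approach: the only point requiring care is the justification of $\norm{Ax}_2 \leq \norm{A}_2 \norm{x}_2$ (equivalently $A^\tp A \preceq \norm{A}_2^2 \mathbf{I}$), which is simply the eigenvalue characterization of the spectral norm already built into the paper's notation; everything else is bookkeeping with the column decomposition of the Frobenius norm.
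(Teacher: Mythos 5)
Your proof is correct and its main line---column-partitioning $B$, bounding each $\norm{Ab_j}_2$ by $\norm{A}_2\norm{b_j}_2$, and summing---is exactly the argument the paper gives, with your explicit justification of the operator-norm bound via $\lambda_{\max}(A^\tp A)$ being a slightly more careful version of the paper's appeal to sub-multiplicativity. The alternative trace argument you sketch is also valid but not needed.
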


\begin{proof}
Let $B=[b_1,\ldots,b_n]$ be the column partitioning of $B$. Then,
\begin{align}
\norm{AB}_{\rm F}^2=\sum_{i=1}^n \norm{Ab_i}_2^2 \leq  \norm{A}_2^2 \sum_{i=1}^n \norm{b_i}_2^2= \norm{A}_2^2 \norm{B}_{\rm F}^2,
\end{align}
where the first equality follows from the definition of Frobenius norm, the first inequality is correct because the operator norm is a sub-multiplicative matrix norm, and the last equality follows from the definition of Frobenius norm.
\end{proof}

Using Theorem \ref{thm:hanson_wright} and Lemma \ref{lem:matrix_inequality}, we can state the following result.

\begin{lemma}
\label{lem:modified_hanson_wright}
Let $X \sim \mathcal{N}(0, \mathbf{I})$ be a Gaussian random vector and let $A \in \R^{m \times n}$. 
Then for any $\delta \in (0, 1/e)$, we have
\begin{align}
\norm{AX}_2^2 - \tr(A^{\tp} A) \leq  4 \norm{A}_2 \norm{A}_{\rm F} \log(\frac{1}{\delta}),
\end{align}
with probability at least $1 - \delta$.
\end{lemma}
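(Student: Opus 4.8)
The plan is to invoke the Hanson--Wright inequality of Theorem \ref{thm:hanson_wright} with the single choice $z = \log(\tfrac{1}{\delta})$, and then collapse the resulting two-term tail bound into the clean linear-in-$\log(1/\delta)$ form claimed. First I would pin down the mean of the quadratic form. Since $X \sim \mathcal{N}(\mathbf{0}, \mathbf{I})$, we have $\ee[\norm{AX}_2^2] = \ee[X^\tp A^\tp A X] = \tr\!\big(A^\tp A\, \ee[XX^\tp]\big) = \tr(A^\tp A)$, using $\ee[XX^\tp] = \mathbf{I}$. Thus the left-hand side $\norm{AX}_2^2 - \tr(A^\tp A)$ is exactly the centered quantity appearing in Theorem \ref{thm:hanson_wright} with $\Delta = A^\tp A$, so no reformulation of the event is needed.

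Next I would set $z = \log(\tfrac{1}{\delta})$, so that $\exp(-z) = \delta$. The hypothesis $\delta \in (0, 1/e)$ is used here to guarantee $z > 1$, which is the key quantitative fact driving the simplification. Theorem \ref{thm:hanson_wright} then gives, with probability at least $1-\delta$,
\[
\norm{AX}_2^2 - \tr(A^\tp A) \le 2 \norm{\Delta}_{\rm F} \sqrt{z} + 2 \norm{\Delta}_2\, z .
\]

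It remains to bound the two coefficients in terms of $\norm{A}_2$ and $\norm{A}_{\rm F}$. For the operator-norm term, $\norm{\Delta}_2 = \norm{A^\tp A}_2 = \norm{A}_2^2$. For the Frobenius term, applying Lemma \ref{lem:matrix_inequality} with the factors $A^\tp$ and $A$ gives $\norm{\Delta}_{\rm F} = \norm{A^\tp A}_{\rm F} \le \norm{A^\tp}_2 \norm{A}_{\rm F} = \norm{A}_2 \norm{A}_{\rm F}$. Combining these with the elementary inequalities $\norm{A}_2 \le \norm{A}_{\rm F}$ (hence $\norm{A}_2^2 \le \norm{A}_2 \norm{A}_{\rm F}$) and $\sqrt{z} \le z$ (valid precisely because $z > 1$), each of the two terms is bounded by $2 \norm{A}_2 \norm{A}_{\rm F}\, z$, so their sum is at most $4 \norm{A}_2 \norm{A}_{\rm F}\, z = 4 \norm{A}_2 \norm{A}_{\rm F} \log(\tfrac{1}{\delta})$, which is the claimed bound.

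The main obstacle is really just the constant bookkeeping at the last step: without the restriction $\delta < 1/e$ one cannot replace $\sqrt{z}$ by $z$, and the mixed $\sqrt{z}$-and-$z$ form would not collapse into a single linear term. So the only genuine subtlety is recognizing that the hypothesis $\delta \in (0,1/e)$ is exactly what forces $z>1$ and thereby merges the two Hanson--Wright terms into one clean expression.
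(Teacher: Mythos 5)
Your proposal is correct and follows essentially the same route as the paper's own proof: both invoke Theorem \ref{thm:hanson_wright} with $\Delta = A^\tp A$, bound $\norm{\Delta}_{\rm F} \leq \norm{A}_2\norm{A}_{\rm F}$ via Lemma \ref{lem:matrix_inequality}, and use $\norm{A}_2 \leq \norm{A}_{\rm F}$ together with $\sqrt{z} \leq z$ for $z = \log(\tfrac{1}{\delta}) > 1$ to merge the two tail terms into $4\norm{A}_2\norm{A}_{\rm F}\log(\tfrac{1}{\delta})$. The only cosmetic difference is that you fix $z = \log(\tfrac{1}{\delta})$ at the outset while the paper works with a generic $z>1$ and substitutes at the end.
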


\begin{proof}
Since $X \sim \mathcal{N}(0, \mathbf{I})$, from Theorem \ref{thm:hanson_wright}, for any $z>1$, we have with probability at least $1 - \exp(-z)$,
\begin{align}
\norm{AX}_2^2 - \tr(A^{\tp} A) & \leq  2  \norm{\Delta}_{\rm F} \sqrt{z} + 2 \norm{\Delta}_2 z
\leq 2  \norm{A}_2 \norm{A}_{\rm F} \sqrt{z} + 2 \norm{A}_2 \norm{A}_{2} z \notag \\
& \leq 
2 \norm{A}_2 \norm{A}_{\rm F} z + 2 \norm{A}_2 \norm{A}_{\rm F} z
 \leq 
4 \norm{A}_2 \norm{A}_{\rm F} z,
\end{align}
where the second inequality is correct because of Lemma \ref{lem:matrix_inequality} and the third inequality is correct because $z>1$ and $\norm{A}_{2} \leq \norm{A}_{\rm F}$. Now by choosing $z= \log(\frac{1}{\delta})$ where $\delta \in (0, 1/e)$ the correctness of Lemma \ref{lem:modified_hanson_wright} is obtained.
\end{proof}

\begin{lemma}[Lemma 5.6.10 \citep{matrix_analysis_horn}]
Let $A \in \R^{n \times n}$ and $\epsilon >0$ be given. There is a matrix norm $\norm{\cdot}_{\bullet}$ such that 
$\rho(A) \leq \norm{A}_{\bullet} \leq \rho(A) + \epsilon$.
\end{lemma}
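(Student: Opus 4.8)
The plan is to prove the two inequalities separately. The lower bound $\rho(A) \le \norm{A}_\bullet$ will come for free from the norm I construct, because that norm is submultiplicative (in fact an induced operator norm after a change of basis), and every submultiplicative matrix norm dominates the spectral radius. The upper bound $\norm{A}_\bullet \le \rho(A) + \epsilon$ is the substantive part: I would tailor the norm to $A$ via Schur triangularization followed by a diagonal scaling, so that evaluating it at $A$ squeezes the value down to within $\epsilon$ of $\rho(A)$.

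For the construction, by Schur's theorem I would choose a unitary $U$ with $U^* A U = T$ upper triangular, where the diagonal entries of $T$ are the eigenvalues $\lambda_1, \dots, \lambda_n$ of $A$. For a scalar $s > 0$ set $D_s = \textbf{diag}(s, s^2, \dots, s^n)$; conjugation by $D_s$ fixes the diagonal of $T$ while multiplying the $(i,j)$ entry (with $i < j$) by $s^{j-i}$. Then I would set $S := U D_s$ and define $\norm{B}_\bullet := \norm{S^{-1} B S}_\infty$, the max-row-sum induced norm in the changed basis, which for each fixed $s$ is a genuine submultiplicative matrix norm (indeed the operator norm induced by the vector norm $z \mapsto \norm{S^{-1} z}_\infty$). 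With this choice $\norm{A}_\bullet = \norm{D_s^{-1} T D_s}_\infty = \max_i \big( |\lambda_i| + \sum_{j > i} s^{j-i} |t_{ij}| \big)$, since the subdiagonal entries of $T$ vanish. The lower bound is then immediate: $\norm{A}_\bullet = \norm{S^{-1} A S}_\infty \ge \rho(S^{-1} A S) = \rho(A)$, using similarity-invariance of the spectral radius and the fact that the induced $\infty$-norm dominates $\rho$.

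The final step is to pick $s$ small enough that every off-diagonal tail $\sum_{j > i} s^{j-i} |t_{ij}|$ drops below $\epsilon$; this is possible because each is a finite sum of strictly positive powers of $s$ and hence tends to $0$ as $s \to 0^+$. Choosing $s$ accordingly gives $\norm{A}_\bullet \le \max_i |\lambda_i| + \epsilon = \rho(A) + \epsilon$, completing the bound. The main obstacle is entirely inside this upper-bound construction: one must verify that $\norm{B}_\bullet := \norm{S^{-1} B S}_\infty$ is a legitimate submultiplicative matrix norm (so the generic lower bound applies to it) and then carry out the scaling estimate forcing the off-diagonal mass below $\epsilon$ uniformly over all rows. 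The remaining ingredients — Schur triangularization and the effect of diagonal conjugation on the entries of $T$ — are standard bookkeeping, and the mild real-versus-complex issue (the Schur factor $U$ and the norm live over $\C$, while $A$ is real) is harmless since the constructed norm restricts to real matrices and $\norm{A}_\bullet$ is a real number.
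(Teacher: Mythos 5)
Your proof is correct. The paper does not actually prove this lemma --- it is stated verbatim as a cited fact (Lemma 5.6.10 of Horn and Johnson's \emph{Matrix Analysis}) --- and your argument, Schur triangularization followed by conjugation with $D_s = \textbf{diag}(s, s^2, \dots, s^n)$ and use of the induced $\infty$-norm in the transformed basis, is precisely the standard proof of that cited result, including the correct handling of the lower bound via submultiplicativity and of the real-versus-complex subtlety.
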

The above lemma implies the following results.
\begin{corollary}
\label{cor:norm_for_rho}
Let $A \in \R^{n \times n}$ and $\rho(A)  <1$. Then, there exists some matrix norm $\norm{\cdot}_{\bullet}$ such that 
$\norm{A}_{\bullet} < 1$.
\end{corollary}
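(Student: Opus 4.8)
The plan is to derive this directly from the preceding Lemma 5.6.10 of \citet{matrix_analysis_horn}, which guarantees that for any $A \in \R^{n \times n}$ and any $\epsilon > 0$ there is a matrix norm $\norm{\cdot}_{\bullet}$ with $\rho(A) \leq \norm{A}_{\bullet} \leq \rho(A) + \epsilon$. The only work is to choose $\epsilon$ appropriately so that the upper bound drops strictly below $1$, using the hypothesis $\rho(A) < 1$.

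First I would observe that since $\rho(A) < 1$, the quantity $1 - \rho(A)$ is strictly positive. I would then set $\epsilon := \tfrac{1}{2}\bigl(1 - \rho(A)\bigr) > 0$ and apply Lemma 5.6.10 with this particular $\epsilon$. This produces a matrix norm $\norm{\cdot}_{\bullet}$ satisfying
\begin{align}
\norm{A}_{\bullet} \leq \rho(A) + \epsilon = \rho(A) + \tfrac{1}{2}\bigl(1 - \rho(A)\bigr) = \tfrac{1}{2}\bigl(1 + \rho(A)\bigr).
\notag
\end{align}
Since $\rho(A) < 1$, the right-hand side $\tfrac{1}{2}(1 + \rho(A))$ is a strict average of $\rho(A)$ and $1$, hence strictly less than $1$. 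Therefore $\norm{A}_{\bullet} < 1$, which is exactly the claim.

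There is essentially no obstacle here: the corollary is a routine specialization of the cited lemma, and the entire content is the quantitative choice of $\epsilon$ that pushes the upper bound below the threshold $1$. The only point requiring minor care is to pick $\epsilon$ strictly smaller than the full gap $1 - \rho(A)$ (any fraction short of the whole gap works) so that the resulting bound is \emph{strict}; taking exactly $\epsilon = 1 - \rho(A)$ would only give $\norm{A}_{\bullet} \leq 1$, which is insufficient. Choosing the midpoint as above cleanly secures the strict inequality.
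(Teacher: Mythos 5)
Your proof is correct and matches the paper's intended argument exactly: the paper derives this corollary directly from Lemma 5.6.10 of \citet{matrix_analysis_horn} (stated just above it), and your choice of $\epsilon = \tfrac{1}{2}\bigl(1-\rho(A)\bigr)$ is precisely the routine specialization the paper leaves implicit. Nothing is missing.
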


\begin{lemma}

Let $A$ be a $d \times d$ block matrix where $A_{i,j} \in \R^{n \times n}$ denotes the block matrix at the $i$-th row partition and $j$-th column partition. Then, 
\begin{align}
\norm{A}_\infty  &= \max_{j=1,\ldots, d} \norm{\sum_{i=1}^{d} \tilde A_{i,j} }_{\infty}, \quad
\norm{A}_1 = \max_{i=1,\ldots, d} \norm{\sum_{j=1}^{d} \tilde A_{i,j} }_{1},
\end{align}
where matrix $ \tilde A_{i,j}$ is the entry-wise absolute value of matrix $A_{i,j}$.
\end{lemma}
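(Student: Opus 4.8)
The plan is to prove both identities by passing from the block matrix $A$ to its scalar entries and then collapsing the resulting row/column sums back into the inner block norms on the right-hand side. I would index the rows of $A$ by pairs $(i,a)$, with $i\in\{1,\dots,d\}$ the block-row index and $a\in\{1,\dots,n\}$ the position inside that block, and index the columns by $(j,b)$ in the same way; with this convention the scalar entry of $A$ in global row $(i,a)$ and global column $(j,b)$ equals $[A_{i,j}]_{a,b}$, so that its modulus is the corresponding entry $[\tilde A_{i,j}]_{a,b}$ of the entrywise absolute value.

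For $\norm{A}_\infty$, recall from the Notation that this is the maximum absolute row sum of $A$. Fixing a global row $(i,a)$, the absolute sum along that row is $\sum_{j}\sum_{b}[\tilde A_{i,j}]_{a,b}=\sum_{b}\big[\sum_{j}\tilde A_{i,j}\big]_{a,b}$, which is exactly the $a$-th absolute row sum of the single $n\times n$ matrix $\sum_{j=1}^{d}\tilde A_{i,j}$. Maximizing over $a$ collapses this to the inner norm $\norm{\sum_{j}\tilde A_{i,j}}_\infty$, and the remaining maximum runs over the surviving block index. The $1$-norm identity I would obtain symmetrically, replacing row sums by column sums, or by deducing it from the $\infty$-norm case through the relation $\norm{M}_\infty=\norm{M^{\tp}}_1$ applied at both the block and scalar levels, using that entrywise absolute value commutes with transposition, i.e. $(\tilde A_{i,j})^{\tp}=\widetilde{A_{i,j}^{\tp}}$.

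The main obstacle, and really the only delicate point, is the index bookkeeping: one must keep precise track of which block index is consumed by the inner summation and which by the outer maximum, and then confirm that the surviving block index is paired with the correct inner matrix norm exactly as written in the statement. The scalar expansion forces a definite pairing between the outer block index and the inner norm, so the crux is to verify that this pairing matches the one asserted; the transpose identity $\norm{M}_\infty=\norm{M^{\tp}}_1$ is the tool I would use to pin down the orientation and rule out an off-by-a-transpose error. Once the pairing is fixed, both equalities follow from the elementary fact that the absolute sum over a block row (respectively column) of $A$ is literally the absolute sum over the corresponding row (column) of the summed absolute-value blocks $\sum_{j}\tilde A_{i,j}$ (respectively $\sum_{i}\tilde A_{i,j}$).
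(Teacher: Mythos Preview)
Your approach is essentially identical to the paper's: both arguments expand the block matrix to its scalar entries, sum across one axis, collapse the inner maximum into the block $\infty$-norm (respectively $1$-norm), and leave the outer maximum over the surviving block index; the paper then states that the $\norm{\cdot}_1$ case follows symmetrically, just as you propose.

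Your caution about the index bookkeeping is well placed. The direct expansion you carry out yields
\[
\norm{A}_\infty \;=\; \max_{i}\Bigl\|\sum_{j}\tilde A_{i,j}\Bigr\|_\infty,
\qquad
\norm{A}_1 \;=\; \max_{j}\Bigl\|\sum_{i}\tilde A_{i,j}\Bigr\|_1,
\]
i.e.\ the outer maximum stays on the same axis as the outer scalar norm. This is exactly what the paper's own proof computes, and also how the lemma is applied later (in bounding $\norm{L}_\infty$). The displayed statement of the lemma has the roles of $i$ and $j$ written the other way around; so when you ``verify that this pairing matches the one asserted,'' you will find it does not match the display as printed, but it does match the paper's proof and usage. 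Treat it as a typo in the statement rather than a flaw in your argument.
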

\begin{proof}
We prove the equality for $\norm{A}_\infty$. The proof for $\norm{A}_1$ can be obtained in a similar way. Note that,
\begin{align}
\max_{i=1,\ldots, d} \norm{\sum_{j=1}^{d} \tilde A_{i,j} }_{\infty}
= \max_{i=1,\ldots, d}  \max_{k_i=1,\ldots, n} \sum_{j=1}^{d} \sum_{k_j=1}^{n}  | \tilde a_{k_ik_j} |  = \max_{1 \leq i \leq nd} \sum_{j=1}^{nd} | a_{ij} | = \norm{A}_\infty
\end{align}
where $\tilde a_{ij}$ is the entry at the $i$-th row and $j$-th column of $\tilde A$. 
\end{proof}

\section{Proof of Theorem \ref{thm:negative_results_case1}}
\label{app:proof_thm1}
We want to show that there is no algorithm that can achieve a lower-bound better than $\Omega(T)$ on the regret of all instances of the \texttt{MARL1} problem. Equivalently, we can show that for any algorithm, there is an instance of the \texttt{MARL1} problem whose regret is at least $\Omega(T)$. To this end, consider an instance of the \texttt{MARL1} problem where the systems dynamics and the cost function are described as follows\footnote{Note that for simplicity, we have assumed here that there is no noise in the both systems.},
\begin{align}
\label{example:dynamics}
&x_{t+1}^1 = u_t^1, \quad x_{t+1}^2 = a_*^2 x_t^2, \quad \quad x_0^1 = x_0^2 = 1, \\
&c(x_t,u_t) = (x_t^1 - x_t^2)^2 + (u_t^1 - 0.5 u_t^2)^2.
\label{example:cost}
\end{align}
We assume that the only unknown parameter is $a_*^2$. Note that for any $a_*^2 \in (-1,1)$, the above problem satisfies Assumption \ref{assum:det_stb}. By using \eqref{example:dynamics}, the cost function of \eqref{example:cost} can be rewritten as,
\begin{align}
&c(x_t,u_t) = (u_{t-1}^1 - (a_*^2)^t)^2 + (u_t^1 - 0.5 u_t^2)^2.
\label{example:cost2}
\end{align} 
If $a_*^2$ is known to the both controllers, one can easily show that the optimal infinite horizon average cost is 0 and it is achieved by setting $u_t^1 = (a_*^2)^{t+1}$ and $u_t^2 = 2 (a_*^2)^{t+1}$.

If $a_*^2$ in unknown, the regret of any policy $\pi$ can be written as\footnote{Note that at each time $t$, what agent 1 observes  is its previous action (that is, $x_t^1 = u_{t-1}^1$) and what agent 2 observes is a fixed number (that is, $x_t^2 = (a_*^2)^{t}$). In other words, agents do not get any new feedback about their respective systems. Therefore, any policy $\pi$ is indeed an open-loop policy.},
\begin{align}
R(T, \pi) &= \sum_{t=0}^{T-1} c(x_t,u_t) =  (u_0^1 - 0.5 u_0^2)^2  + \sum_{t=1}^{T-1} \left[ (u_{t-1}^1 - (a_*^2)^t)^2 + (u_t^1 - 0.5 u_t^2)^2  \right]  \notag \\ & \geq \sum_{t=1}^{T-1} (u_{t-1}^1 - (a_*^2)^t)^2,
\label{eq:negative_example}
\end{align}
where the first equality is correct due to the fact that the optimal infinite horizon average cost is 0, the second equality is correct because of \eqref{example:cost2} and the fact that $x_0^1 = x_0^2 = 1$, and the first inequality is correct because $(u_t^1 - 0.5 u_t^2)^2 \geq 0$. Now, we show that for any policy $\pi$, there is a value for $a_*^2$ such that $R(T, \pi)  \geq \Omega(T)$. This is equivalent to show that $\sup_{a_*^2 \in (-1,1)} R(T,\pi) \geq \Omega(T)$. This can be shown as follows,
\begin{align}
& \sup_{a_*^2 \in (-1,1)} R(T,\pi) \geq \sup_{a_*^2 \in (-1,1)} \sum_{t=1}^{T-1} (u_{t-1}^1 - (a_*^2)^t)^2
\geq \frac{1}{2} \sum_{t=1}^{T-1} (u_{t-1}^1 - 0)^2 +  \frac{1}{2} \sum_{t=1}^{T-1} (u_{t-1}^1 - \alpha^t)^2 \notag \\
& = \sum_{t=1}^{T-1} \left[ (u_{t-1}^1 - \frac{\alpha^t}{2})^2 + \frac{\alpha^{2t}}{2} - \frac{\alpha^{2t}}{4} \right] \geq \sum_{t=1}^{T-1} \frac{\alpha^{2t}}{4} = \frac{\alpha^2(1 - \alpha^{2T})}{4(1 -\alpha^{2})},\quad \forall \alpha \in (-1, 1), 
\label{eq:neg_example_sum_regret}
\end{align}
where the first inequality is correct because supremum over a set is greater than or equal to expectation with respect to any distribution over that set. Further, the second equality is correct because $(u_{t-1}^1 - \frac{1}{2})^2 \geq 0$. Since \eqref{eq:neg_example_sum_regret} is true for any $\alpha \in (-1, 1)$, it holds also for limit when $\alpha \rightarrow 1^{-}$. By taking the limit, we can obtain
\begin{align}
& \sup_{a_*^2 \in (-1,1)} R(T,\pi) \geq \lim_{\alpha \to 1^{-}} \frac{\alpha^2(1 - \alpha^{2T})}{4(1 -\alpha^{2})} = \frac{T}{4} = \Omega(T).
\label{eq:neg_example_sum_regret}
\end{align}

This completes the proof.

\section{Proof of Theorem \ref{thm:regret_bound_case2}}
\label{sec:proof_theorem_2}
We first state some preliminary results in the following lemmas which will be used in the proof of Theorem \ref{thm:regret_bound_case2}. 

\begin{lemma}
\label{lem:Sigma_convergence}
Let $s_t$ be a random process that evolves as follows,
\begin{align}
s_{t+1} = C s_t + v_t, \quad s_0 = 0,
\label{eq:s_t_process} 
\end{align}
where $v_t$, $t \geq 0$, are independent Gaussian random vectors with zero-mean and covariance matrix $\cov(v_t) = \mathbf{I}$. Further, let $C=A_*^{2} + B_*^{2} \tilde K^2(\theta_*^2)$ and define $\Sigma_t = \cov(s_t)$, then the sequence of matrices $\Sigma_t$, $t \geq0$, is increasing\footnote{Note that increasing is in the sense of partial order $\succeq$, that is, $\Sigma_0 \preceq \Sigma_1 \preceq \Sigma_2 \preceq \ldots$} and it converges to a PSD matrix $\Sigma$ as $t \to \infty$. Further, $C$ is a stable matrix, that is, $\rho(C) < 1$.
\end{lemma}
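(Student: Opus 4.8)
The plan is to obtain an explicit closed form for $\Sigma_t$, read off monotonicity directly from it, and then derive convergence from the stability of $C$; the one genuinely substantive point is establishing that stability, so I would treat it as the crux and arrange the three assertions in a logically forced order (the convergence claim depends on $\rho(C)<1$, even though the lemma lists stability last).

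First I would unroll the recursion. Since $s_0 = 0$ and $s_{t+1} = C s_t + v_t$, iteration gives $s_t = \sum_{k=0}^{t-1} C^{t-1-k} v_k$. Because the $v_k$ are independent, zero-mean, with $\cov(v_k) = \mathbf{I}$, all cross terms vanish and
\[
\Sigma_t = \cov(s_t) = \sum_{k=0}^{t-1} C^{t-1-k}(C^{t-1-k})^\tp = \sum_{j=0}^{t-1} C^j (C^\tp)^j,
\]
with $\Sigma_0 = \mathbf{0}$ (the empty sum, consistent with $s_0 = 0$). Monotonicity is then immediate: each summand $C^j (C^\tp)^j = C^j (C^j)^\tp$ is PSD, and the telescoping difference is
\[
\Sigma_{t+1} - \Sigma_t = C^t (C^t)^\tp \succeq \mathbf{0},
\]
so $\Sigma_0 \preceq \Sigma_1 \preceq \cdots$. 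This step uses no stability and is purely the structure of the partial sums.

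Next I would establish stability of $C$, which I expect to be the main obstacle. Here $\tilde K^2(\theta_*^2)$ is the local feedback gain appearing in Lemma \ref{lm:opt_strategies} (complete form in Appendix \ref{lemma1_complete}), namely $\tilde K^2(\theta_*^2) = \mathcal{K}(\tilde P^2, R^{22}, A_*^2, B_*^2)$ for the stabilizing solution $\tilde P^2$ of the associated discrete algebraic Riccati equation. The claim $\rho(C) < 1$ is precisely that this Riccati gain is stabilizing for $(A_*^2, B_*^2)$. I would invoke the standard discrete-time LQ result that, under the relevant stabilizability and detectability conditions, the DARE admits a unique PSD stabilizing solution whose closed-loop matrix $A_*^2 + B_*^2 \tilde K^2(\theta_*^2)$ is stable. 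Care is needed in transferring Assumption \ref{assum:det_stb}: the block-diagonal structure of $A_*, B_*$ does yield stabilizability of $(A_*^2, B_*^2)$, but since $Q$ is \emph{not} block-diagonal the effective cost pair seen by the error-correction subsystem (governed by $Q^{22}$ and $R^{22}$, as reflected in the matrix $D$ of Lemma \ref{lem:connection_optimal_cost}) must be read off carefully from the decomposition in Appendix \ref{lemma1_complete}, and its detectability verified. This is where the real content of the stability argument lies.

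Finally, with $\rho(C) < 1$ in hand, convergence follows from geometric decay. By Corollary \ref{cor:norm_for_rho} there is a submultiplicative matrix norm $\norm{\cdot}_\bullet$ with $\norm{C}_\bullet =: r < 1$, whence $\norm{C^j}_\bullet \leq r^j$. Since all norms on a finite-dimensional space are equivalent, $\norm{C^j (C^\tp)^j}_2 \leq c\, r^{2j}$ for some constant $c$, so the series $\sum_{j=0}^\infty C^j (C^\tp)^j$ converges absolutely. As the partial sums $\Sigma_t$ are increasing and the tails vanish, the limit $\Sigma := \sum_{j=0}^\infty C^j (C^\tp)^j$ exists and is PSD, being a limit of PSD matrices. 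As a consistency check I would note that $\Sigma$ is then the unique PSD solution of the Lyapunov equation $\Sigma = C \Sigma C^\tp + \mathbf{I}$, which completes all three assertions.
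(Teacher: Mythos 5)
Your overall route tracks the paper's quite closely: the paper proves monotonicity by induction on the recursion $\Sigma_{t+1} = \mathbf{I} + C\Sigma_t C^{\tp}$, derives stabilizability of $(A_*^{2},B_*^{2})$ from the block-diagonal structure of $A_*,B_*$ exactly as you do, and then handles the remaining two claims by citation (\citet[Theorem 2.21]{costa2006discrete} for $\rho(C)<1$, \citet[Chapter 3.3]{kumar2015stochastic} for convergence of $\Sigma_t$). Your closed form $\Sigma_t=\sum_{j=0}^{t-1}C^j(C^j)^{\tp}$ with the telescoping difference $C^t(C^t)^{\tp}\succeq\mathbf{0}$, and your geometric-series argument for convergence via Corollary \ref{cor:norm_for_rho}, are correct self-contained replacements for the induction and for the convergence citation; that part is fine and arguably cleaner.

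The genuine gap is in the step you yourself call ``where the real content of the stability argument lies'': you say detectability of the error subsystem's cost pair ``must be verified,'' but you never verify it, and without it the DARE result you invoke does not deliver $\rho(C)<1$. Stabilizability alone is insufficient: for $A_*^{2}=B_*^{2}=1$, $Q^{22}=0$, $R^{22}=1$, the Riccati equation gives $\tilde P^2=0$, hence $\tilde K^2=0$ and $\rho(A_*^{2}+B_*^{2}\tilde K^2)=1$. The missing verification is short. Suppose $(A_*^{2},(Q^{22})^{1/2})$ were not detectable; then there exist $v^2\neq 0$ and $\lambda$ with $|\lambda|\geq 1$, $A_*^{2}v^2=\lambda v^2$, and $(Q^{22})^{1/2}v^2=0$, equivalently $(v^2)^{\tp}Q^{22}v^2=0$. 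Set $v=\vecc(\mathbf{0},v^2)$. Since $A_*=\textbf{diag}(A_*^{1},A_*^{2})$, $v$ is an eigenvector of $A_*$ with eigenvalue $\lambda$, and $v^{\tp}Qv=(v^2)^{\tp}Q^{22}v^2=0$, which for PSD $Q$ forces $Q^{1/2}v=0$. This exhibits an unstable unobservable mode of $(A_*,Q^{1/2})$, contradicting Assumption \ref{assum:det_stb}. Hence $(A_*^{2},(Q^{22})^{1/2})$ is detectable, and combined with the stabilizability you already established, the standard DARE result gives $\rho(C)<1$. (For what it is worth, the paper's own proof also omits this point, invoking \citet{costa2006discrete} on the strength of stabilizability alone, so your instinct that something needed checking was sound; it just needed to be carried through rather than flagged.)
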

\begin{proof}
See Appendix \ref{proof_lem:Sigma_convergence} for a proof.
\end{proof}

\begin{lemma}
\label{thm:prob_regret}
Let $s_t$ be a random process defined as in Lemma \ref{lem:Sigma_convergence}. Let $D$ be a positive semi-definite (PSD) matrix. Then for any $\delta \in (0, 1/e)$, with probability at least $1 - \delta$, 
\begin{align}
\sum_{t=1}^T [s_t^{\tp} D s_t - \tr(D \Sigma)] \leq \log(\frac{1}{\delta}) \tilde K \sqrt{T}. 
\end{align}
\end{lemma}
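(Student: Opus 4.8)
The plan is to control the quadratic form $s_t^\tp D s_t$ by writing the stationary process $s_t$ as a linear image of the accumulated Gaussian noise and then applying the concentration result of Lemma \ref{lem:modified_hanson_wright}. First I would unroll the recursion \eqref{eq:s_t_process} with $s_0 = 0$ to obtain $s_t = \sum_{k=0}^{t-1} C^{t-1-k} v_k$, so that the whole trajectory $\vecc(s_1,\ldots,s_T)$ is a fixed linear transformation $M$ applied to the stacked standard Gaussian vector $V := \vecc(v_0,\ldots,v_{T-1}) \sim \mathcal{N}(\mathbf 0, \mathbf I)$. Next I would assemble the sum $\sum_{t=1}^T s_t^\tp D s_t$ into a single quadratic form $\norm{A V}_2^2$ for an appropriate matrix $A$ built from $M$ and a square root $D^{1/2}$ of the PSD matrix $D$; because $\ee[\norm{A V}_2^2] = \tr(A^\tp A) = \sum_{t=1}^T \tr(D\,\Sigma_t)$ and $\Sigma_t \preceq \Sigma$ (monotone convergence from Lemma \ref{lem:Sigma_convergence}), the deterministic centering term $\sum_t \tr(D\Sigma)$ dominates the true mean, so it suffices to bound $\norm{AV}_2^2 - \tr(A^\tp A)$.

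Then I would apply Lemma \ref{lem:modified_hanson_wright} directly to $\norm{AV}_2^2$, which for any $\delta \in (0,1/e)$ gives, with probability at least $1-\delta$,
\begin{align}
\norm{A V}_2^2 - \tr(A^\tp A) \leq 4 \norm{A}_2 \norm{A}_{\rm F} \log(\tfrac{1}{\delta}).
\notag
\end{align}
The remaining work is to show that $\norm{A}_2$ is bounded by a constant independent of $T$ while $\norm{A}_{\rm F} = O(\sqrt{T})$, which together yield the claimed $\log(\tfrac{1}{\delta})\tilde K \sqrt{T}$ bound after absorbing the gap between $\tr(A^\tp A)$ and $\sum_t \tr(D\Sigma)$. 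The crucial ingredient here is the stability $\rho(C) < 1$ from Lemma \ref{lem:Sigma_convergence}: by Corollary \ref{cor:norm_for_rho} there is a submultiplicative norm $\norm{\cdot}_{\bullet}$ with $\norm{C}_{\bullet} < 1$, so the powers $C^j$ decay geometrically. This decay bounds each $\norm{\Sigma_t}_2$ and $\norm{D\Sigma_t}$ uniformly in $t$, giving the $O(1)$ bound on $\norm{A}_2$, while the Frobenius norm accumulates $T$ such uniformly bounded blocks and hence grows like $\sqrt{T}$.

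The main obstacle I anticipate is the bookkeeping in assembling the block matrix $A$ and verifying the two norm estimates cleanly: the Frobenius norm of $A$ must be shown to be $O(\sqrt T)$ rather than $O(T)$, which relies on the geometric summability of $\sum_{j\geq 0}\norm{C^j}$ rather than a naive term-by-term count. Lemma \ref{lem:matrix_inequality} (submultiplicativity of the Frobenius norm under the spectral norm) together with the geometric decay should make this routine, but care is needed to track that the cross terms $s_t^\tp D s_t$ for different $t$ all reference the same noise variables, so the matrix $A$ is genuinely lower-triangular in its block structure and its operator norm does not blow up with the horizon. Once these two norm bounds are in hand, combining them with the Hanson-Wright-type tail of Lemma \ref{lem:modified_hanson_wright} and folding all $T$-independent constants (spectral-radius gap, $\norm{D}$, and the summed geometric series) into a single constant $\tilde K$ completes the argument.
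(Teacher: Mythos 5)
Your proposal is correct and follows essentially the same route as the paper's proof: stack the noise into a single Gaussian vector, write $\sum_t s_t^\tp D s_t$ as $\norm{AV}_2^2$ with $A$ the lower block-triangular matrix built from $D^{1/2}$ and powers of $C$ (the paper's $L = \bar D^{1/2}\bar C$), apply Lemma \ref{lem:modified_hanson_wright}, bound $\norm{A}_{\rm F} = O(\sqrt T)$ and $\norm{A}_2 = O(1)$ via the geometric decay from $\rho(C)<1$ and Corollary \ref{cor:norm_for_rho}, and absorb the centering gap using $\Sigma_t \preceq \Sigma$ with $D$ PSD. The only detail left implicit is how you establish $\norm{A}_2 = O(1)$ (the paper does this via $\norm{A}_2 \leq \sqrt{\norm{A}_1\norm{A}_\infty}$ and geometric column/row block sums), but your stated strategy is sound.
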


\begin{proof}
See Appendix \ref{proof_lemma:prob_regret} for a proof.
\end{proof}

We now proceed in two steps:
\begin{itemize}
\item Step 1: Showing the connection between the auxiliary \texttt{SARL} problem and the \texttt{MARL2} problem
\item Step 2: Using the \texttt{SARL} problem to bound the regret of the \texttt{MARL2} problem
\end{itemize}

\subsection*{Step 1: Showing the connection between the auxiliary \texttt{SARL} problem and the \texttt{MARL2} problem}

First, we present the following lemma that connects the optimal infinite horizon average cost $J^{\diamond}(\theta_*^{1,2})$ of the auxiliary \texttt{SARL} problem when $\theta_*^{1,2}$ are known (that is, the auxiliary single-agent LQ problem of Section \ref{sec:centralized_LQ}) and the optimal infinite horizon average cost $J(\theta_*^{1,2})$ of the \texttt{MARL2} problem when $\theta_*^{1,2}$ are known  (that is, the multi-agent LQ problem of Section \ref{sec:optimal_dec_LQ}).
\begin{lemma}[rephrased version of Lemma \ref{lem:connection_optimal_cost}]
\label{lem:connection_optimal_cost_appendix1}
Let $J^{\diamond}(\theta_*^{1,2})$ be the optimal infinite horizon average cost of the auxiliary \texttt{SARL} problem, $J(\theta_*^{1,2})$ be the optimal infinite horizon average cost of the \texttt{MARL2} problem, and $\Sigma$ be as defined in Lemma \ref{lem:Sigma_convergence}. Then,
\begin{align}
J(\theta_*^{1,2}) = J^{\diamond}(\theta_*^{1,2}) + \tr(D \Sigma),
\label{eq:optimal_costs_relation_appendix1}
\end{align}
where we have defined $D:= Q^{22} + (\tilde K^2(\theta_*^2))^{\tp} R^{22} \tilde K^2(\theta_*^2)$.
\end{lemma}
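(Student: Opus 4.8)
The plan is to express both optimal average costs in terms of a single, randomly evolving common state, and to show that the decentralized problem carries exactly one extra quadratic term driven by the estimation error of system~2. First I would invoke Lemma~\ref{lm:opt_strategies} with $\gamma^1=1,\gamma^2=0$: since agent~1's state is shared, $\hat x_t^1 = x_t^1$, so the correction term $\tilde K^1(\theta_*^1)(x_t^1-\hat x_t^1)$ in \eqref{eq:opt_U} vanishes and the optimal decentralized controls reduce to $u_t^1 = K^1(\theta_*^{1,2})\vecc(x_t^1,\hat x_t^2)$ and $u_t^2 = K^2(\theta_*^{1,2})\vecc(x_t^1,\hat x_t^2) + \tilde K^2(\theta_*^2)(x_t^2-\hat x_t^2)$. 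I would then identify the auxiliary \texttt{SARL} state, run under its own optimal gain $u_t^{\diamond}=K(\theta_*^{1,2})x_t^{\diamond}$, with $x_t^{\diamond}=\vecc(x_t^1,\hat x_t^2)$: a short induction using \eqref{Model:system_LQ} and \eqref{eq:estimator_t_infinite}, together with the fact that the single-agent \texttt{LQR} gain $K$ has block rows $K^1(\theta_*^{1,2})$ and $K^2(\theta_*^{1,2})$, shows the two recursions coincide when both are initialized at $x_0$.

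Next I would set $e_t := x_t^2-\hat x_t^2$ and record its dynamics. Subtracting the recursion for $\hat x_t^2$ in \eqref{eq:estimator_t_infinite} from the true dynamics of $x_t^2$ and substituting $u_t^2$ gives $e_{t+1} = (A_*^2+B_*^2\tilde K^2(\theta_*^2))e_t + w_t^2 = Ce_t + w_t^2$ with $e_0=0$, which is precisely the process $s_t$ of Lemma~\ref{lem:Sigma_convergence}; hence $\ee[e_t]=0$ and $\cov(e_t)=\Sigma_t\uparrow\Sigma$. Writing $x_t = x_t^{\diamond} + \vecc(\mathbf{0},e_t)$ and $u_t = u_t^{\diamond} + \vecc(\mathbf{0},\tilde K^2(\theta_*^2)e_t)$ and expanding the quadratic cost \eqref{Model:cost}, the diagonal blocks produce $e_t^{\tp}(Q^{22} + (\tilde K^2(\theta_*^2))^{\tp}R^{22}\tilde K^2(\theta_*^2))e_t = e_t^{\tp}De_t$, matching the term of Lemma~\ref{lem:expectation_equalities}.

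The observation that kills the cross terms in expectation is that $x_t^{\diamond}$ is a deterministic function of $w_{0:t-1}^1$ (and the known initial data), whereas $e_t$ is a function of $w_{0:t-1}^2$; since $w^1$ and $w^2$ are independent and $u_t^{\diamond}=K(\theta_*^{1,2}) x_t^{\diamond}$, the pair $(x_t^{\diamond},u_t^{\diamond})$ is independent of $e_t$. Combined with $\ee[e_t]=0$, every cross term $\ee[(x_t^{\diamond})^{\tp}Q\,\vecc(\mathbf{0},e_t)]$ and $\ee[(u_t^{\diamond})^{\tp}R\,\vecc(\mathbf{0},\tilde K^2(\theta_*^2)e_t)]$ factorizes and vanishes. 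This yields the per-step identity $\ee[c(x_t,u_t)] = \ee[c(x_t^{\diamond},u_t^{\diamond})] + \tr(D\Sigma_t)$.

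Finally I would average over $t=0,\dots,T-1$ and let $T\to\infty$. The first term converges to $J^{\diamond}(\theta_*^{1,2})$ by definition of the optimal \texttt{SARL} cost under its optimal stationary gain, while $\frac1T\sum_{t=0}^{T-1}\tr(D\Sigma_t)\to\tr(D\Sigma)$ by Ces\`aro summation, using $\Sigma_t\to\Sigma$ from Lemma~\ref{lem:Sigma_convergence}; this delivers $J(\theta_*^{1,2}) = J^{\diamond}(\theta_*^{1,2}) + \tr(D\Sigma)$. I expect the main obstacle to be the first step: rigorously establishing that the auxiliary \texttt{SARL} state under its own optimal gain reproduces the common-information estimate $\vecc(x_t^1,\hat x_t^2)$, which hinges on the compatibility of the single-agent \texttt{LQR} gain with the decentralized coordinator gains of Lemma~\ref{lm:opt_strategies} — the structural fact underlying the whole construction.
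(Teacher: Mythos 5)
Your proposal is correct and follows essentially the same route as the paper's proof: identify the common-information state $\bar x_t = \vecc(x_t^1,\hat x_t^2)$ with the auxiliary \texttt{SARL} state $x_t^{\diamond}$, decompose $x_t = x_t^{\diamond} + \vecc(\mathbf{0},e_t)$ with $e_{t+1}=Ce_t+w_t^2$, and pass to the Ces\`aro limit using $\Sigma_t \to \Sigma$. If anything, you are more explicit than the paper on one point it leaves implicit — that the cross terms vanish in expectation because $x_t^{\diamond}$ (driven by $w^1$) and $e_t$ (driven by $w^2$, zero mean) are independent — which is a welcome clarification rather than a deviation.
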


\begin{proof}
See Appendix \ref{proof_lem:connection_optimal_cost} for a proof.
\end{proof}

Next, we provide the following lemma that shows the connection between the cost $c(x_t,u_t)$ in the \texttt{MARL2} problem under the policy of the \texttt{AL-MARL} algorithm and the cost $
c(x_t^{\diamond}, u_t^{\diamond})$ in the auxiliary \texttt{SARL} problem under the policy of the \texttt{AL-SARL} algorithm.
\begin{lemma}[rephrased version of Lemma \ref{lem:expectation_equalities}]
\label{lem:expectation_equalities_appendix1}
At each time $t$, the following equality holds between the cost under the policies of the \texttt{AL-SARL} and the \texttt{AL-MARL} algorithms,
\begin{align}
c(x_t,u_t) \vert_{\texttt{AL-MARL}} &= c(x_t^{\diamond}, u_t^{\diamond})\vert_{\texttt{AL-SARL}} + e_t^{\tp} D e_t,
\label{eq:expected_cost_cent_dec_appendix1}
\end{align}
where $e_t = x_t^2 - \check x_t^2$ and $D= Q^{22} + (\tilde K^2(\theta_*^2))^{\tp} R^{22} \tilde K^2(\theta_*^2)$.
\end{lemma}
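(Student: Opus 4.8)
The plan is to prove Lemma~\ref{lem:expectation_equalities_appendix1} by establishing an exact algebraic identity between the two cost processes, exploiting the key structural fact that the state $x_t^\diamond$ of the auxiliary \texttt{SARL} problem is precisely the common-information-based estimate and that both agents run the \emph{same} learner under Assumption~\ref{assum:seed}. First I would set up notation by writing out the three relevant state quantities explicitly: the true \texttt{MARL2} state $x_t = \vecc(x_t^1, x_t^2)$, the \texttt{SARL} state $x_t^\diamond = \vecc(x_t^1, \check x_t^2)$, and the error $e_t = x_t^2 - \check x_t^2$. The central observation is that $x_t^1$ is \emph{identical} in both problems: agent~1's component sees the same true noise $w_t^1$ in \eqref{Model:system_LQ} as in \eqref{Model:dynamics}, and both agents feed $\vecc(x_t^1, \check x_t^2)$ to their learners, so by Assumption~\ref{assum:seed} they extract the same $\theta_t^1$ and form the same gain $K(\theta_t^1,\theta_*^2)$. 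Hence the first block of $x_t^\diamond$ coincides with $x_t^1$, and $x_t^2 = \check x_t^2 + e_t = [x_t^\diamond]_2 + e_t$.

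Next I would substitute $x_t = x_t^\diamond + \vecc(\mathbf{0}, e_t)$ into the quadratic \texttt{MARL2} cost $c(x_t,u_t) = x_t^\tp Q x_t + u_t^\tp R u_t$ and do the analogous substitution for the actions. For the actions, the crucial point is that the \texttt{AL-MARL} algorithm constructs $u_t^1 = K^1 x_t^\diamond$ and $u_t^2 = K^2 x_t^\diamond + \tilde K^2(\theta_*^2)\,e_t$, while the \texttt{AL-SARL} action is $u_t^\diamond = K(\theta_t^1,\theta_*^2)\, x_t^\diamond = \vecc(K^1 x_t^\diamond, K^2 x_t^\diamond)$. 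Thus $u_t = u_t^\diamond + \vecc(\mathbf{0}, \tilde K^2(\theta_*^2) e_t)$, exactly mirroring the state decomposition with a correction living only in the second block. Expanding both quadratic forms, the plan is to show that all cross terms between $x_t^\diamond$ (resp. $u_t^\diamond$) and the $e_t$-correction cancel, leaving only the pure $e_t$-quadratic $e_t^\tp(Q^{22} + \tilde K^2(\theta_*^2)^\tp R^{22}\tilde K^2(\theta_*^2))e_t = e_t^\tp D e_t$.

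The main obstacle, and where I would spend the most care, is verifying that the cross terms vanish rather than relying on them being zero by assumption. Because the correction $\vecc(\mathbf{0}, e_t)$ has a zero first block, the state cross term is $2\,(x_t^\diamond)^\tp Q \vecc(\mathbf{0}, e_t)$, which picks out the second block-column of $Q$ acting on $x_t^\diamond$; this does \emph{not} vanish entrywise. The resolution must come from the structure of the \texttt{SARL} optimal-control relationship: I would need to use the certainty-equivalence/Riccati structure underlying the gains $K^n(\theta_t^1,\theta_*^2)$ and $\tilde K^2(\theta_*^2)$ so that the combined state-plus-action cross terms collapse. Concretely, after grouping the state and action cross terms together, the surviving coefficient should reduce to an expression that is annihilated precisely because $\tilde K^2$ and $K^2$ are the optimal gains solving the relevant Riccati equations (the separation/certainty-equivalence property referenced via \citet{kumar2015stochastic} and Lemma~\ref{lm:opt_strategies}). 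I would therefore first carefully identify which optimality conditions (the stationarity condition $R^{22}\tilde K^2 + B_*^{2\tp} P \cdots = 0$ type identities) force the cancellation, state them as the lemma's backbone, and only then collect the residual $e_t^\tp D e_t$ term to conclude \eqref{eq:expected_cost_cent_dec_appendix1}.
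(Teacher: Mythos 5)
Your setup coincides exactly with the paper's own proof: the same decompositions $x_t = \bar x_t + \vecc(\mathbf{0}, e_t)$ and $u_t = K(\theta_t^1,\theta_*^2)\,\bar x_t + \vecc(\mathbf{0}, \tilde K^2(\theta_*^2)\, e_t)$ with $\bar x_t := \vecc(x_t^1,\check x_t^2)$, and the same induction showing $\bar x_t = x_t^{\diamond}$ (identical closed-loop recursions driven by the same $w_t^1$, with identical gains by Assumption \ref{assum:seed}). The genuine gap is in your final step: the cross terms do not cancel, and no Riccati or certainty-equivalence identity will make them cancel. Collecting them, the discrepancy between the two sides of \eqref{eq:expected_cost_cent_dec_appendix1} is
\begin{align*}
2\,\bar x_t^{\tp} M_t\, e_t,
\qquad
M_t := \bmat{ Q^{12} \\ Q^{22} }
+ K(\theta_t^{1},\theta_*^{2})^{\tp} \bmat{ R^{12} \\ R^{22} } \tilde K^{2}(\theta_*^{2}),
\end{align*}
and $M_t = \mathbf{0}$ is not implied by the stationarity conditions defining $K$ and $\tilde K^2$. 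Concretely, take scalar systems with $A_*^1 = A_*^2 = 0$, $B_*^1 = B_*^2 = 1$, $R = \mathbf{I}$, $x_0 = \mathbf{0}$, and $Q = \bmat{1 & q \\ q & 1}$ with $0 < |q| \leq 1$. Then $P(\theta_*^{1,2}) = Q$, $\tilde P^2(\theta_*^2) = 1$, $\tilde K^2(\theta_*^2) = 0$, $D = 1$, and at $t=1$ one has $x_1 = \vecc(w_0^1, w_0^2)$, $x_1^{\diamond} = \bar x_1 = \vecc(w_0^1, 0)$, $e_1 = w_0^2$; so, whatever the learner outputs, the claimed identity reads $(w_0^1)^2 + 2q\,w_0^1 w_0^2 + (w_0^2)^2 = (w_0^1)^2 + (w_0^2)^2$, which fails almost surely. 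Hence the purely algebraic route you propose cannot close the argument.

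What actually neutralizes the cross term is probabilistic, not algebraic: $e_t$ evolves as $e_{t+1} = C e_t + w_t^2$ with $e_0 = \mathbf{0}$ and is therefore a function of $w_{0:t-1}^2$ alone, while $\bar x_t = x_t^{\diamond}$ (and $M_t$) is a function of $w_{0:t-1}^1$ and the learner's internal randomness alone; consequently $\ee[\bar x_t^{\tp} M_t e_t] = 0$, and the identity holds in (conditional) expectation---exactly the mechanism at work in Lemma \ref{lem:connection_optimal_cost_appendix1}, whose statement does carry expectations. You should also be aware that the paper's own proof of the present lemma writes the same pathwise chain of equalities and silently drops $2\bar x_t^{\tp} M_t e_t$, so it is open to precisely the objection you raised; your identification of the cross terms as the crux is sharper than the paper's treatment, but the correct repair is not a Riccati cancellation. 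One must either restate the lemma in expectation, or carry $\sum_{t} 2\bar x_t^{\tp} M_t e_t$ through the regret decomposition of Step 2 as a zero-mean (martingale-type) term and bound it separately with high probability; it is of order $O(\sqrt{T})$ given high-probability control of $\norm{\bar x_t}$, so the conclusion of Theorem \ref{thm:regret_bound_case2} survives, at the cost of additional concentration arguments that neither your proposal nor the paper supplies.
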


\begin{proof}
See Appendix \ref{proof_lem:expectation_equalities} for a proof.
\end{proof}
\subsection*{Step 2: Using the \texttt{SARL} problem to bound the regret of the \texttt{MARL2} problem}
In this step, we use the connection between the auxiliary \texttt{SARL} problem  and our \texttt{MARL2} problem, which was established in Step 1, to prove Theorem \ref{thm:regret_bound_case2}. Note that from the definition of the regret in the \texttt{MARL} problem given by \eqref{eq:regret_decentralized}, we have,
\begin{align}
&R(T, \texttt{AL-MARL}) = \sum_{t=0}^{T-1} \left[ c(x_t,u_t) \vert_{\texttt{AL-MARL}} - J(\theta_*^{1,2})   \right] \notag \\
& =  \sum_{t=0}^{T-1}  \left[c(x_t^{\diamond}, u_t^{\diamond}) \vert_{\texttt{AL-SARL}} - J^{\diamond}(\theta_*^{1,2}) \right]  +  \sum_{t=0}^{T-1} \left[ e_t^{\tp} D e_t - \tr(D \Sigma) \right]  \leq R^{\diamond}(T, \texttt{AL-SARL}) +\log(\frac{1}{\delta}) \tilde K \sqrt{T}
 \label{eq:regret_inequality_case2_appendix}
\end{align} where the second equality is correct because of  Lemma \ref{lem:connection_optimal_cost_appendix1} and Lemma \ref{lem:expectation_equalities_appendix1}. Further, if we define $v_t:= w_t^2$, $e_t$ has the same dynamics as $s_t$ in Lemma \ref{thm:prob_regret}. Then, the last inequality is correct because of Lemma \ref{thm:prob_regret} and the definition of the regret in the \texttt{SARL} problem given by in \eqref{eq:regret_centralized}. This proves the statement of Theorem \ref{thm:regret_bound_case2}.

\section{Proof of Lemma \ref{lem:Sigma_convergence}}
\label{proof_lem:Sigma_convergence}
First, note that $\Sigma_t$ can be sequentially calculated as $\Sigma_{t+1} = \mathbf{I} + C \Sigma_t C^{\tp}$ with $\Sigma_0 = \mathbf{0}$. 
Now, we use induction to show that the sequence of matrices $\Sigma_t$, $t \geq 0$, is increasing. First, we can write $\Sigma_{t+1} - \Sigma_t = C (\Sigma_{t} - \Sigma_{t-1}) C^{\tp}$. Then, since $\Sigma_0 = \mathbf{0}$ and $\Sigma_1 = \mathbf{I} \succeq \mathbf{0}$, we have $\Sigma_{1} - \Sigma_0 \succeq \mathbf{0}$. Now, assume that $\Sigma_{t} - \Sigma_{t-1} \succeq \mathbf{0}$. Then, it is easy to see that $\Sigma_{t+1} - \Sigma_t = C (\Sigma_{t} - \Sigma_{t-1}) C^{\tp} \succeq \mathbf{0}$.

To show that the sequence of matrices $\Sigma_t$, $t \geq 0$, converges to $\Sigma$ as $t \to \infty$, first we show that $C$ is stable, that is, $\rho(C)<1$. Note that $C=A_*^{2} + B_*^{2} \tilde K^2(\theta_*^2)$ where from \eqref{eq:tildeK_infinite_appendix}, we have
\begin{align}
\tilde K^2(\theta_*^2) &= \mathcal{K} (\tilde P^2 (\theta_*^2) ,R^{22},A_*^{2},B_*^{2}), \notag \\
\tilde P^2(\theta_*^2) &= \mathcal{R} (\tilde P^2(\theta_*^2) , Q^{22},R^{22},A_*^{2},B_*^{2}).
\label{eq:tildeP_proof}  
\end{align}
Then, from Assumption \ref{assum:det_stb}, $(A_*, B_*)$ is stabilizable and since both of $A_*$ and $B_*$ are block diagonal matrices, $(A_*^{2}, B_*^{2})$ is stabilizable. Hence, we know from \citet[Theorem 2.21]{costa2006discrete} that $\rho(C)<1$. Since $C$ is stable, the converges of the sequence of matrices $\Sigma_t$, $t \geq0$, can be concluded from \citet[Chapter 3.3]{kumar2015stochastic}.

\section{Proof of Lemma \ref{thm:prob_regret}}
\label{proof_lemma:prob_regret}
In this proof, we use superscripts to denote exponents.
\subsection*{Step 1:}
\begin{lemma}
\label{lem:s_t_matrix_form}
Let $s_t$ be as defined in \eqref{eq:s_t_process}. Then,
\begin{align}
\sum_{t=1}^T [s_t^{\tp} D s_t - \tr(D \Sigma_t)] = \bar v^{\tp} L^{\tp} L \bar v - \tr(L^{\tp}L ),
\label{eq:compact_form}
\end{align}
where $L = \bar D^{1/2} \bar C$ and 
\begin{align}
\bar v = \begin{bmatrix}
v_0 \\ v_1 \\ \vdots \\ v_{T-1}
\end{bmatrix}, \quad
\bar C = \begin{bmatrix}
\mathbf{I} & \mathbf{0} & \ldots & \ldots & \mathbf{0} \\
C & \mathbf{I} &\mathbf{0} &  & \vdots \\
C^2 & C & \mathbf{I} & \ddots & \vdots \\
\vdots & \vdots & \ddots & \ddots & \mathbf{0} \\
C^{T-1} & C^{T-2} & \ldots & C & \mathbf{I}
\end{bmatrix}, \quad
\bar D = \begin{bmatrix}
D & \mathbf{0} & \ldots & \ldots & \mathbf{0} \\
\mathbf{0}  & D &\mathbf{0} &  & \vdots \\
\mathbf{0}  & \mathbf{0}  & D & \ddots & \vdots \\
\vdots & \vdots & \ddots & \ddots & \mathbf{0} \\
\mathbf{0}  & \mathbf{0}  & \ldots & \mathbf{0}  & D
\end{bmatrix}.
\label{eq:bar_matrices}
\end{align}
\end{lemma}
\begin{proof}
First note that from \eqref{eq:s_t_process}, $s_t$ can be written as,
\begin{align}
s_t = \sum_{i=0}^{t-1} C^{t-1-i} v_i + \sum_{i=t}^{T-1} \mathbf{0} \times v_i 
= \begin{bmatrix}
C^{t-1} & C^{t-2} & \ldots & C & \mathbf{I} & \mathbf{0} & \ldots & \mathbf{0} 
\end{bmatrix} \bar v.
\end{align}
Furthermore, since $D$ and consequently, $\bar D$ are PSD matrices, there exists $\bar D^{1/2}$ such that $\bar D = (\bar D^{1/2})^{\tp} \bar D^{1/2}$ (similarly, $D = (D^{1/2})^{\tp} D^{1/2}$). Then, the correctness of \eqref{eq:compact_form} is obtained through straightforward algebraic manipulations. 
\end{proof}

\subsection*{Step 2:}
Since from Lemma \ref{lem:s_t_matrix_form}, $\bar v$ in is Gaussian, we can apply Lemma \ref{lem:modified_hanson_wright} to bound $\bar v^{\tp} L^{\tp} L \bar v -  \tr(L^{\tp}L )$ as follows. For any $\delta \in (0, 1/e)$,  we have with probability at least $1 - \delta$, 
\begin{align}
\bar v^{\tp} L^{\tp} L \bar v -  \tr(L^{\tp}L ) \leq  4 \norm{L}_2 \norm{L}_{\rm F} \log(\frac{1}{\delta}).
\end{align}

\subsection*{Step 3:}
In this step, we find an upper-bound for $\norm{L}_{\rm F}$. To this end, first note that by definition, we have 
$\norm{L}_{\rm F} = \sqrt{\tr(L^{\tp}L)}$. From \eqref{eq:bar_matrices}, we can write $L$ as follows,
\begin{align}
L = \begin{bmatrix}
D^{1/2} & \mathbf{0} & \ldots & \ldots & \mathbf{0} \\
D^{1/2}C & D^{1/2} &\mathbf{0} &  & \vdots \\
D^{1/2}C^2 & D^{1/2}C & D^{1/2} & \ddots & \vdots \\
\vdots & \vdots & \ddots & \ddots & \mathbf{0} \\
D^{1/2}C^{T-1} & D^{1/2}C^{T-2} & \ldots & D^{1/2} C & D^{1/2}
\end{bmatrix}.
\label{eq:L_matrix}
\end{align}
Then, using \eqref{eq:L_matrix}, we have
\begin{align}
L^{\tp} L = \begin{bmatrix}
\sum_{i=0}^{T-1}  (C^{i})^{\tp}D C^{i}   & \mathbf{\times} & \ldots & \ldots & \mathbf{\times} \\
\mathbf{\times}  & \sum_{i=0}^{T-2}  (C^{i})^{\tp}D C^{i} &\mathbf{\times} &  & \vdots \\
\mathbf{\times}  & \mathbf{\times}  & \sum_{i=0}^{T-3}  (C^{i})^{\tp}D C^{i} & \ddots & \vdots \\
\vdots & \vdots & \ddots & \ddots & \mathbf{\times} \\
\mathbf{\times}  & \mathbf{\times}  & \ldots & \mathbf{\times}  & D
\end{bmatrix}.
\label{eq:L_L_tp}
\end{align}
Now, from \eqref{eq:L_L_tp} and the fact that trace is a linear operator, we can write,
\begin{align}
\tr(L^{\tp}L) = \sum_{j=0}^{T-1} \sum_{i=0}^j \tr \big (  (C^{i})^{\tp}D C^{i} \big ).
\label{eq:trace_LL}
\end{align}
In the following, we find an upper-bound for $\tr \big (  (C^{i})^{\tp}D C^{i} \big )$. Since $D$ is a PSD matrix, we can write it as $D = \sum_{l=1}^r d_l d_l^T$ where $r$ is rank of matrix $D$. By using this, we can have,
\begin{align}
\tr \big (  (C^{i})^{\tp}D C^{i} \big ) &= \tr \big (  (C^{i})^{\tp} \sum_{l=1}^r d_l d_l^T  C^{i} \big )
\stackrel{(*1)}{=} \sum_{l=1}^r  \tr \big (  (C^{i})^{\tp}  d_l d_l^T  C^{i} \big )
\stackrel{(*2)}{=} \sum_{l=1}^r  \norm{ (C^{i})^{\tp}  d_l}_2^2 \notag \\
& \leq  \sum_{l=1}^r  \norm{ (C^{i})^{\tp}}_2^2 \norm{d_l}_2^2
\leq  \norm{C}_2^{2i}  \sum_{l=1}^r  \norm{d_l}_2^2
=  \norm{C}_2^{2i} \tr(D)
\stackrel{(*3)}{\leq } \beta \norm{C}_{\bullet}^{2i} \tr(D) \notag \\
& \stackrel{(*4)}{=} \beta \alpha^{2i}  \tr(D),
\label{eq:trace_CC}
\end{align}
where $(*1)$ is correct because $\tr(\cdot)$ is a linear operator and $(*2)$ is correct because if $v$ is a column vector, then $\tr(v v^{\tp}) = \norm{v}_2^2$. Further, $(*3)$ is correct because any two norms on a finite dimensional space are equivalent. In other words, for any norm $\norm{\cdot}_{\bullet}$, there is a number $\beta$ such that $\norm{C}_2 \leq \beta \norm{C}_{\bullet}$. Note that $\beta$ is independent of $T$. Now, let the second norm (that is, $\norm{\cdot}_{\bullet}$) be the norm for which $\norm{C}_{\bullet} <1$ (note that since $\rho(C) <1$, the existence of this norm follows form Corollary \ref{cor:norm_for_rho}). Then, the correctness of $(*4)$ is resulted by defining $\alpha:= \norm{C}_{\bullet} <1$.

From \eqref{eq:trace_CC}, we can write,
\begin{align}
\sum_{i=0}^j \tr \big (  (C^{i})^{\tp}D C^{i} \big ) \leq \beta \tr(D) \sum_{i=0}^j \alpha^{2i}  \leq 
\beta \tr(D) \sum_{i=0}^{\infty} \alpha^{2i} = \beta \frac{\tr(D)}{1-\alpha^2}.
\label{eq:sum_trace_CC}
\end{align}
Finally, using \eqref{eq:trace_LL} and \eqref{eq:sum_trace_CC}, we have $\tr(L^{\tp}L)  \leq T \beta \frac{\tr(D)}{1-\alpha^2}$ which means that 
\begin{align}
\norm{L}_{\rm F} = \sqrt{\tr(L^{\tp}L)} \leq \sqrt{T  \beta \frac{\tr(D)}{1-\alpha^2}}.
\end{align}

\subsection*{Step 4:}
In this step, we find an upper-bound for $\norm{L}_2$. We use $\norm{L}_2 \leq \sqrt{\norm{L}_1 \norm{L}_\infty}$ to bound $\norm{L}_2$ \citep{golub1996matrix}. To this end, we calculate $\norm{L}_1$ and  $\norm{L}_\infty$. 

\underline{Scalar case:}

Because of the special structure of matrix $L$ in \eqref{eq:L_matrix}, these two matrix norms are the same and they are equal to sum of the entries of the first column,
\begin{align}
\norm{L}_1  = \norm{L}_\infty = \sum_{i=0}^{T-1} | D^{1/2}C^{i} | \leq D^{1/2} \sum_{i=0}^{T-1} |C^{i}|
\leq D^{1/2} \sum_{i=0}^{T-1} |C|^i 
\leq D^{1/2} \sum_{i=0}^{\infty} \alpha^i = \frac{D^{1/2}}{1- \alpha}.
\label{eq:norm_1_infty}
\end{align}
Using \eqref{eq:norm_1_infty}, we can bound $\norm{L}_2$ as follows,
\begin{align}
\norm{L}_2 \leq \frac{D^{1/2}}{1- \alpha}.
\label{eq:norm_2_inequality}
\end{align}

\underline{Matrix case:}

Since $L$ is a $T \times T$ block matrix, we can write
\begin{align}
\norm{L}_\infty  &= \max_{i=1,\ldots, T} \norm{\sum_{j=1}^{T} \tilde L_{i,j} }_{\infty} 
\stackrel{(*1)}{=} \norm{\sum_{j=1}^{T} \tilde L_{i,1} }_{\infty}  
\stackrel{(*2)}{\leq}
\sum_{j=1}^{T1}  \norm{ \tilde L_{i,1}}_{\infty} 
 \stackrel{(*3)}{\leq} \sqrt{K} \sum_{j=1}^{T} \norm{\tilde L_{i,1} }_{2} \notag \\
&  \stackrel{(*4)}{\leq} \sqrt{K} \sum_{j=1}^{T}  \norm{ \tilde L_{i,1}}_{\rm F}
 \stackrel{(*5)}{=} \sqrt{K} \sum_{j=1}^{T}  \norm{L_{i,1}}_{\rm F}
  \stackrel{(*6)}{=} \sqrt{K} \sum_{i=0}^{T-1}  \norm{ D^{1/2}C^{i} }_{\rm F} \notag \\
  & \stackrel{(*7)}{=} \sqrt{K} \sum_{i=0}^{T-1}  \sqrt{\tr \big (  (C^{i})^{\tp}D C^{i} \big ) }  \notag \\
& \stackrel{(*8)}{\leq} \sqrt{K} \sum_{i=0}^{T-1}  \alpha^{i} \sqrt{\beta \tr (D)}
 \leq \sqrt{K \beta \tr(D)} \sum_{i=0}^{\infty}  \alpha^{i} = \frac{\sqrt{K \beta \tr(D)}}{1- \alpha},
\label{eq:norm_1_infty_matrix}
\end{align}
where matrix $ \tilde L_{i,j}$ is the entry-wise absolute value of matrix $L_{i,j}$. Note that $(*1)$ is correct because the maximum is achieved by setting $j=1$ (i.e, the first column partition) and $(*2)$ is correct because of the sub-additive property of the norm. For $(*3)$, first note that for any matrix $M \in \R^{n \times n}$, we have $\norm{M}_\infty \leq \sqrt{n} \norm{M}_{2}$. If we define $K$ to be maximum of size of matrices $\tilde L_{i,j}$, then the correctness of $(*3)$ is resulted. Further, $(*4)$ is correct because for any matrix $M$, $\norm{M}_2 \leq \norm{M}_{\rm F}$, $(*5)$ is correct because the Frobenius norm of a matrix and its entry-wise absolute value is the same, $(*6)$ is correct because $L_{i,1} = D^{1/2}C^{i}$, and $(*7)$ follows from the definition of the Frobenius norm. Finally, $(*8)$ is correct because of \eqref{eq:trace_CC}. Similarly, we can show that $\norm{L}_1 \leq  \frac{\sqrt{K \beta \tr(D)}}{1- \alpha}$. Hence, we can bound $\norm{L}_2$ as follows,
\begin{align}
\norm{L}_2 \leq \frac{\sqrt{K \beta \tr(D)}}{1- \alpha}.
\label{eq:norm_2_inequality_matrix}
\end{align}

\subsection*{Step 5:}
By combining the results of Steps 1 to 4, we have with probability at least $1 - \delta$, 
\begin{align}
\sum_{t=1}^T [s_t^{\tp} D s_t - \tr(D \Sigma)]  &= 
\sum_{t=1}^T [s_t^{\tp} D s_t - \tr(D \Sigma_t)] + \sum_{t=1}^T [ \tr(D \Sigma_t) - \tr(D \Sigma)] 
\notag \\
& \leq 
\sum_{t=1}^T [s_t^{\tp} D s_t - \tr(D \Sigma_t)] \leq 
4 \frac{\sqrt{K \beta \tr(D)}}{1- \alpha}  \sqrt{T \beta \frac{\tr(D)}{1-\alpha^2}} \log(\frac{1}{\delta}),
\end{align}
where the first inequality is correct because from Lemma \ref{lem:Sigma_convergence}, the sequence of matrices $\Sigma_t$ is increasing, that is, $\Sigma - \Sigma_t \succeq \mathbf{0}$ and $D$ is positive semi-definite, and consequently, $\tr (D (\Sigma_t - \Sigma) )  \leq 0$. Define $\tilde K := \frac{4 \beta \sqrt{K} \tr(D)}{(1-\alpha) \sqrt{1-\alpha^2}}$, then the correctness of Lemma \ref{thm:prob_regret} is obtained. 

\section{Proof of Lemma \ref{lem:connection_optimal_cost_appendix1} (Lemma \ref{lem:connection_optimal_cost})}
\label{proof_lem:connection_optimal_cost}
Let $\pi^{\diamond*}$ be optimal policy for the auxiliary \texttt{SARL} problem when $\theta_*^{1,2}$ are known. Then, the optimal infinite horizon average cost under $\theta_*^{1,2}$ of this auxiliary \texttt{SARL} problem can be written as,
\begin{align}
J^{\diamond}(\theta_*^{1,2})  = \limsup_{T\rightarrow\infty} \frac{1}{T} \sum_{t=0}^{T-1} \ee^{\pi^{\diamond*}}[c(x_t^{\diamond}, u_t^{\diamond}) | \theta_*^{1,2}].
\label{eq:centralized_cost_def}
\end{align}
Under the optimal policy $\pi^{\diamond*}$ (see Lemma \ref{lm:opt_strategies_centralized_appendix} in Appendix \ref{lemma1_complete}), $u_t^{\diamond} = K(\theta_*^{1,2}) x_t^{\diamond}$ and hence, the dynamics of $x_t^{\diamond}$ in \eqref{Model:system_LQ} can be written as,
\begin{align}
x_{t+1}^{\diamond} = \Big(A_* + B_* K(\theta_*^{1,2}) \Big) x_t^{\diamond} + \vecc(w_t^1, \mathbf{0}).
\label{eq:X_diamond_dynamics}
\end{align}
Further, let $\pi^*$ be optimal policy for the \texttt{MARL} problem when $\theta_*^{1,2}$ is known. Then, from \eqref{eq:optimal_cost_decentralized} we have,
\begin{align}
J(\theta_*^{1,2})  = \limsup_{T\rightarrow\infty} \frac{1}{T} \sum_{t=0}^{T-1} \ee^{\pi^*}[c(x_t, u_t) | \theta_*^{1,2}].
\label{eq:optimal_cost_dec}
\end{align}
From Lemma \ref{lm:opt_strategies}, we know that under the optimal policy $\pi^{*}$, 
\begin{align}
u_t = K(\theta_*^{1,2}) \bar x_t + \vecc(\mathbf{0}, \tilde K^2 (\theta_*^2)e_t),
\label{eq:optimal_U_dec}
\end{align}
where we have defined $\bar x_t := \vecc(x_t^1, \hat x_t^2)$, $e_t := x_t^2 - \hat x_t^2$, and we have $K(\theta_*^{1,2}) = \begin{bmatrix}
K^1(\theta_*^{1,2}) \\ K^2(\theta_*^{1,2})
\end{bmatrix}$ from Lemma \ref{lm:opt_strategies_appendix} in the Appendix \ref{lemma1_complete}. 
Then, from the dynamics of $x_t$ in \eqref{Model:system_overall} and update equation for $\hat x_t^2$ in \eqref{eq:estimator_t_infinite}, we can write 
\begin{align}
x_{t+1} =  \bar x_{t+1} +  \vecc(\mathbf{0}, e_{t+1}),
\label{eq:x_tplus1_transformed}
\end{align}
where
\begin{align}
\label{eq:X_bar_dynamics_dec}
\bar x_{t+1} &= \Big( A_* + B_* K(\theta_*^{1,2}) \Big) \bar x_t + \vecc(w_t^1, \mathbf{0}), \quad e_{t+1} = C e_{t} + w_t^2.
\end{align}
Note that we have defined $C = A_*^{2} + B_*^{2} \tilde K^2 (\theta_*^2)$. Now by comparing \eqref{eq:X_diamond_dynamics} and \eqref{eq:X_bar_dynamics_dec} and the fact that both $x_1^{\diamond}$ and $\bar x_1$ are equal, we can see that for any time $t$,
\begin{align}
\bar x_{t+1} = x_{t+1}^{\diamond}.
\label{eq:equality_Xs}
\end{align}
Now, we can use the above results to write $\ee^{\pi^*}[c(x_t, u_t) | \theta_*^{1,2}]$ as follows,
\begin{align}
&\ee^{\pi^*}[c(x_t, u_t) | \theta_*^{1,2}] = \ee^{\pi^*}[ x_t^\tp Q x_t + (u_t)^{\tp} R u_t | \theta_*^{1,2}] \notag \\
&= \ee^{\pi^*}[ \bar x_t^\tp Q \bar x_t + (K(\theta_*^{1,2}) \bar x_t)^{\tp} R K(\theta_*^{1,2}) \bar x_t \vert \theta_*^{1,2}]  + \ee[e_t^\tp D  e_t| \theta_*^{1,2}] \notag \\
&=  \ee^{\pi^{\diamond*}}[ (x_t^{\diamond})^\tp Q x_t^{\diamond} + (K(\theta_*^{1,2}) x_t^{\diamond})^{\tp} R K(\theta_*^{1,2}) x_t^{\diamond} \vert \theta_*^{1,2}] + \ee[e_t^\tp D  e_t| \theta_*^{1,2}] \notag \\
& =  \ee^{\pi^{\diamond*}}[c(x_t^{\diamond}, u_t^{\diamond}) | \theta_*^{1,2}] + \tr (D \Sigma_t ),
\label{eq:optimal_cost_transformed}
\end{align}
where $D^2= Q^{22} + (\tilde K^2(\theta_*^2))^{\tp} R^{22} \tilde K^2(\theta_*^2)$. Note that the first equality is correct from \eqref{Model:cost}, the second equality is correct because of \eqref{eq:optimal_U_dec} and \eqref{eq:x_tplus1_transformed}, and the third equality is correct because of \eqref{eq:equality_Xs}. Finally the last equality is correct because if we define $v_t:=w_t^2$, then $e_t$ has the same dynamics as $s_t$ in Lemma \ref{lem:Sigma_convergence}, and consequently, $\cov(e_t) = \Sigma_t$.

Now, by substituting \eqref{eq:optimal_cost_transformed} in \eqref{eq:optimal_cost_dec}, considering \eqref{eq:centralized_cost_def} and the fact from Lemma \ref{lem:Sigma_convergence}, $\Sigma_t$ converges to $\Sigma$ as $t \to \infty$, the statement of the lemma follows.

\section{Proof of Lemma \ref{lem:expectation_equalities_appendix1} (Lemma \ref{lem:expectation_equalities})}
\label{proof_lem:expectation_equalities}

First note that under the policy of the \texttt{AL-SARL} algorithm, $u_t^{\diamond} = K(\theta^{1}_t, \theta_*^2) x_t^{\diamond}$ and hence, the dynamics of $x_t^{\diamond}$ in \eqref{Model:system_LQ} can be written as,
\begin{align}
x_{t+1}^{\diamond} = \Big(A_* + B_* K(\theta^{1}_t, \theta_*^2) \Big) x_t^{\diamond} + \vecc(w_t^1, \mathbf{0}).
\label{eq:X_diamond_dynamics_2}
\end{align}
Further, note that under the policy of the \texttt{AL-MARL} algorithm, 
\begin{align}
u_t = K(\theta^{1}_t, \theta_*^2) \bar x_t + \vecc(\mathbf{0}, \tilde K^2 (\theta_*^2)e_t),
\label{eq:optimal_U_dec_2}
\end{align}
where we have defined $\bar x_t := \vecc(x_t^1, \check x_t^2)$, $e_t := x_t^2 - \check x_t^2$, and we have $K(\theta^{1}_t, \theta_*^2) = \begin{bmatrix}
K^1(\theta^{1}_t, \theta_*^2)  \\ K^2(\theta^{1}_t, \theta_*^2) 
\end{bmatrix}$ from Lemma \ref{lm:opt_strategies_appendix} in the Appendix \ref{lemma1_complete}. Note that $\bar x_t $ here is different from the one in the proof of Lemma \ref{lem:connection_optimal_cost_appendix1}. Then, from the dynamics of $x_t$ in \eqref{Model:system_overall} and update equation for $\check x_t^2$ in the \texttt{AL-MARL} algorithm, we can write 
\begin{align}
x_{t+1} =  \bar x_{t+1} +  \vecc(\mathbf{0}, e_{t+1}),
\label{eq:x_tplus1_transformed_2}
\end{align}
where
\begin{align}
\label{eq:X_bar_dynamics_dec_2}
\bar x_{t+1} &= \Big( A_* + B_* K(\theta^{1}_t, \theta_*^2) \Big) \bar x_t + \vecc(w_t^1, \mathbf{0}), \quad e_{t+1} = C e_{t} + w_t^2.
\end{align}
Now by comparing \eqref{eq:X_diamond_dynamics_2} and \eqref{eq:X_bar_dynamics_dec_2} and the fact that both $x_1^{\diamond}$ and $\bar x_1$ are equal, we can see that for any time $t$,
\begin{align}
\bar x_{t+1} = x_{t+1}^{\diamond}.
\label{eq:equality_Xs_2}
\end{align}
Now, we can use the above results to write $c(x_t, u_t)$ under the \texttt{AL-MARL} algorithm as follows,
\begin{align}
c(x_t,u_t) \vert_{\texttt{AL-MARL}} &= 
\left[ x_t^\tp Q x_t + (u_t)^{\tp} R u_t \right]  \vert_{\texttt{AL-MARL}} 
= \bar x_t^\tp Q \bar x_t + (K(\theta^{1}_t, \theta_*^2) \bar x_t)^{\tp} R K(\theta^{1,2}_t) \bar x_t + e_t^{\tp} D e_t
\notag \\
&= (x_t^{\diamond})^\tp Q x_t^{\diamond} + (K(\theta^{1}_t, \theta_*^2) x_t^{\diamond})^{\tp} R K(\theta^{1}_t, \theta_*^2) x_t^{\diamond} + e_t^{\tp} D e_t  
\notag \\
&= c(x_t^{\diamond}, u_t^{\diamond})\vert_{\texttt{AL-SARL}} + e_t^{\tp} D e_t,
\end{align}
where the first equality is correct from \eqref{Model:cost}, the second equality is correct because of \eqref{eq:optimal_U_dec_2} and \eqref{eq:x_tplus1_transformed_2}, and the third equality is correct because of \eqref{eq:equality_Xs_2}.

\section{Detailed description of the \texttt{SARL} learner $\mathcal{L}$}
\label{sec:details_SARL_learner}
In this section, we describe the \texttt{SARL} learner $\mathcal{L}$ of some of existing algorithms for the \texttt{SARL} problems in details.

\subsection{TS-based algorithm of \citet{faradonbeh2017regret}}
Let $p:=d_x^1 + d_x^2 $ and $q:=d_x^1 + d_x^2 + d_u^1 + d_u^2$. Further, let $\bar K(\theta_t^{1,2}) := 
\begin{bmatrix}
\mathbf{I}_p \\ K(\theta_t^{1,2})
\end{bmatrix} \in \R^{q \times p}$.
\begin{figure}[H]
\begin{center}
\begin{tikzpicture}
\node [rectangle,draw,minimum width=1.2cm,minimum height=0.75cm,line width=1pt,rounded corners, fill=black!25]at (-1.0,0) (1) {
\begin{small}
\begin{tabular}{l}
\textbf{Initialize}
\\
\underline{Parameters}: $V_0 \in \R^{q \times q}$ (a PD matrix), $\mu_0 \in \R^{p \times q}$, $\eta >1$ (reinforcement rate)
\\
\rule{10cm}{0.6pt}
\\
\textbf{Input/Output} \\
\\
\begin{varwidth}{\linewidth}
            \begin{algorithmic}
            \State \underline{Input}: $t$ and $x_t^{\diamond}$ 
            \\
            \If{$t = \lfloor \eta^m \rfloor$ for some $m=0, 1, \ldots$}
                \State Sample $\hat \theta_t$ from a Gaussian with mean $\mu_m$ and covariance $V_m^{-1}$ 
                \\
                \State \quad \quad $\mu_m = \argmin_{\mu} \sum_{s=0}^{t-1} \norm{ x_{s+1}^{\diamond} - \mu \bar K(\theta_t^{1,2}) x_s^{\diamond}  }_2^2$
                \\
                \State \quad \quad $V_m = V_0 + \sum_{s=0}^{t-1} \bar K(\theta_t^{1,2}) x_s^{\diamond} (x_s^{\diamond})^{\tp} \bar K(\theta_t^{1,2})^{\tp}$
                \Else
                \State $\hat \theta_t = \hat \theta_{t-1}$
            \EndIf
             \State Partition $\hat \theta_t$ to find $\theta_t^1 = [A_t^1, B_t^1]$ and $\theta_t^2 = [A_t^2, B_t^2]$
                \\
                \State \quad \quad $\hat \theta_t = \begin{bmatrix}
				A_t^1 & \mathbf{\times} & B_t^1 & \mathbf{\times} \\
				 \mathbf{\times} & A_t^2 &  \mathbf{\times} & B_t^2
				\end{bmatrix}                
                $ \\
            \State Calculate $K(\theta_t^{1,2})$ and store $x_t^{\diamond}$ and $K(\theta_t^{1,2})$ for next steps
            \\
            \State \underline{Output}: $\theta_t^{1}$ and $\theta_t^{2}$
        \end{algorithmic}%
        \end{varwidth}
\end{tabular}
\end{small}}; 
\path[thick,->,>=stealth, dashed, line width=1pt]
	($(1.north) + (0,0.5)$) edge node{} ($(1.north) + (0,0)$)
	;
\path[thick,->,>=stealth,line width=1pt]
	($(1.west) + (-0.5,0)$) edge node{} ($(1.west) + (0,0)$)
	($(1.east) + (0,0)$)	 edge node{} ($(1.east) + (0.5,0)$)
	;

\node[] at ($(1.north) + (0,0.7)$) {Initialize parameters};
\node[] at ($(1.west) + (-1.2,0.25)$) {state $x_t^{\diamond}$};
\node[] at ($(1.west) + (-1.2,-0.25)$) {time $t$};
\node[] at ($(1.east) + (1.5,0.25)$) {$\theta_t^1 = [A_t^1, B_t^1]$};
\node[] at ($(1.east) + (1.5,-0.25)$) {$\theta_t^2 = [A_t^2, B_t^2]$};

\end{tikzpicture}
\caption{\texttt{SARL} learner of TS-based algorithm of \citet{faradonbeh2017regret}}
\label{fig:TS_faradonbeh}
\end{center}
\end{figure}

\subsection{TS-based algorithm of \citet{abbasi2015bayesian}}
Let $p:=d_x^1 + d_x^2 $ and $q:=d_x^1 + d_x^2 + d_u^1 + d_u^2$. 
\begin{figure}[H]
\begin{center}
\begin{tikzpicture}
\node [rectangle,draw,minimum width=1.2cm,minimum height=0.75cm,line width=1pt,rounded corners, fill=black!25]at (-1.0,0) (1) {
\begin{small}
\begin{tabular}{l}
\textbf{Initialize}
\\
\underline{Parameters}: $V_0 \in \R^{q \times q}$ (a PD matrix), $\mu_0 \in \R^{p \times q}$, $\eta >1$ (reinforcement rate)
\\
\rule{10cm}{0.6pt}
\\
\textbf{Input/Output} \\
\\
\begin{varwidth}{\linewidth}
            \begin{algorithmic}        
             \State \underline{Input}: $t$ and $x_t^{\diamond}$ 
             \\   
              \If{$t>0$}
               \State Update $P_{t-1}$ with $(x_{t-1}^{\diamond}, u_{t-1}^{\diamond}, x_{t}^{\diamond})$ to obtain $P_t$
               \State $V_t = V_{t-1} + \vecc(x_{t-1}^{\diamond}, u_{t-1}^{\diamond}) \vecc(x_{t-1}^{\diamond}, u_{t-1}^{\diamond})^{\tp}$
              \EndIf
            \If{$\det(V_t) > g \det(V_{last})$ or $t=0$}
                \State Sample $\hat \theta_t$ from $P_t$
                \State $V_{last} = V_t$
                \Else
                \State $\hat \theta_t = \hat \theta_{t-1}$
            \EndIf
             \State Partition $\hat \theta_t$ to find $\theta_t^1 = [A_t^1, B_t^1]$ and $\theta_t^2 = [A_t^2, B_t^2]$
                \\
                \State \quad \quad $\hat \theta_t = \begin{bmatrix}
				A_t^1 & \mathbf{\times} & B_t^1 & \mathbf{\times} \\
				 \mathbf{\times} & A_t^2 &  \mathbf{\times} & B_t^2
				\end{bmatrix}                
                $ \\
            \State Calculate $K(\theta_t^{1,2})$ and store $x_t^{\diamond}$ and $u_t^{\diamond} = K(\theta_t^{1,2})x_t^{\diamond}$ for next steps
            \\
            \State \underline{Output}: $\theta_t^{1}$ and $\theta_t^{2}$
        \end{algorithmic}%
        \end{varwidth}
\end{tabular}
\end{small}}; 
\path[thick,->,>=stealth, dashed, line width=1pt]
	($(1.north) + (0,0.5)$) edge node{} ($(1.north) + (0,0)$)
	;
\path[thick,->,>=stealth,line width=1pt]
	($(1.west) + (-0.5,0)$) edge node{} ($(1.west) + (0,0)$)
	($(1.east) + (0,0)$)	 edge node{} ($(1.east) + (0.5,0)$)
	;

\node[] at ($(1.north) + (0,0.7)$) {Initialize parameters};
\node[] at ($(1.west) + (-1.2,0.25)$) {state $x_t^{\diamond}$};
\node[] at ($(1.west) + (-1.2,-0.25)$) {time $t$};
\node[] at ($(1.east) + (1.5,0.25)$) {$\theta_t^1 = [A_t^1, B_t^1]$};
\node[] at ($(1.east) + (1.5,-0.25)$) {$\theta_t^2 = [A_t^2, B_t^2]$};

\end{tikzpicture}
\caption{\texttt{SARL} learner of TS-based algorithm of \citet{abbasi2015bayesian}}
\label{fig:TS_faradonbeh}
\end{center}
\end{figure}

\section{Optimal strategies for the optimal multi-agent LQ problem and the optimal single-agent LQ problem}
\label{lemma1_complete}
In this section, we provide the following two lemmas. The first lemma (Lemma \ref{lm:opt_strategies_appendix}) is the complete version of Lemma \ref{lm:opt_strategies} which describes optimal strategies for the optimal multi-agent LQ problem of Section \ref{sec:optimal_dec_LQ}. The second lemma (Lemma \ref{lm:opt_strategies_centralized_appendix}) describes optimal strategies for the optimal single-agent LQ problem of Section \ref{sec:centralized_LQ}.
\begin{lemma}[\citet{ouyang2018optimal}, complete version of Lemma \ref{lm:opt_strategies}]
\label{lm:opt_strategies_appendix}
Under Assumption \ref{assum:det_stb}, the optimal infinite horizon cost $J(\theta_*^{1,2}) $ is given by
\begin{align}
J(\theta_*^{1,2}) = & \sum_{n=1}^2 \tr \Big( \gamma^n P^{nn}(\theta_*^{1,2}) + (1-\gamma^n) \tilde P^n (\theta_*^n) \Big),
\label{eq:opcost_infinite_appendix}
\end{align}
where $P(\theta_*^{1,2}) = [P^{\cdot, \cdot}(\theta_*^{1,2})]_{1,2}$, $\tilde P^1(\theta_*^{1})$, and $\tilde P^2(\theta_*^{2})$ are the unique PSD solutions to the following Ricatti equations:
\begin{align}
P(\theta_*^{1,2}) &=  \mathcal{R} (P(\theta_*^{1,2}),Q,R,A_*,B_*),
\label{eq:P_infinite_appendix}
\\
\tilde P^1(\theta_*^1) &=  \mathcal{R} ( \tilde P^1(\theta_*^{1}) , Q^{11},R^{11},A_*^{1},B_*^{1}), 
\quad 
\tilde P^2(\theta_*^2) =  \mathcal{R} ( \tilde P^2(\theta_*^{2}) , Q^{22},R^{22},A_*^{2},B_*^{2}).
\label{eq:tildeP_infinite_appendix}
\end{align}
The optimal control strategies are given by
\begin{align}
u^{1}_t = K^1(\theta_*^{1,2})  \begin{bmatrix}
\hat x_t^1 \\
\hat x_t^2 
\end{bmatrix}+ \tilde K^1 (\theta_*^{1}) (x_t^1 - \hat x_t^1),
\quad 
u^{2}_t = K^2(\theta_*^{1,2})  \begin{bmatrix}
\hat x_t^1 \\
\hat x_t^2 
\end{bmatrix} + \tilde K^2 (\theta_*^{2}) (x_t^2 - \hat x_t^2),
 \label{eq:opt_U_appendix}
\end{align}
where the gain matrices $K(\theta_*^{1,2}) := \begin{bmatrix}
K^1(\theta_*^{1,2}) \\ K^2(\theta_*^{1,2})
\end{bmatrix}$, $\tilde K^1(\theta_*^{1})$, and $\tilde K^2(\theta_*^{2})$ are given by
\begin{align}
K(\theta_*^{1,2}) &= \mathcal{K} (P(\theta_*^{1,2}),R,A_*,B_*),
\label{eq:K_infinite_appendix}
\\
\tilde K^1(\theta_*^{1}) &= \mathcal{K} (\tilde P^1 (\theta_*^{1}) ,R^{11},A_*^{1},B_*^{1}),
\quad
\tilde K^2(\theta_*^{2}) = \mathcal{K} (\tilde P^1 (\theta_*^{2}) ,R^{22},A_*^{2},B_*^{2}).
\label{eq:tildeK_infinite_appendix}
\end{align}
Furthermore $\hat x_t^n = \ee[x_t^n|h^c_t, \theta_*^{1,2}]$, $n \in \{1,2\}$, is the estimate (conditional expectation) of $x_t^n$ based on the information $h^c_t$ which is common among the agents, that is, $h_t^c = h_t^1 \cap h_t^2$.  The estimates $\hat x_t^n$, $n \in \{1,2\}$, can be computed recursively according to
\begin{align}
&\hat x_0^n =  x_0^n, 
\quad \quad 
\hat x_{t+1}^n=  
\begin{cases}
    x_{t+1}^n     & \text{if } \gamma^n=1    \\
     A_*^{n} \hat x_t^n + B_*^{n} K^{n}(\theta_*^{1,2}) \vecc(\hat x_t^1, \hat x_t^2)   & \text{otherwise}
  \end{cases}.
\label{eq:estimator_t_infinite_appendix}
\end{align}

\end{lemma}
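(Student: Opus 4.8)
The plan is to invoke the common-information (coordinator) reformulation to convert this decentralized LQ problem into an equivalent centralized one, and then to exploit the decoupled linear dynamics and the quadratic-Gaussian structure to obtain linear optimal strategies together with the claimed Riccati characterization. Since the result is attributed to \citet{ouyang2018optimal}, I would reconstruct the argument rather than recompute the gains by hand.

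First I would isolate the common information $h_t^c = h_t^1 \cap h_t^2$ and the private information of each agent, define the common-information estimates $\hat x_t^n = \ee[x_t^n \mid h_t^c, \theta_*^{1,2}]$, and introduce the estimation errors $e_t^n := x_t^n - \hat x_t^n$. A fictitious coordinator that observes only $h_t^c$ and issues prescriptions $u_t^n = g_t^n(\text{private info}_t^n)$ turns the decentralized problem into a centralized POMDP whose information state is the common-information conditional law. The goal of this reduction is to expose a tractable, single-agent optimization over prescriptions.

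The next step exploits the fact that the two systems are dynamically decoupled, so that $x_t^n$ depends only on agent $n$'s own past inputs and noises. When $\gamma^n = 1$ the state is fully shared, so $\hat x_t^n = x_t^n$ and $e_t^n \equiv 0$; when $\gamma^n = 0$ the estimate obeys the deterministic recursion \eqref{eq:estimator_t_infinite_appendix} and the error obeys the autonomous recursion $e_{t+1}^n = (A_*^n + B_*^n \tilde K^n) e_t^n + w_t^n$. The crucial structural property I would establish is that this error process is unaffected by the control (no dual effect) and that $e_t^n$ is orthogonal to every function of $h_t^c$, in particular to $\hat x_t^n$. Substituting the decomposition $x_t = \hat x_t + e_t$ (with $\hat x_t$ and $e_t$ suitably stacked) into the quadratic cost \eqref{Model:cost} and using this orthogonality to kill the cross terms in expectation, the per-step expected cost splits into a term quadratic in $\hat x_t$ and the common part of the control plus a control-free term quadratic in $e_t$.

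Finally I would solve the two resulting subproblems. The $\hat x_t$-term is a centralized LQ problem with dynamics $A_*, B_*$ and cost matrices $Q, R$; its driving noise is $w_t^n$ on the shared blocks and zero on the unshared ones, so the noise covariance is $\textbf{diag}(\gamma^1 \mathbf{I}, \gamma^2 \mathbf{I})$. Its optimal feedback is $K(\theta_*^{1,2}) = \mathcal{K}(P, R, A_*, B_*)$ with $P$ the stabilizing solution of \eqref{eq:P_infinite_appendix}, contributing $\sum_n \gamma^n \tr(P^{nn})$ to the average cost. The $e_t$-term decouples across systems and is present only when $\gamma^n = 0$; it is minimized by the local gain $\tilde K^n = \mathcal{K}(\tilde P^n, R^{nn}, A_*^n, B_*^n)$ with $\tilde P^n$ solving \eqref{eq:tildeP_infinite_appendix}, contributing $(1-\gamma^n)\tr(\tilde P^n)$. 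Adding the two gives \eqref{eq:opcost_infinite_appendix}, and combining the two feedback laws gives the strategies \eqref{eq:opt_U_appendix}. I expect the main obstacle to be the rigorous justification of the centralized reduction itself: proving that the optimal prescriptions are linear and that the coordinator's dynamic-programming value function stays quadratic at every stage (by backward induction), together with the no-dual-effect separation argument for the chosen information structure (nested under one-way sharing, fully decoupled under no sharing). The passage to infinite horizon then relies on Assumption \ref{assum:det_stb} to ensure the Riccati recursions converge to their unique stabilizing PSD solutions and that the average cost is well defined.
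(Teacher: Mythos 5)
The paper contains no proof of this lemma --- it is imported verbatim from \citet{ouyang2018optimal}, with that citation serving as the entire justification. Your reconstruction via the common-information coordinator (the decomposition $x_t = \hat x_t + e_t$ into common estimate and control-independent error, orthogonality to eliminate cross terms, a centralized Riccati problem for $\hat x_t$ with noise covariance $\textbf{diag}(\gamma^1\mathbf{I},\gamma^2\mathbf{I})$ contributing $\sum_n \gamma^n \tr(P^{nn}(\theta_*^{1,2}))$, and decoupled local Riccati problems for the errors contributing $\sum_n (1-\gamma^n)\tr(\tilde P^n(\theta_*^n))$, with the flagged cruxes being linearity of prescriptions and the no-dual-effect property) is precisely the argument of that cited work and of the common-information framework the paper itself invokes in Section \ref{sec:centralized_LQ}, so your proposal is correct and follows essentially the same route.
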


\begin{lemma}[\citet{kumar2015stochastic,bertsekas1995dynamic}]
\label{lm:opt_strategies_centralized_appendix}
Under Assumption \ref{assum:det_stb}, the optimal infinite horizon cost $J^{\diamond}(\theta_*^{1,2})$ is given by $J^{\diamond}(\theta_*^{1,2})  = \tr ([P(\theta_*^{1,2})]_{1,1})$ where $P(\theta_*^{1,2})$ is as defined in \eqref{eq:P_infinite_appendix}. Furthermore, the optimal strategy $\pi^{\diamond *}$ is given by $u_t^{\diamond} = K(\theta_*^{1,2}) x_t^{\diamond}$ where $K(\theta_*^{1,2})$ is as defined in \eqref{eq:K_infinite_appendix}.
\end{lemma}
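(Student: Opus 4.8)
The plan is to recognize that the auxiliary \texttt{SARL} problem of Section~\ref{sec:centralized_LQ} is nothing more than a standard single-agent infinite-horizon average-cost LQ regulator driven by additive Gaussian process noise, and then to invoke classical LQR theory together with a short block-trace computation. The system matrices $A_*$, $B_*$, the cost matrices $Q$, $R$, and the stabilizability/detectability conditions are literally those of Assumption~\ref{assum:det_stb}, so the hypotheses of the standard result transfer verbatim; the only feature specific to this problem is the degenerate noise covariance $W = \textbf{diag}(\mathbf{I}_{d_x^1}, \mathbf{0})$ induced by the noise vector $\vecc(w_t^1, \mathbf{0})$ in \eqref{Model:system_LQ}.

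First I would record the classical facts for the LQ problem $x_{t+1}^{\diamond} = A_* x_t^{\diamond} + B_* u_t^{\diamond} + \omega_t$ with $\cov(\omega_t) = W$ and cost \eqref{Model:cost_LQ}: under Assumption~\ref{assum:det_stb} the discrete-time algebraic Riccati equation $P = \mathcal{R}(P,Q,R,A_*,B_*)$ admits a unique PSD solution $P = P(\theta_*^{1,2})$, the closed-loop matrix $A_* + B_* K(\theta_*^{1,2})$ is stable with $K(\theta_*^{1,2}) = \mathcal{K}(P,R,A_*,B_*)$, and the optimal average cost equals $\tr(P W)$ \citep{kumar2015stochastic,bertsekas1995dynamic}. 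The cleanest self-contained justification is a verification argument via the average-cost Bellman equation: posit the differential value function $h(x) = x^\tp P x$ and a constant $J^{\diamond}$, then expand $\min_u \{ x^\tp Q x + u^\tp R u + \ee[ h(A_* x + B_* u + \omega) ] \}$ using $\ee[\omega] = \mathbf{0}$ and $\ee[\omega^\tp P \omega] = \tr(PW)$, carry out the unconstrained minimization over $u$ (whose minimizer is exactly $u = K(\theta_*^{1,2}) x$), and match coefficients. Matching the quadratic-in-$x$ terms forces the Riccati identity $P = \mathcal{R}(P,Q,R,A_*,B_*)$, while matching the constant terms yields $J^{\diamond} = \tr(PW)$; stability of the closed loop makes the verification rigorous and shows no admissible policy can achieve a smaller average cost.

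Finally I would evaluate the constant for the problem at hand. Writing $P(\theta_*^{1,2}) = [P^{\cdot,\cdot}]_{1,2}$ in the block form of the Notation section, conformal with the partition of $x_t^{\diamond}$ into its system-$1$ and system-$2$ components, and using $W = \textbf{diag}(\mathbf{I}_{d_x^1}, \mathbf{0})$, one gets $PW = \begin{bmatrix} P^{11} & \mathbf{0} \\ P^{21} & \mathbf{0} \end{bmatrix}$, so that $\tr(PW) = \tr(P^{11}) = \tr([P]_{1,1})$, which is the claimed value of $J^{\diamond}(\theta_*^{1,2})$. The optimal-strategy claim $u_t^{\diamond} = K(\theta_*^{1,2}) x_t^{\diamond}$ is precisely the minimizing feedback identified in the Bellman step.

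I do not expect a genuine obstacle here, since the underlying LQR result is classical; the only points requiring care are (i) confirming that the degenerate noise $\vecc(w_t^1,\mathbf{0})$ gives covariance $W = \textbf{diag}(\mathbf{I}_{d_x^1},\mathbf{0})$ rather than full-rank noise, and (ii) reading off the block trace so that only the $(1,1)$ block of $P$ survives. Both are routine bookkeeping once the problem is cast in the standard LQR form, and the transfer of Assumption~\ref{assum:det_stb} is immediate because the auxiliary problem shares the matrices $A_*$, $B_*$, $Q$, $R$ verbatim.
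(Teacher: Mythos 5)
Your proof is correct and is essentially the same approach the paper relies on: the paper gives no proof of this lemma, citing it as classical average-cost LQR theory from \citet{kumar2015stochastic,bertsekas1995dynamic}, which is exactly what your Bellman verification argument (differential value $h(x) = x^\tp P x$, minimization yielding $u = K(\theta_*^{1,2})x$, matching terms to get the DARE and the constant $\tr(PW)$) reconstructs. The one problem-specific step the citation leaves implicit --- that the degenerate noise covariance $W = \textbf{diag}(\mathbf{I}_{d_x^1}, \mathbf{0})$ reduces the generic constant $\tr(PW)$ to $\tr\bigl([P(\theta_*^{1,2})]_{1,1}\bigr)$ --- is handled correctly by your block-trace computation.
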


\section{Details of Experiments}
\label{sec:experiments_appendix}
In this section, we illustrate the performance of the \texttt{AL-MARL} algorithm through numerical experiments. Our proposed algorithm requires a \texttt{SARL} learner. As the TS-based algorithm of \citet{faradonbeh2017regret} achieves a $\tilde O(\sqrt{T})$ regret for a \texttt{SARL} problem, we use the \texttt{SARL} learner of this algorithm (The details for this \texttt{SARL} learner are presented in Appendix \ref{sec:details_SARL_learner}).



We consider an instance of the \texttt{MARL2} problem where system 1 (which is unknown to the agents), system 2 (which is known to the agents), and matrices of the cost funtion have the following parameters ({note that the unknown system and the strcuture of the matrices of the cost function are similar to the model} studied in \citet{tu2017least, abbasi2018regret, dean2017sample}) with $d_x^1= d_x^2 = d_u^1 = d_u^2 = 3$, 
\begin{align}
A^{11} = \begin{bmatrix}
1.01 & 0.01 & 0 \\
0.01 & 1.01 & 0.01 \\
0 & 0.01 & 1.01
\end{bmatrix}, \quad A^{22} = B^{11} = B^{22} = \mathbf{I}_3, \quad Q = 10^{-3} \mathbf{I}_6, \quad R = \mathbf{I}_6.
\end{align}
{We use the following parameters for the TS-based learner $\mathcal{L}$ of \citet{faradonbeh2017regret} as described in Appendix \ref{sec:details_SARL_learner}:
$V_0$ to be a $12 \times 12$ identity matrix, $\mu_0$ to be a $6 \times 12$ zero matrix, $\eta$ to be equal to 1.1.}

The theoretical result of Theorem \ref{thm:regret_bound_case2} holds when Assumption \ref{assum:seed} is true. Since we use the TS-based learner of \citet{faradonbeh2017regret}, this assumption can be satisfied by setting the same sampling seed between the agents. Here, we consider both cases of same sampling seed and arbitrary sampling seed for the experiments. We ran 100 simulations and show the mean of regret with the 95\% confidence interval for each scenario.

As it can be seen from Figure \ref{fig:MARL_1}, for both of theses cases, our proposed algorithm with the TS-based learner $\mathcal{L}$ of \citet{faradonbeh2017regret} achieves a $\tilde O(\sqrt{T})$ regret for our \texttt{MARL2} problem, which matches the theoretical results of Corollary \ref{cor:regret_bound_case}.


\section{Proof of Theorem \ref{lem:regret_bound_extension}}
\label{proof:lem:regret_bound_extension}
We prove this theorem in the case where $N_1=2$ systems (systems 1 and 2) are unknown and $N -N_1 = 2$ systems (systems 3 and 4) are known for ease of presentation. The case with general $N$ and $N_1$ will follow by the same arguments. We assume that there exist communication links from agents 1 and 2 to all other agents, but there is no communication link from agents 3 and 4 to the other agents. Let $[N]:= \{1,2,3,4\}$.

In this problem, the linear dynamics of system $n \in [N]$ are given by
\begin{align}
x_{t+1}^n = A_*^{n} x_t^n + B_*^{n} u_t^n + w_t^n
\label{Model:dynamics_ext}
\end{align}
where for $n \in \mathcal{N}$, $x_t^n \in \R^{d_x^n}$ is the state of system $n$ and $u_t^n \in \R^{d_u^n}$ is the action of agent $n$. $A_*^{n}$ and $B_*^{n}$, $n \in [N]$, are system matrices with appropriate dimensions. We assume that for $n \in [N]$, $w_t^n$, $t \geq0$, are i.i.d with standard Gaussian distribution $\mathcal{N}(\mathbf{0}, \mathbf{I})$. The initial states $x_0^{n}$, $n \in [N]$, are assumed to be fixed and known.

The overall system dynamics can be written as,
\begin{align}
&x_{t+1} = A_* x_t + B_* u_t + w_t,
\label{Model:system_overall_ext}
\end{align}
where we have defined $x_t = \vecc(x_t^1,\ldots, x_t^4), u_t = \vecc(u_t^1, \ldots, u_t^4), w_t = \vecc(w_t^1, \ldots, w_t^4)$, $A_* = \textbf{diag}(A_*^1, \ldots, A_*^4)$, and $B_* = \textbf{diag}(B_*^1,\ldots, B_*^4)$.

At each time $t$, agent $n$'s action is a function $\pi_t^n$ of its information $h_t^n$, that is, $u_t^n = \pi_t^n(h_t^n)$ where $h_t^1 = \{x_{0:t}^1, u_{0:t-1}^1\}$, $h_t^2 = \{x_{0:t}^2, u_{0:t-1}^2\}$, $h_t^3 = \{x_{0:t}^3, u_{0:t-1}^3, x_{0:t}^{1,2}\}$, and $h_t^3 = \{x_{0:t}^4, u_{0:t-1}^4, x_{0:t}^{1,2}\}$. Let $\pi = (\pi^1, \ldots, \pi^4)$ where $\pi^n = (\pi_0^n, \pi_1^n, \ldots)$. 

At time $t$, the system incurs an instantaneous cost $c(x_t,u_t)$, which is a quadratic function given by
\begin{align}
c(x_t,u_t) &=  x_t^\tp Q x_t + u_t^\tp R u_t, 
\label{Model:cost_ext}
\end{align}
where $Q = [Q^{\cdot, \cdot}]_{1:4}$ is a known symmetric positive semi-definite (PSD) matrix and $R = [R^{\cdot, \cdot}]_{1:4}$ is a known symmetric positive definite (PD) matrix. 

\subsection{The optimal multi-agent linear-quadratic problem}
\label{sec:optimal_dec_LQ_ext}
Let $\theta_*^n = [A_*^n,B_*^n]$ be the dynamics parameter of system $n$, $n \in [N]$. 
When $\theta_*^{1:4}$ are perfectly known to the agents, minimizing the infinite horizon average cost  is a multi-agent stochastic Linear Quadratic (LQ) control problem. Let $J(\theta_*^{1:4})$ be the optimal infinite horizon average cost under $\theta_*^{1:4}$, that is,
\begin{align}
J(\theta_*^{1:4}) = \inf_{\pi} \limsup_{T\rightarrow\infty} \frac{1}{T} \sum_{t=0}^{T-1} \ee^{\pi}[c(x_t,u_t) | \theta_*^{1:4}].
\label{eq:optimal_cost_decentralized_ext}
\end{align}

The above decentralized stochastic LQ problem has been studied by \cite{ouyang2018optimal}. The following lemma summarizes this result.
\begin{lemma}
\label{lm:opt_strategies_ext}
Under Assumption \ref{assum:det_stb}, the optimal infinite horizon cost $J(\theta_*^{1:4}) $ is given by
\begin{align}
J(\theta_*^{1:4}) = & \sum_{n=1}^2 \tr \Big(P^{nn}(\theta_*^{1:4})  \Big) + 
\sum_{n=3}^4 \tr \Big( \tilde P^n (\theta_*^n)  \Big),
\label{eq:opcost_infinite_ext}
\end{align}
where $P(\theta_*^{1:4}) = [P^{\cdot, \cdot}(\theta_*^{1:4})]_{1:4}$, $\tilde P^3(\theta_*^{3})$, and $\tilde P^4(\theta_*^{4})$ are the unique PSD solutions to the following Ricatti equations:
\begin{align}
P(\theta_*^{1:4}) &=  \mathcal{R} (P(\theta_*^{1:4}),Q,R,A_*,B_*),
\label{eq:P_infinite_ext}
\\
\tilde P^3(\theta_*^4) &=  \mathcal{R} ( \tilde P^3(\theta_*^{3}) , Q^{33},R^{33},A_*^{3},B_*^{3}), 
\quad 
\tilde P^4(\theta_*^4) =  \mathcal{R} ( \tilde P^4(\theta_*^{4}) , Q^{44},R^{44},A_*^{4},B_*^{4}).
\label{eq:tildeP_infinite_ext}
\end{align}
The optimal control strategies are given by
\begin{align}
u^{1}_t &= K^1(\theta_*^{1:4})  \begin{bmatrix}
x_t^1 \\
x_t^2 \\
\hat x_t^3 \\
\hat x_t^4 
\end{bmatrix},
\quad 
u^{2}_t = K^2(\theta_*^{1:4})  \begin{bmatrix}
x_t^1 \\
x_t^2 \\
\hat x_t^3 \\
\hat x_t^4 
\end{bmatrix}, \notag \\
u^{3}_t &= K^3(\theta_*^{1:4})  \begin{bmatrix}
x_t^1 \\
x_t^2 \\
\hat x_t^3 \\
\hat x_t^4 
\end{bmatrix}+ \tilde K^3 (\theta_*^{3}) (x_t^3 - \hat x_t^3),
\quad 
u^{4}_t &= K^4(\theta_*^{1:4})  \begin{bmatrix}
x_t^1 \\
x_t^2 \\
\hat x_t^3 \\
\hat x_t^4 
\end{bmatrix}+ \tilde K^4 (\theta_*^{4}) (x_t^4 - \hat x_t^4),
 \label{eq:opt_U_ext}
\end{align}
where the gain matrices $K(\theta_*^{1:4}) := \begin{bmatrix}
K^1(\theta_*^{1:4}) \\ K^2(\theta_*^{1:4}) \\ K^3(\theta_*^{1:4}) \\ K^4(\theta_*^{1:4})
\end{bmatrix}$, $\tilde K^3(\theta_*^{3})$, and $\tilde K^4(\theta_*^{4})$ are given by
\begin{align}
K(\theta_*^{1:4}) &= \mathcal{K} (P(\theta_*^{1:4}),R,A_*,B_*),
\label{eq:K_infinite_ext}
\\
\tilde K^3(\theta_*^{3}) &= \mathcal{K} (\tilde P^3 (\theta_*^{3}) ,R^{33},A_*^{3},B_*^{3}),
\quad
\tilde K^4(\theta_*^{4}) = \mathcal{K} (\tilde P^4 (\theta_*^{4}) ,R^{44},A_*^{4},B_*^{4}).
\label{eq:tildeK_infinite_ext}
\end{align}
Furthermore $\hat x_t^n = \ee[x_t^n|h^c_t, \theta_*^{1:4}]$, $n \in \{3,4\}$, is the estimate (conditional expectation) of $x_t^n$ based on the information $h^c_t$ which is common among all the agents.  The estimates $\hat x_t^n$, $n \in \{3,4\}$, can be computed recursively according to
\begin{align}
&\hat x_0^n =  x_0^n, 
\quad \quad 
\hat x_{t+1}^n=  A_*^{n} \hat x_t^n + B_*^{n} K^{n}(\theta_*^{1:4}) \vecc(x_t^1,x_t^2, \hat x_t^3, \hat x_t^4).
\label{eq:estimator_t_infinite_ext}
\end{align}

\end{lemma}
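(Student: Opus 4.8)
The plan is to establish Lemma \ref{lm:opt_strategies_ext} by the common-information (coordinator) approach, mirroring the two-agent Lemma \ref{lm:opt_strategies_appendix} with systems $1,2$ playing the role of the broadcasting ($\gamma=1$) systems and systems $3,4$ playing the role of the private ($\gamma=0$) systems. First I would identify the common information $h_t^c = \bigcap_{n\in[N]} h_t^n$, which consists of the broadcast states $x_{0:t}^{1,2}$ together with the past controls, and the private information of agent $n\in\{3,4\}$, namely its own state $x_t^n$. Define the common-information estimates $\hat x_t^n = \ee[x_t^n \mid h_t^c, \theta_*^{1:4}]$; for the observed systems $\hat x_t^n = x_t^n$ ($n\in\{1,2\}$), while for the private systems $\hat x_t^{3,4}$ are nontrivial conditional expectations. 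Reformulating the decentralized problem as a coordinator that observes $h_t^c$ and issues prescriptions mapping private states to actions turns it into an equivalent centralized problem; because the dynamics are linear, the cost quadratic, and the noise Gaussian, the optimal prescriptions are linear.

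The second step is the separation of the control into a common part and an error-correction part. I would show that the optimal control takes the form $u_t^n = K^n(\theta_*^{1:4})\,\vecc(x_t^1,x_t^2,\hat x_t^3,\hat x_t^4) + \tilde K^n(\theta_*^n)(x_t^n - \hat x_t^n)$, where the second term is present only for the private systems $n\in\{3,4\}$. Substituting the optimal control into the decoupled dynamics \eqref{Model:system_overall_ext} and using $A_* = \textbf{diag}(A_*^{1:4})$, $B_* = \textbf{diag}(B_*^{1:4})$, I would verify that the estimation errors $e_t^n := x_t^n - \hat x_t^n$ for $n\in\{3,4\}$ evolve as autonomous, mutually independent recursions $e_{t+1}^n = (A_*^n + B_*^n \tilde K^n(\theta_*^n)) e_t^n + w_t^n$, driven only by their own noise and decoupled from the common estimate $\hat x_t := \vecc(x_t^1,x_t^2,\hat x_t^3,\hat x_t^4)$ and from each other; the recursion \eqref{eq:estimator_t_infinite_ext} for $\hat x_t^{3,4}$ is exactly the noiseless prediction of the private states using the common-information control. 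The block-diagonality of $A_*,B_*$ is what forces this decoupling and is the crux of the argument.

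The third step is the cost decomposition. Using the orthogonality $\ee[\hat x_t^\tp M\, e_t^n]=0$ (the common estimate is $h_t^c$-measurable while $e_t^n$ is a zero-mean innovation), the per-stage quadratic cost splits into a term quadratic in $\hat x_t$ plus terms quadratic in each $e_t^n$. The $\hat x_t$-term is precisely the cost of the centralized full-system LQ problem with dynamics $(A_*,B_*)$ and weights $(Q,R)$, whose value function is governed by $P(\theta_*^{1:4}) = \mathcal{R}(P,Q,R,A_*,B_*)$ and gain $K = \mathcal{K}(P,R,A_*,B_*)$; each $e_t^n$-term is the cost of the decoupled single-system problem governed by $\tilde P^n(\theta_*^n) = \mathcal{R}(\tilde P^n, Q^{nn}, R^{nn}, A_*^n, B_*^n)$ with gain $\tilde K^n = \mathcal{K}(\tilde P^n, R^{nn}, A_*^n, B_*^n)$. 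Existence, uniqueness, and positive semidefiniteness of these Riccati solutions, together with stability of the closed loops, follow from Assumption \ref{assum:det_stb}: stabilizability of $(A_*,B_*)$ and detectability of $(A_*,Q^{1/2})$ transfer to each diagonal block $(A_*^n,B_*^n)$ and $(A_*^n,(Q^{nn})^{1/2})$ by block-diagonality (cf.\ the argument in Appendix \ref{proof_lem:Sigma_convergence}).

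Finally I would assemble the steady-state average cost. The common estimate $\hat x_t$ is injected with noise only in its systems-$1,2$ components (the private predictions are noiseless), i.e.\ with covariance $\textbf{diag}(\mathbf{I},\mathbf{I},\mathbf{0},\mathbf{0})$, so the standard steady-state formula gives a common cost of $\tr\!\big(P\,\textbf{diag}(\mathbf{I},\mathbf{I},\mathbf{0},\mathbf{0})\big) = \sum_{n=1}^2 \tr(P^{nn}(\theta_*^{1:4}))$, while each private error process $e_t^n$ is driven by unit-covariance noise and contributes $\tr(\tilde P^n(\theta_*^n))$. Summing yields the claimed cost \eqref{eq:opcost_infinite_ext}. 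The main obstacle is the rigorous justification of Step 2 --- that the optimal prescriptions are linear and that the private estimation errors are controllable by per-system gains and decouple from the common estimate --- which is where the common-information machinery of \citet{nayyar2013decentralized,ouyang2018optimal} does the real work; once the decoupling is in place, the remaining steps are routine infinite-horizon LQR calculations.
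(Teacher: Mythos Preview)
Your proposal is correct, but there is nothing in the paper to compare it against: the paper does not prove this lemma. It is stated (like the two-agent Lemma~\ref{lm:opt_strategies_appendix}) as a direct citation of \citet{ouyang2018optimal}, with no accompanying argument.

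That said, the common-information coordinator reduction you sketch is precisely the machinery of the cited reference, so you have effectively reconstructed the proof the paper defers to. Your identification of the crux is accurate: block-diagonality of $A_*,B_*$ is what forces the private errors $e_t^3,e_t^4$ to decouple from the common estimate $\hat x_t=\vecc(x_t^1,x_t^2,\hat x_t^3,\hat x_t^4)$ and from each other (each $e_t^n$ is driven only by $w^n$, while $\hat x_t$ is driven only by $w^{1},w^{2}$), so that the expected cost splits into a full-system LQR on $\hat x_t$ with noise covariance $\textbf{diag}(\mathbf{I},\mathbf{I},\mathbf{0},\mathbf{0})$ plus two independent single-system LQRs on $e_t^3,e_t^4$. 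The steady-state costs $\sum_{n=1}^2\tr(P^{nn})$ and $\sum_{n=3}^4\tr(\tilde P^n)$ then follow from the standard $\tr(PW)$ formula. One small remark on your transfer-of-detectability claim: $Q$ is not block-diagonal, so the inheritance is not entirely trivial, but it does go through via the PBH test since an unstable eigenvector of $A_*^n$ embeds as $\vecc(0,\ldots,x^n,\ldots,0)$ into an unstable eigenvector of $A_*$, and $x^\tp Q x$ then reduces to $(x^n)^\tp Q^{nn}x^n$.
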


\subsection{The multi-agent reinforcement learning problem}
The problem we are interested in is to minimize the infinite horizon average cost when the system parameters $\theta_*^1 = [A_*^{1}, B_*^1]$ and $\theta_*^2 = [A_*^{2}, B_*^2]$ are unknown and $\theta_*^3 = [A_*^{3}, B_*^3]$ and $\theta_*^4 = [A_*^{4}, B_*^4]$ are known. For future reference, we call this problem \texttt{MARL3}. In this case, the learning performance of policy $\pi$ is measured by the cumulative regret over $T$ steps defined as,
\begin{align}
R(T, \pi) = \sum_{t=0}^{T-1} \left[ c(x_t, u_t) - J(\theta_*^{1:4}) \right].
\label{eq:regret_decentralized_ext}
\end{align} 

\subsection{A single-agent LQ problem}
\label{sec:centralized_LQ_ext}
In this section, we construct an auxiliary single-agent LQ control problem. This auxiliary single-agent LQ control problem will be used later as a coordination mechanism for our \texttt{MARL} algorithm.

Consider a single-agent system with dynamics 
\begin{align}
x_{t+1}^{\diamond} = A_* x_t^{\diamond} + B_* u_t^{\diamond} + \begin{bmatrix}
w_t^1\\ w_t^2 \\ \mathbf{0} \\ \mathbf{0}
\end{bmatrix},
\label{Model:system_LQ_ext}
\end{align}
where $x_t^{\diamond} \in \R^{d_x^1 + d_x^2 + d_x^3 + d_x^4}$ is the state of the system, $u_t^{\diamond} \in  \R^{d_u^1 + d_u^2 + d_u^3 + d_u^4}$ is the action of the auxiliary agent, $w_t^n$, $n \in \{1,2\}$, is the noise vector of system $n$ defined in \eqref{Model:dynamics_ext}, and matrices $A_*$ and $B_*$ are as defined in \eqref{Model:system_overall_ext}. The initial state $x_0^{\diamond}$ is assumed to be equal to $x_0$. The action $u_t^{\diamond} = \pi_t^{\diamond} (h_t^{\diamond})$ at time $t$ is a function of the history of observations $h_t^{\diamond} = \{x_{0:t}^{\diamond}, u_{0:t-1}^{\diamond}\}$. The auxiliary agent's strategy is denoted by $\pi^{\diamond} = (\pi_1^{\diamond}, \pi_2^{\diamond},\ldots)$. The instantaneous cost $c(x_t^{\diamond},u_t^{\diamond})$ of the system is a quadratic function given by
\begin{align}
c(x_t^{\diamond}, u_t^{\diamond}) &= 
(x_t^{\diamond})^\tp Q x_t^{\diamond} + (u_t^{\diamond})^\tp R u_t^{\diamond},
\label{Model:cost_LQ_ext}
\end{align}
where matrices $Q$ and $R$ are as defined in \eqref{Model:cost_ext}.

When $\theta_*^{1:4}$ are known to the auxiliary agent, minimizing the infinite horizon average cost is a single-agent stochastic Linear-Quadratic (LQ) control problem. Let $J^{\diamond}(\theta_*^{1:4})$ be the optimal infinite horizon average cost under $\theta_*^{1:4}$, that is,
\begin{align}
J^{\diamond}(\theta_*^{1:4}) = \inf_{\pi^{\diamond}} \limsup_{T\rightarrow\infty} \frac{1}{T} \sum_{t=0}^{T-1} \ee^{\pi^{\diamond}}[c(x_t^{\diamond}, u_t^{\diamond}) | \theta_*^{1:4}].
\label{eq:optimal_cost_centralized_ext}
\end{align}
Then, the following lemma summarizes the result for the optimal infinite horizon single-agent LQ control problem.
\begin{lemma}[\cite{kumar2015stochastic,bertsekas1995dynamic}]
\label{lm:opt_strategies_centralized_ext}
Under Assumption \ref{assum:det_stb}, the optimal infinite horizon cost $J^{\diamond}(\theta_*^{1:4})$ is given by $J^{\diamond}(\theta_*^{1,2})  = \tr ([P(\theta_*^{1:4})]_{1,1}) + \tr ([P(\theta_*^{1:4})]_{2,2})$ where $P(\theta_*^{1:4})$ is as defined in \eqref{eq:P_infinite_ext}. Furthermore, the optimal strategy $\pi^{\diamond *}$ is given by $u_t^{\diamond} = K(\theta_*^{1:4}) x_t^{\diamond}$ where $K(\theta_*^{1:4})$ is as defined in \eqref{eq:K_infinite_ext}.
\end{lemma}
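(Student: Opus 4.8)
The plan is to treat the auxiliary system \eqref{Model:system_LQ_ext}--\eqref{Model:cost_LQ_ext} as a standard infinite-horizon average-cost linear-quadratic-Gaussian problem and to specialize the classical solution (as in \citet{kumar2015stochastic,bertsekas1995dynamic}) to this particular noise structure. The only feature distinguishing it from the textbook setting is that the driving noise $\vecc(w_t^1, w_t^2, \mathbf{0}, \mathbf{0})$ has a \emph{degenerate} covariance, so the substance of the argument is (i) checking that the classical machinery still applies and (ii) carrying out the resulting trace computation. This is the direct analogue of Lemma \ref{lm:opt_strategies_centralized_appendix} for the $N$-system setting.

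First I would verify the structural hypotheses needed for the classical theory. Assumption \ref{assum:det_stb} states that $(A_*, B_*)$ is stabilizable and $(A_*, Q^{1/2})$ is detectable. These are exactly the conditions under which the discrete-time algebraic Riccati equation $P = \mathcal{R}(P, Q, R, A_*, B_*)$ admits a unique positive semidefinite solution $P(\theta_*^{1:4})$ that is stabilizing, i.e. $A_* + B_* K(\theta_*^{1:4})$ is stable with $K(\theta_*^{1:4}) = \mathcal{K}(P(\theta_*^{1:4}), R, A_*, B_*)$ as in \eqref{eq:K_infinite_ext}. Invoking the standard verification theorem for average-cost LQG then gives that $u_t^{\diamond} = K(\theta_*^{1:4}) x_t^{\diamond}$ is optimal, which settles the ``furthermore'' claim.

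Next I would identify the optimal average cost. Writing the relative value (bias) function as $h(x) = x^\tp P(\theta_*^{1:4}) x$, the average-cost Bellman equation reads $\lambda + x^\tp P x = \min_u \{ x^\tp Q x + u^\tp R u + \ee[(A_* x + B_* u + w)^\tp P (A_* x + B_* u + w)] \}$, where $w = \vecc(w_t^1, w_t^2, \mathbf{0}, \mathbf{0})$. Because $\ee[w] = \mathbf{0}$, the cross term vanishes and the noise contributes only through $\ee[w^\tp P w] = \tr(P W)$ with $W = \cov(w)$; the minimization over $u$ reproduces the Riccati recursion and cancels $x^\tp P x$ on both sides, leaving $\lambda = \tr(P W)$. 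Since only systems $1$ and $2$ are excited, $W = \textbf{diag}(\mathbf{I}_{d_x^1}, \mathbf{I}_{d_x^2}, \mathbf{0}, \mathbf{0})$, and because $W$ is block diagonal the diagonal block of $PW$ at position $(i,i)$ is $[P]_{i,i}[W]_{i,i}$, so $\tr(P W) = \sum_{i=1}^4 \tr([P]_{i,i} [W]_{i,i}) = \tr([P(\theta_*^{1:4})]_{1,1}) + \tr([P(\theta_*^{1:4})]_{2,2})$, which is the claimed formula.

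The main obstacle is rigor around the degenerate noise rather than the algebra. I would argue that the $\limsup$ in \eqref{eq:optimal_cost_centralized_ext} is in fact a genuine limit and equals $\lambda$: under the stabilizing feedback the closed-loop matrix $A_* + B_* K$ has spectral radius below $1$, so the deterministic components $x_t^{\diamond,3}, x_t^{\diamond,4}$ (which see no noise) decay geometrically and contribute nothing asymptotically, while the state covariance converges to a fixed point, making the per-stage expected cost converge. This also explains intuitively why systems $3$ and $4$ drop out of the cost. Finally, I would confirm the value is independent of the fixed initial state $x_0^{\diamond} = x_0$, since any transient from the initial condition is washed out by the $1/T$ averaging.
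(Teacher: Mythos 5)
Your proposal is correct and takes essentially the same route as the paper: the paper states this lemma without its own proof, citing the classical average-cost LQG theory of \citet{kumar2015stochastic} and \citet{bertsekas1995dynamic}, and your argument is precisely the standard specialization of that theory---existence of the unique stabilizing PSD solution of the DARE under Assumption \ref{assum:det_stb}, the Bellman-equation identity $\lambda = \tr(PW)$, and the block computation $\tr(PW) = \tr([P(\theta_*^{1:4})]_{1,1}) + \tr([P(\theta_*^{1:4})]_{2,2})$ for the degenerate noise covariance $W = \textbf{diag}(\mathbf{I},\mathbf{I},\mathbf{0},\mathbf{0})$. Your extra care about the degenerate noise and about the $\limsup$ being a true limit under the stabilizing feedback simply makes explicit the details the cited textbooks already cover, so nothing is missing.
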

When the actual parameters $\theta_*^{1:4}$ are unknown, this single-agent stochastic LQ control problem becomes a Single-Agent Reinforcement Learning (\texttt{SARL2}) problem. We define the regret of a policy $\pi^{\diamond}$ over $T$ steps compared with the optimal infinite horizon cost $J^{\diamond}(\theta_*^{1:4})$ to be
\begin{align}
R^{\diamond}(T, \pi^{\diamond}) =  \sum_{t=0}^{T-1} \left[ c(x_t^{\diamond}, u_t^{\diamond}) - J^{\diamond}(\theta_*^{1:4})  \right].
\label{eq:regret_centralized_ext}
\end{align} 

Similar to Section \ref{sec:centralized_LQ}, we can generally describe the existing proposed algorithms for the \texttt{SARL2} problem as \texttt{AL-SARL2} algorithm with the \texttt{SARL2} learner $\mathcal{L}$ of Figure \ref{fig:SystemModel_ext}.

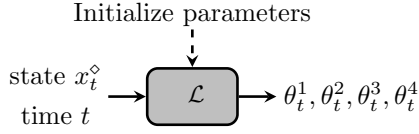
\begin{figure}
\begin{center}
\begin{tikzpicture}
\node [rectangle,draw,minimum width=1.2cm,minimum height=0.75cm,line width=1pt,rounded corners, fill=black!25]at (-1.0,0) (1) {
\begin{small}
\begin{tabular}{c}
$\mathcal{L}$
\end{tabular}
\end{small}}; 
\path[thick,->,>=stealth, dashed, line width=1pt]
	($(1.north) + (0,0.5)$) edge node{} ($(1.north) + (0,0)$)
	;
\path[thick,->,>=stealth,line width=1pt]
	($(1.west) + (-0.5,0)$) edge node{} ($(1.west) + (0,0)$)
	($(1.east) + (0,0)$)	 edge node{} ($(1.east) + (0.5,0)$)
	;

\node[] at ($(1.north) + (0,0.7)$) {Initialize parameters};
\node[] at ($(1.west) + (-1.2,0.25)$) {state $x_t^{\diamond}$};
\node[] at ($(1.west) + (-1.2,-0.25)$) {time $t$};
\node[] at ($(1.east) + (1.5,0)$) {$\theta_t^1, \theta_t^2, \theta_t^3, \theta_t^4$};

\end{tikzpicture}
\caption{\texttt{SARL2} learner as a block box.}
\label{fig:SystemModel_ext}
\end{center}
\end{figure}

\begin{algorithm}[tb]
   \caption{\texttt{AL-SARL2}} 
   \label{alg:TS_centralized_ext}
\begin{algorithmic}
  \State Initialize $\mathcal{L}$ and $x_0^{\diamond}$
   \For{$t=0,1,\ldots$}
\State Feed time $t$ and state $x_t^{\diamond}$ to $\mathcal{L}$ and get $\theta_t^{1:4}$
   \State Compute $K(\theta_t^{1:4})$ from \eqref{eq:K_infinite_ext} and execute $u_t^{\diamond} = K(\theta_t^{1:4}) x_t^{\diamond}$
   \State Observe new state $x_{t+1}^{\diamond}$
   \EndFor
\end{algorithmic}
\end{algorithm}

\subsection{An algorithm for the \texttt{MARL3} problem}
In this Section, we propose the \texttt{AL-MARL2} algorithm based on the \texttt{AL-SARL2} algorithm. 
\begin{algorithm}[H]
   \caption{\texttt{AL-MARL2}} 
   \label{alg:TS_decentralized_ext}
\begin{algorithmic}
\State \textbf{Input:} \texttt{agent_ID}, $x_0^1$, $x_0^2$, $x_0^3$, and $x_0^4$
  \State Initialize $\mathcal{L}$, $\check x_0^3 = x_0^3$ and $\check x_0^4 = x_0^4$
   \For{$t=0,1,\ldots$}
\State Feed time $t$ and state $\vecc(x_t^1, x_t^2, \check x_t^3, \check x_t^4)$ to $\mathcal{L}$ and get $\theta_t^{1:4}$
 \State Compute $K^{\texttt{agent_ID}}(\theta_t^1, \theta_t^2, \theta_*^3, \theta_*^4)$
   \If{$\texttt{agent_ID}=1,2$}
   \State Execute $u_t^{\texttt{agent_ID}}= K^{\texttt{agent_ID}}(\theta_t^1, \theta_t^2, \theta_*^3, \theta_*^4) \vecc(x_t^1, x_t^2, \check x_t^3, \check x_t^4)$
   \Else
  \State Execute $u_t^{\texttt{agent_ID}}= K^{\texttt{agent_ID}}(\theta_t^1, \theta_t^2, \theta_*^3, \theta_*^4) \vecc(x_t^1, x_t^2, \check x_t^3, \check x_t^4)$ 
  \State \hspace{26mm} $+ \tilde K^{\texttt{agent_ID}}(\theta_*^{\texttt{agent_ID}}) (x_t^{\texttt{agent_ID}} - \check x_t^{\texttt{agent_ID}})$ 
   \EndIf
   \State  Observe new states $x_{t+1}^1$ and $x_{t+1}^2$ 
   \State Compute $\check x_{t+1}^3 = A_*^3 \check x_t^3 + B_*^3  K^3(\theta_t^1, \theta_t^2, \theta_*^3, \theta_*^4) \vecc(x_t^1, x_t^2, \check x_t^3, \check x_t^4) $
   \State Compute $\check x_{t+1}^4 = A_*^4 \check x_t^4 + B_*^4  K^4(\theta_t^1, \theta_t^2, \theta_*^3, \theta_*^4) \vecc(x_t^1, x_t^2, \check x_t^3, \check x_t^4) $
     \If{$\texttt{agent_ID}=3$}
     \State Observe new state $x_{t+1}^3$
     \EndIf
     \If{$\texttt{agent_ID}=4$}
     \State Observe new state $x_{t+1}^4$
     \EndIf
   \EndFor
\end{algorithmic}
\end{algorithm}

\subsection{The regret bound for the \texttt{AL-MARL2} algorithm}
As in the proof of Theorem \ref{thm:regret_bound_case2} in Appendix \ref{sec:proof_theorem_2}, we first state some preliminary results in the following lemmas which will be used in the proof of Theorem \ref{thm:regret_bound_case2}. {Note that these lemmas are essentially the same as Lemma \ref{lem:Sigma_convergence} and Lemma \ref{thm:prob_regret}. They  have been rewritten below to be compatible with the notation of the \texttt{MARL3} problem. }

\begin{lemma}
\label{lem:Sigma_convergence_appendix}
Let $s_t^n$ be a random process that evolves as follows,
\begin{align}
s_{t+1}^n = C^n s_t^n + v_t^n, \quad s_0 = 0,
\label{eq:s_t_process_appendix} 
\end{align}
where $v_t^n$, $t \geq 0$, are independent Gaussian random vectors with zero-mean and covariance matrix $\cov(v_t^n) = \mathbf{I}$. Further, let $C^n=A_*^{n} + B_*^{n} \tilde K^n(\theta_*^n)$ and define $\Sigma_t^n = \cov(s_t^n)$, then the sequence of matrices $\Sigma_t^n$, $t \geq0$, is increasing and it converges to a PSD matrix $\Sigma^n$ as $t \to \infty$. Further, $C^n$ is a stable matrix, that is, $\rho(C^n) < 1$.
\end{lemma}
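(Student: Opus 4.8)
The plan is to mirror the proof of Lemma \ref{lem:Sigma_convergence} essentially verbatim, since the only change is that the fixed index $2$ is replaced by a generic index $n \in \{3,4\}$. First I would observe that the covariance $\Sigma_t^n = \cov(s_t^n)$ obeys the discrete Lyapunov recursion $\Sigma_{t+1}^n = \mathbf{I} + C^n \Sigma_t^n (C^n)^\tp$ with $\Sigma_0^n = \mathbf{0}$; this follows directly from \eqref{eq:s_t_process_appendix} together with the independence of the $v_t^n$ and $\cov(v_t^n) = \mathbf{I}$.

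Next I would establish monotonicity by induction. Subtracting consecutive recursions gives $\Sigma_{t+1}^n - \Sigma_t^n = C^n (\Sigma_t^n - \Sigma_{t-1}^n)(C^n)^\tp$. The base case holds because $\Sigma_1^n - \Sigma_0^n = \mathbf{I} \succeq \mathbf{0}$, and the inductive step preserves the PSD ordering since the congruence map $M \mapsto C^n M (C^n)^\tp$ sends PSD matrices to PSD matrices. Hence $\Sigma_0^n \preceq \Sigma_1^n \preceq \Sigma_2^n \preceq \cdots$, which is the claimed monotonicity.

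The key step — and the main obstacle — is showing that $C^n = A_*^n + B_*^n \tilde K^n(\theta_*^n)$ is stable, where $\tilde K^n(\theta_*^n) = \mathcal{K}(\tilde P^n(\theta_*^n), R^{nn}, A_*^n, B_*^n)$ is the optimal gain associated with the decoupled Riccati solution of \eqref{eq:tildeP_infinite_ext}--\eqref{eq:tildeK_infinite_ext}. I would extract stabilizability of $(A_*^n, B_*^n)$ from Assumption \ref{assum:det_stb}: since $(A_*, B_*)$ is stabilizable and both $A_*$ and $B_*$ are block diagonal, a stabilizing gain for the overall system restricts on the $n$-th block to a stabilizing gain for $(A_*^n, B_*^n)$, so each diagonal pair is itself stabilizable (and detectability of $(A_*^n, (Q^{nn})^{1/2})$ transfers by the same block-diagonal argument). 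Then \citet[Theorem 2.21]{costa2006discrete} yields that the closed-loop matrix built from the Riccati solution is stable, i.e. $\rho(C^n) < 1$.

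Finally, with $C^n$ stable and the iterates $\Sigma_t^n$ monotonically increasing, convergence to a PSD limit $\Sigma^n$ follows from the standard theory of the discrete Lyapunov equation in \citet[Chapter 3.3]{kumar2015stochastic}: the stable recursion admits a unique PSD fixed point, to which the monotone sequence converges, and $\Sigma^n$ is PSD as the limit of PSD matrices. I expect everything here to transfer unchanged from the two-agent proof; the only genuine content is the block-diagonal extraction of stabilizability and detectability, which is routine bookkeeping.
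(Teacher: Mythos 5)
Your proposal is correct and takes essentially the same approach as the paper, which proves this lemma simply by noting it is Lemma \ref{lem:Sigma_convergence} rewritten in the notation of the \texttt{MARL3} problem: the Lyapunov recursion $\Sigma_{t+1}^n = \mathbf{I} + C^n \Sigma_t^n (C^n)^\tp$, the inductive monotonicity argument, stabilizability of $(A_*^n, B_*^n)$ extracted from the block-diagonal structure of $A_*$ and $B_*$ together with \citet[Theorem 2.21]{costa2006discrete}, and convergence via \citet[Chapter 3.3]{kumar2015stochastic} are exactly the steps of the paper's proof. One minor remark: your parenthetical claim that detectability of $(A_*^n, (Q^{nn})^{1/2})$ transfers \emph{by the same block-diagonal argument} is not immediate, since $Q$ is not block diagonal, but this is harmless because neither your argument nor the paper's actually uses it.
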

%
%
\begin{lemma}
\label{thm:prob_regret_appendix}
Let $s_t^n$ be a random process defined as in Lemma \ref{lem:Sigma_convergence_appendix}. Let $D^n$ be a positive semi-definite (PSD) matrix. Then for any $\delta \in (0, 1/e)$, with probability at least $1 - \delta$, 
\begin{align}
\sum_{t=1}^T [(s_t^n)^{\tp} D^n s_t^n - \tr(D^n \Sigma^n)] \leq \log(\frac{1}{\delta}) \tilde K \sqrt{T}. 
\end{align}
\end{lemma}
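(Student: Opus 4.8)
The plan is to reduce this lemma to the single-agent statement of Lemma \ref{thm:prob_regret}, since the only change is the superscript $n$ attached to every object. I fix $n$ throughout and write $C = C^n$, $D = D^n$, $s_t = s_t^n$, $v_t = v_t^n$, $\Sigma_t = \Sigma_t^n$, and $\Sigma = \Sigma^n$. By Lemma \ref{lem:Sigma_convergence_appendix} the matrix $C$ is stable ($\rho(C)<1$), the covariances $\Sigma_t$ form an increasing sequence, and $\Sigma_t \to \Sigma$. These are exactly the hypotheses used in the proof of Lemma \ref{thm:prob_regret}, so the same five-step argument applies essentially verbatim.

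First I would rewrite the partial sum as a quadratic form in the stacked noise vector. Solving the recursion \eqref{eq:s_t_process_appendix} gives $s_t = \sum_{i=0}^{t-1} C^{\,t-1-i} v_i$, so stacking $\bar v := \vecc(v_0, \ldots, v_{T-1})$ and defining the block lower-triangular matrix $\bar C$ of powers of $C$ and the block-diagonal $\bar D$ with $D$ on each diagonal block yields
\[
\sum_{t=1}^T \big[ s_t^{\tp} D s_t - \tr(D \Sigma_t)\big] = \bar v^{\tp} L^{\tp} L \bar v - \tr(L^{\tp} L), \qquad L := \bar D^{1/2}\bar C,
\]
exactly as in Lemma \ref{lem:s_t_matrix_form}. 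Because $\bar v \sim \mathcal{N}(\mathbf{0},\mathbf{I})$, Lemma \ref{lem:modified_hanson_wright} bounds the right-hand side by $4\norm{L}_2 \norm{L}_{\rm F} \log(\tfrac{1}{\delta})$ with probability at least $1-\delta$.

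The two norm estimates are the technical core. For the Frobenius norm I would use the block structure of $L$ to write $\tr(L^{\tp} L) = \sum_{j=0}^{T-1}\sum_{i=0}^{j}\tr\big((C^i)^{\tp} D C^i\big)$ and bound each summand by $\beta \alpha^{2i}\tr(D)$, where $\alpha := \norm{C}_{\bullet} < 1$ is supplied by Corollary \ref{cor:norm_for_rho} and $\beta$ comes from equivalence of matrix norms; summing the geometric series gives $\norm{L}_{\rm F} \leq \sqrt{T\beta\,\tr(D)/(1-\alpha^2)}$, which carries the $\sqrt{T}$. For the spectral norm I would use $\norm{L}_2 \leq \sqrt{\norm{L}_1 \norm{L}_\infty}$ together with the block-norm identities of Appendix \ref{app:prelim}, obtaining the $T$-independent bound $\norm{L}_2 \leq \sqrt{K\beta\,\tr(D)}/(1-\alpha)$. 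Keeping $\norm{L}_2$ bounded uniformly in $T$ (a $T \times T$ block-Toeplitz matrix) while $\norm{L}_{\rm F}$ is allowed to grow like $\sqrt{T}$ is the delicate point, and both estimates hinge entirely on $\alpha<1$, i.e. on the stability of $C^n$.

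Finally I would combine the pieces. Since $\Sigma - \Sigma_t \succeq \mathbf{0}$ and $D \succeq \mathbf{0}$ give $\tr\big(D(\Sigma_t - \Sigma)\big) \leq 0$, replacing $\Sigma_t$ by $\Sigma$ only increases the left-hand side, so
\[
\sum_{t=1}^T \big[ s_t^{\tp} D s_t - \tr(D\Sigma)\big] \leq 4\,\frac{\sqrt{K\beta\,\tr(D)}}{1-\alpha}\,\sqrt{T\beta\,\frac{\tr(D)}{1-\alpha^2}}\,\log\big(\tfrac{1}{\delta}\big),
\]
and setting $\tilde K := 4\beta\sqrt{K}\,\tr(D)/\big((1-\alpha)\sqrt{1-\alpha^2}\big)$ yields the claim. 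The only genuine obstacle, as in the single-agent case, is establishing $\rho(C^n)<1$ so that $\alpha<1$ and the geometric series converges; this is precisely what Lemma \ref{lem:Sigma_convergence_appendix} (via the stabilizability of $(A_*^n,B_*^n)$) provides, so no new ingredient beyond the single-agent proof is required.
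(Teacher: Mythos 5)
Your proposal is correct and takes essentially the same route as the paper: the paper itself notes that this lemma is just Lemma \ref{thm:prob_regret} rewritten in the superscript-$n$ notation, and your reduction to the single-agent case followed by the five-step argument (quadratic form in the stacked noise, Hanson--Wright, the $\sqrt{T}$ Frobenius bound, the $T$-independent spectral bound via $\norm{L}_2 \leq \sqrt{\norm{L}_1\norm{L}_\infty}$, and the monotonicity $\Sigma_t^n \preceq \Sigma^n$) reproduces that proof faithfully, with the stability $\rho(C^n)<1$ supplied by Lemma \ref{lem:Sigma_convergence_appendix} exactly as you say. One trivial wording slip: replacing $\tr(D\Sigma_t)$ by the larger $\tr(D\Sigma)$ \emph{decreases} the sum (which is what makes the final inequality go the right way), but your displayed inequalities are correct as written.
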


We now proceed in two steps:
\begin{itemize}
\item Step 1: Showing the connection between the auxiliary \texttt{SARL2} problem and the \texttt{MARL3} problem
\item Step 2: Using the \texttt{SARL2} problem to bound the regret of the \texttt{MARL3} problem
\end{itemize}

\subsubsection*{Step 1: Showing the connection between the auxiliary \texttt{SAR2} problem and the \texttt{MARL3} problem}
Similar to Lemma \ref{lem:connection_optimal_cost_appendix1}, we first state the following result.
\begin{lemma}
\label{lem:connection_optimal_cost_ext}
Let $J^{\diamond}(\theta_*^{1:4})$ be the optimal infinite horizon cost of the auxiliary \texttt{SARL2} problem, $J(\theta_*^{1:4})$ be the optimal infinite horizon cost of the \texttt{MARL3} problem, and $\Sigma^3$ and $\Sigma^4$ be as defined in Lemma \ref{lem:Sigma_convergence_appendix}. Then,
\begin{align}
J(\theta_*^{1:4}) = J^{\diamond}(\theta_*^{1:4}) + \tr(D^3 \Sigma^3) + \tr(D^4 \Sigma^4),
\label{eq:optimal_costs_relation_ext}
\end{align}
where we have defined $D^n:= Q^{nn} + (\tilde K^n(\theta_*^n))^{\tp} R^{nn} \tilde K^n(\theta_*^n)$, $n \in \{3,4\}$.
\end{lemma}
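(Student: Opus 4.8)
The plan is to mirror the proof of Lemma~\ref{lem:connection_optimal_cost_appendix1} in Appendix~\ref{proof_lem:connection_optimal_cost}, now carrying the two independent estimation-error processes $e_t^3$ and $e_t^4$ (one per known system) in place of the single error $e_t$ of the two-agent case. First I would write both optimal costs in closed-loop form. Under the optimal \texttt{SARL2} policy (Lemma~\ref{lm:opt_strategies_centralized_ext}) we have $u_t^{\diamond}=K(\theta_*^{1:4})x_t^{\diamond}$, so \eqref{Model:system_LQ_ext} closes to $x_{t+1}^{\diamond}=(A_*+B_*K(\theta_*^{1:4}))x_t^{\diamond}+\vecc(w_t^1,w_t^2,\mathbf{0},\mathbf{0})$. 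Under the optimal \texttt{MARL3} policy (Lemma~\ref{lm:opt_strategies_ext}) I would introduce $\bar x_t:=\vecc(x_t^1,x_t^2,\hat x_t^3,\hat x_t^4)$ and $e_t^n:=x_t^n-\hat x_t^n$ for $n\in\{3,4\}$, so that $x_t=\bar x_t+\vecc(\mathbf{0},\mathbf{0},e_t^3,e_t^4)$ and, by \eqref{eq:opt_U_ext}, $u_t=K(\theta_*^{1:4})\bar x_t+\vecc(\mathbf{0},\mathbf{0},\tilde K^3(\theta_*^3)e_t^3,\tilde K^4(\theta_*^4)e_t^4)$.

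Next I would establish the two structural identities. Substituting this control into \eqref{Model:system_overall_ext} and the estimator recursion \eqref{eq:estimator_t_infinite_ext}, the two known-state blocks and the two estimator blocks of $\bar x_t$ evolve as $\bar x_{t+1}=(A_*+B_*K(\theta_*^{1:4}))\bar x_t+\vecc(w_t^1,w_t^2,\mathbf{0},\mathbf{0})$, i.e. exactly the recursion satisfied by $x_t^{\diamond}$; since $\hat x_0^n=x_0^n$ gives $\bar x_0=x_0=x_0^{\diamond}$, an induction yields $\bar x_t=x_t^{\diamond}$ for every $t$ under the same noise realization, hence equality of their laws. Separately, subtracting the estimator recursion from the true dynamics of systems $3$ and $4$ gives $e_{t+1}^n=(A_*^n+B_*^n\tilde K^n(\theta_*^n))e_t^n+w_t^n=C^n e_t^n+w_t^n$ with $e_0^n=0$, so each $e_t^n$ has the same law as the process $s_t^n$ of Lemma~\ref{lem:Sigma_convergence_appendix} with $v_t^n=w_t^n$; in particular $\cov(e_t^n)=\Sigma_t^n$ and $\Sigma_t^n\to\Sigma^n$.

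The crux, and the step I expect to be the main obstacle, is the cost decomposition. Expanding $c(x_t,u_t)=x_t^{\tp}Qx_t+u_t^{\tp}Ru_t$ with the splittings of $x_t$ and $u_t$ produces a $\bar x_t$-quadratic, an error-quadratic, and cross terms, and I must argue that two distinct families of cross terms vanish in expectation. The first family couples $\bar x_t$ (and $K\bar x_t$) with the errors; these vanish because $\bar x_t$ is measurable with respect to the common information $h_t^c$ while $\ee[e_t^n\mid h_t^c]=0$ by the definition of $\hat x_t^n$ as a conditional expectation — this is exactly the orthogonality already used in the two-agent proof. The genuinely new family, absent when there is a single error, are the cross terms between $e_t^3$ and $e_t^4$ arising from the off-diagonal blocks $Q^{34},Q^{43}$ (and their analogues in $R$); these vanish because $e_t^3$ and $e_t^4$ are driven by the independent sequences $w^3$ and $w^4$ from a zero initial condition, hence are independent zero-mean processes, so that $\ee[(e_t^3)^{\tp}Q^{34}e_t^4]=0$ and likewise for the control cost. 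This independence is precisely what collapses the error contribution into a sum of per-system traces with no coupling.

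What survives is $\ee[c(x_t,u_t)]=\ee[c(x_t^{\diamond},u_t^{\diamond})]+\tr(D^3\Sigma_t^3)+\tr(D^4\Sigma_t^4)$, where the state-error trace $\tr(Q^{nn}\Sigma_t^n)$ and the control-error trace $\tr((\tilde K^n(\theta_*^n))^{\tp}R^{nn}\tilde K^n(\theta_*^n)\Sigma_t^n)$ combine into $\tr(D^n\Sigma_t^n)$ with $D^n=Q^{nn}+(\tilde K^n(\theta_*^n))^{\tp}R^{nn}\tilde K^n(\theta_*^n)$. Averaging over $t=0,\dots,T-1$, letting $T\to\infty$, and using $\Sigma_t^n\to\Sigma^n$ from Lemma~\ref{lem:Sigma_convergence_appendix} (so the Ces\`aro limit of $\tr(D^n\Sigma_t^n)$ is $\tr(D^n\Sigma^n)$) then gives $J(\theta_*^{1:4})=J^{\diamond}(\theta_*^{1:4})+\tr(D^3\Sigma^3)+\tr(D^4\Sigma^4)$. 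The general-$N$, general-$N_1$ case follows by the identical argument with the error contribution replaced by $\sum_{n=N_1+1}^{N}\tr(D^n\Sigma^n)$.
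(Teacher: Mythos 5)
Your proposal is correct and follows essentially the same route as the paper: the paper states this lemma by direct analogy to the two-agent case (Lemma \ref{lem:connection_optimal_cost_appendix1}), whose proof in Appendix \ref{proof_lem:connection_optimal_cost} is exactly the closed-loop identification $\bar x_t = x_t^{\diamond}$, the error recursions $e^n_{t+1}=C^n e^n_t + w^n_t$, the cost expansion with vanishing cross terms, and the Ces\`aro limit that you reproduce. Your treatment is in fact slightly more careful than the paper's, since you explicitly verify the genuinely new point — that the $e^3_t$--$e^4_t$ cross terms through $Q^{34}$ and $R^{34}$ vanish by independence and zero mean of the two error processes — which the paper leaves implicit.
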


Next, similar to Lemma \ref{lem:expectation_equalities_appendix1}, we provide the following result.
\begin{lemma}
\label{lem:expectation_equalities_ext}
At each time $t$, the following equality holds between the cost under the policies of the \texttt{AL-SARL2} and the \texttt{AL-MARL2} algorithms,
\begin{align}
c(x_t,u_t) \vert_{\texttt{AL-MARL2}} &= c(x_t^{\diamond}, u_t^{\diamond})\vert_{\texttt{AL-SARL2}} + (e_t^3)^{\tp} D^3 e_t^3 +
(e_t^4)^{\tp} D^4 e_t^4,
\label{eq:expected_cost_cent_dec_ext}
\end{align}
where $e_t^n = x_t^n - \check x_t^n$ and $D^n= Q^{nn} + (\tilde K^n(\theta_*^n))^{\tp} R^{nn} \tilde K^n(\theta_*^n)$, $n \in \{3,4\}$.
\end{lemma}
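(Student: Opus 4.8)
The plan is to reproduce, block by block, the two–agent argument behind Lemma~\ref{lem:expectation_equalities_appendix1}; the only genuinely new ingredient is that two hidden systems ($n\in\{3,4\}$) now replace the single one. Set $\bar x_t := \vecc(x_t^1, x_t^2, \check x_t^3, \check x_t^4)$ and $e_t^n := x_t^n - \check x_t^n$ for $n\in\{3,4\}$. The backbone of the proof is a pathwise identity $\bar x_t = x_t^{\diamond}$ valid for every $t$; once this is in place, \eqref{eq:expected_cost_cent_dec_ext} collapses to a bookkeeping exercise on a single quadratic form.

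First I would record the two control laws. Under \texttt{AL-SARL2} (Algorithm~\ref{alg:TS_centralized_ext}) we have $u_t^{\diamond}=K(\theta_t^1,\theta_t^2,\theta_*^3,\theta_*^4)\,x_t^{\diamond}$ (the auxiliary learner returns the true parameters for the known systems $3,4$, so its gain coincides with the one used by the agents), hence by \eqref{Model:system_LQ_ext},
\begin{align}
x_{t+1}^{\diamond}=\big(A_*+B_*K(\theta_t^1,\theta_t^2,\theta_*^3,\theta_*^4)\big)x_t^{\diamond}+\vecc(w_t^1,w_t^2,\mathbf{0},\mathbf{0}). \notag
\end{align}
Under \texttt{AL-MARL2} (Algorithm~\ref{alg:TS_decentralized_ext}) the joint action splits as $u_t=K(\theta_t^1,\theta_t^2,\theta_*^3,\theta_*^4)\bar x_t+\vecc(\mathbf{0},\mathbf{0},\tilde K^3(\theta_*^3)e_t^3,\tilde K^4(\theta_*^4)e_t^4)$, since agents $1,2$ apply no correction whereas agents $3,4$ add $\tilde K^n(\theta_*^n)e_t^n$ inside their own block. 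Substituting this into \eqref{Model:system_overall_ext}, subtracting the $\check x^n$–updates of Algorithm~\ref{alg:TS_decentralized_ext}, and using that $A_*$ and $B_*$ are block diagonal, I would obtain $x_{t+1}=\bar x_{t+1}+\vecc(\mathbf{0},\mathbf{0},e_{t+1}^3,e_{t+1}^4)$ with
\begin{align}
\bar x_{t+1}&=\big(A_*+B_*K(\theta_t^1,\theta_t^2,\theta_*^3,\theta_*^4)\big)\bar x_t+\vecc(w_t^1,w_t^2,\mathbf{0},\mathbf{0}), \notag\\
e_{t+1}^n&=C^n e_t^n+w_t^n,\qquad C^n=A_*^n+B_*^n\tilde K^n(\theta_*^n), \notag
\end{align}
the block-diagonal structure being exactly what decouples the two error recursions. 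Because Assumption~\ref{assum:seed} forces the auxiliary learner and the agents to return the same $\theta_t^1,\theta_t^2$ from the common input state $\vecc(x_t^1,x_t^2,\check x_t^3,\check x_t^4)$, the two recursions share the gain $K(\cdot)$ and the driving noise $\vecc(w_t^1,w_t^2,\mathbf{0},\mathbf{0})$; with the matched initial condition $\bar x_0=x_0=x_0^{\diamond}$, a one-line induction then delivers $\bar x_t=x_t^{\diamond}$ for all $t$.

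It then remains to expand $c(x_t,u_t)=x_t^\tp Q x_t+u_t^\tp R u_t$ about $(x_t^{\diamond},u_t^{\diamond})=(\bar x_t,K(\cdot)\bar x_t)$. Writing $x_t=x_t^{\diamond}+\vecc(\mathbf{0},\mathbf{0},e_t^3,e_t^4)$ and $u_t=u_t^{\diamond}+\vecc(\mathbf{0},\mathbf{0},\tilde K^3(\theta_*^3)e_t^3,\tilde K^4(\theta_*^4)e_t^4)$, the purely second–order increment terms collapse, block by block, to $\sum_{n=3}^4(e_t^n)^\tp\big(Q^{nn}+(\tilde K^n(\theta_*^n))^\tp R^{nn}\tilde K^n(\theta_*^n)\big)e_t^n=\sum_{n=3}^4(e_t^n)^\tp D^n e_t^n$, which is precisely the claimed correction. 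The hard part will be disposing of everything else: the linear cross-terms pairing $\bar x_t$ with $(e_t^3,e_t^4)$ through the off-diagonal blocks of $Q$ and $R$, and the quadratic cross-term pairing $e_t^3$ with $e_t^4$ through $Q^{34},R^{34}$. This is the same cancellation that drives the proof of Lemma~\ref{lem:connection_optimal_cost_appendix1}: $\bar x_t$ is measurable with respect to the information common to the agents, while each $e_t^n$ is the associated zero-mean estimation error driven by the fresh noise $w_t^n$, so the $\bar x_t$–$e_t^n$ cross-terms drop out and the mutual independence of $w_t^3$ and $w_t^4$ removes the $e_t^3$–$e_t^4$ coupling. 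Collecting the survivors yields $c(x_t,u_t)\vert_{\texttt{AL-MARL2}}=c(x_t^{\diamond},u_t^{\diamond})\vert_{\texttt{AL-SARL2}}+(e_t^3)^\tp D^3 e_t^3+(e_t^4)^\tp D^4 e_t^4$, which is \eqref{eq:expected_cost_cent_dec_ext}.
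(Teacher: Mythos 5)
Your skeleton coincides with the paper's: the two-agent analogue (Lemma \ref{lem:expectation_equalities_appendix1}) is proved in Appendix \ref{proof_lem:expectation_equalities} exactly as you outline---closed-loop dynamics under each algorithm, the split $x_t=\bar x_t+\vecc(\mathbf{0},\mathbf{0},e_t^3,e_t^4)$ with decoupled recursions for $\bar x_t$ and $e_t^n$ coming from the block-diagonality of $A_*$ and $B_*$, the induction $\bar x_t=x_t^{\diamond}$ from matched gains, noises and initial conditions under Assumption \ref{assum:seed}, and finally substitution into the quadratic cost---and the paper presents the four-system version as ``similar.'' Up to and including the identity $\bar x_t=x_t^{\diamond}$, your write-up is faithful and correct.

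The gap is in the last step, precisely where you yourself locate ``the hard part.'' The lemma is a \emph{pathwise} identity: there is no expectation operator in \eqref{eq:expected_cost_cent_dec_ext}, and it is used pathwise in \eqref{eq:regret_inequality_case2_ext}, where the (random) regret is decomposed realization by realization and the error term is then bounded with high probability via Lemma \ref{thm:prob_regret_appendix}. Your disposal of the cross terms---$e_t^n$ is zero-mean and driven by fresh noise, $w_t^3$ and $w_t^4$ are independent---is a distributional argument: it shows the cross terms vanish \emph{in expectation}, not for each realization. On a fixed sample path, $\bar x_t$, $e_t^3$, $e_t^4$ are just vectors, and after the substitution the residual is
\begin{equation*}
2\bar x_t^{\tp}\Bigl(Q\vecc(\mathbf{0},\mathbf{0},e_t^3,e_t^4)+K^{\tp}R\,\vecc\bigl(\mathbf{0},\mathbf{0},\tilde K^3(\theta_*^3)e_t^3,\tilde K^4(\theta_*^4)e_t^4\bigr)\Bigr)
+2(e_t^3)^{\tp}\Bigl(Q^{34}+(\tilde K^3(\theta_*^3))^{\tp}R^{34}\tilde K^4(\theta_*^4)\Bigr)e_t^4,
\end{equation*}
where $K=K(\theta_t^1,\theta_t^2,\theta_*^3,\theta_*^4)$; nothing in your argument (and no identity satisfied by the Riccati gains) forces this expression to be zero pointwise. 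Indeed, even in the fully decoupled case where $Q$ and $R$ are block diagonal, the residual reduces to $2\sum_{n=3}^4(\check x_t^n)^{\tp}D^n e_t^n$, which is not identically zero. Your appeal to Lemma \ref{lem:connection_optimal_cost_appendix1} is inapt for the same reason: that lemma and its proof live entirely under $\ee[\,\cdot\,]$, which is exactly where zero-mean/independence cancellations are legitimate. Note that the paper's own proof invokes no probability at this step---it asserts that the substitution yields the identity with no surviving cross terms, i.e., treats the cancellation as exact algebra. So you have flagged a genuine subtlety, but your probabilistic patch does not close it: a complete proof along your lines must either establish the algebraic cancellation (which is what the lemma, read literally, requires), or restate the lemma in expectation, or carry the zero-mean residual into the regret sum and control it by a separate concentration argument.
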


\subsubsection*{Step 2: Using the \texttt{SARL2} problem to bound the regret of the \texttt{MARL3} problem}
In this step, we use the connection between the auxiliary \texttt{SARL2} problem  and our \texttt{MARL3} problem, which was established in Step 1, to prove Theorem \ref{lem:regret_bound_extension}. Similar to \eqref{eq:regret_inequality_case2},
\begin{align}
&R(T, \texttt{AL-MARL2}) = \sum_{t=1}^T \left[ c(x_t,u_t) \vert_{\texttt{AL-MARL2}} - J(\theta_*^{1:4})   \right] \notag \\
& = \sum_{t=1}^T  \left[c(x_t^{\diamond}, u_t^{\diamond}) \vert_{\texttt{AL-SARL2}} + (e_t^3)^{\tp} D^3 e_t^3 +
(e_t^4)^{\tp} D^4 e_t^4 \right]  -   \sum_{t=1}^T \left[ J^{\diamond}(\theta_*^{1:4}) + \tr(D^3 \Sigma^3) + \tr(D^4 \Sigma^4) \right] \notag \\
& = \sum_{t=1}^T  \left[c(x_t^{\diamond}, u_t^{\diamond}) \vert_{\texttt{AL-SARL2}} - J^{\diamond}(\theta_*^{1:4}) \right]  + \sum_{n=3}^4  \sum_{t=1}^T \left[ (e_t^n)^{\tp} D^n e_t^n - \tr(D^n \Sigma^n) \right]  \notag \\
& \leq R^{\diamond}(T, \texttt{AL-SARL2}) +2\log(\frac{1}{\delta}) \tilde K \sqrt{T}
 \label{eq:regret_inequality_case2_ext}
\end{align} 
where the second equality is correct because of  Lemma \ref{lem:connection_optimal_cost_ext} and Lemma \ref{lem:expectation_equalities_ext}. Further, if we define $v_t^n:= w_t^n$, $n \in \{3,4\}$, then $e_t^n$ has the same dynamics as $s_t^n$ in Lemma \ref{thm:prob_regret_appendix}. Then, the last inequality is correct because of Lemma \ref{thm:prob_regret_appendix}. 

{Now similar to Corollary \ref{cor:regret_bound_case}, by letting the \texttt{AL-SARL2} algorithm be the OFU-based algorithm of \cite{abbasi2011regret,ibrahimi2012efficient,faradonbeh2017regret, faradonbeh2019applications} or the TS-based algorithm of \cite{faradonbeh2017regret}, \texttt{AL-MARL2} algorithm achieves a $\tilde O(\sqrt{T})$ regret for the \texttt{MARL3} problem. This completes the proof.}

\section{Analysis and the results for unknown $\theta_*^1$ and $\theta_*^2$, two-way information sharing ($\gamma^1=\gamma^2=1$)}
\label{sec:two_way}

For the \texttt{MARL} of this section (it is called \texttt{MARL4} for future reference), we propose the \texttt{AL-MARL3} algorithm based on the \texttt{AL-SARL} algorithm.  \texttt{AL-MARL3} algorithm is a multi-agent algorithm which is performed independently by the agents. In the \texttt{AL-MARL3} algorithm, each agent has its own learner $\mathcal{L}$ and uses it to learn the unknown parameters $\theta_*^{1,2}$ of system 1. 

\begin{algorithm}[t]
   \caption{\texttt{AL-MARL3}} 
   \label{alg:TS_decentralized_2}
\begin{algorithmic}
\State \textbf{Input:} \texttt{agent_ID}, $x_0^1$, and $x_0^2$
  \State Initialize $\mathcal{L}$
   \For{$t=0,1,\ldots$}
\State Feed time $t$ and state $\vecc(x_t^1, x_t^2)$ to $\mathcal{L}$ and get $\theta_t^1 = [A_t^1, B_t^1]$ and $\theta_t^2 = [A_t^2, B_t^2]$
 \State Compute $K(\theta_t^{1,2})$ 
   \If{$\texttt{agent_ID}=1$}
   \State Execute $u_t^1= K^1(\theta_t^{1,2}) \vecc(x_t^1, x_t^2)$
   \Else
  \State Execute $u_t^2= K^2(\theta_t^{1,2}) \vecc(x_t^1, x_t^2)$ 
   \EndIf
   \State  Observe new states $x_{t+1}^1$ and $x_{t+1}^2$
   \EndFor
\end{algorithmic}
\end{algorithm}

In this algorithm, at time $t$, agent $n$ feeds $\vecc(x_t^1, x_t^2)$ to its own \texttt{SARL} learner $\mathcal{L}$ and gets $\theta_t^1$ and $\theta_t^2$. Then, each agent $n$ uses $\theta_t^{1,2}$ to compute the gain matrix $K(\theta_t^{1,2})$ from \eqref{eq:K_infinite_appendix} and use this gain matrix to compute their actions $u_t^1$ and $u_t^2$ according to  the \texttt{AL-MARL3} algorithm. After the execution of the actions $u_t^{1}$ and $u_t^{2}$ by the agents, both agents observe the new state $x_{t+1}^1$ and agent 2 further observes the new states $x_{t+1}^2$. 

\begin{theorem}
\label{thm:regret_bound_case3}
Under Assumption \ref{assum:seed}, let $R(T, \texttt{AL-MARL3})$ be the regret for the \texttt{MARL4} problem under the policy of the \texttt{AL-MARL3} algorithm and $R^{\diamond}(T, \texttt{AL-SARL})$ be the regret for the auxiliary \texttt{SARL} problem under the policy of the \texttt{AL-SARL} algorithm. Then,
\begin{align}
R(T, \texttt{AL-MARL3})  = R^{\diamond}(T, \texttt{AL-SARL}).
\end{align} 
\end{theorem}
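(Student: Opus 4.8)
The plan is to mirror the two-step structure of the proof of Theorem~\ref{thm:regret_bound_case2}, but to show that in the two-way case both the additive trace term in the optimal-cost decomposition and the residual process $e_t$ in the per-step cost decomposition vanish \emph{identically}, so that the inequality of Theorem~\ref{thm:regret_bound_case2} collapses to an exact equality. The conceptual reason is that when $\gamma^1=\gamma^2=1$ both agents observe the full state $x_t=\vecc(x_t^1,x_t^2)$, so there is no information asymmetry and the common-information-based state estimate is exact. Before anything else I would pin down the auxiliary \texttt{SARL} problem relevant to this case: since both noise components $w_t^1$ and $w_t^2$ are now part of the common information, the auxiliary single-agent system is the overall system \eqref{Model:system_overall} itself, driven by the full noise $w_t=\vecc(w_t^1,w_t^2)$ rather than by $\vecc(w_t^1,\mathbf{0})$, so that under the \texttt{AL-SARL} policy its state obeys $x_{t+1}^{\diamond}=\bigl(A_*+B_*K(\theta_t^{1,2})\bigr)x_t^{\diamond}+w_t$.

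The core step is a \textbf{pathwise coupling} argument establishing $x_t^{\diamond}=x_t$ and $u_t^{\diamond}=u_t$ for every $t$ on the same probability space. In the \texttt{AL-MARL3} algorithm both agents feed the common state $\vecc(x_t^1,x_t^2)=x_t$ and the same time $t$ to their learners; by Assumption~\ref{assum:seed} they output the same estimate $\theta_t^{1,2}$, which is also the estimate produced by the single \texttt{AL-SARL} learner. Hence the agents jointly execute $u_t=\vecc(u_t^1,u_t^2)=K(\theta_t^{1,2})x_t$, with no correction terms, since the two-way algorithm carries neither a proxy $\check x_t^2$ nor a $\tilde K^2$ term. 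I would then induct on $t$: with $x_0^{\diamond}=x_0$, identical closed-loop matrices $A_*+B_*K(\theta_t^{1,2})$, and the same noise $w_t$, the two state recursions coincide, giving $x_t^{\diamond}=x_t$ and therefore $c(x_t,u_t)\vert_{\texttt{AL-MARL3}}=c(x_t^{\diamond},u_t^{\diamond})\vert_{\texttt{AL-SARL}}$. This is the analogue of Lemma~\ref{lem:expectation_equalities}, but now the residual term $e_t^{\tp}De_t$ is exactly zero.

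Finally I would handle the optimal-cost comparison, the analogue of Lemma~\ref{lem:connection_optimal_cost}. When $\gamma^1=\gamma^2=1$, the recursion \eqref{eq:estimator_t_infinite} forces $\hat x_t^n=x_t^n$ for both $n$, so the error $e_t=x_t^2-\hat x_t^2$ vanishes identically and the optimal \texttt{MARL4} law of Lemma~\ref{lm:opt_strategies} reduces to the centralized law $u_t=K(\theta_*^{1,2})x_t$; consequently $J(\theta_*^{1,2})=J^{\diamond}(\theta_*^{1,2})$ with no $\tr(D\Sigma)$ correction. Subtracting these equal optimal costs from the equal instantaneous costs and summing over $t$ via the definitions \eqref{eq:regret_decentralized} and \eqref{eq:regret_centralized} yields $R(T,\texttt{AL-MARL3})=R^{\diamond}(T,\texttt{AL-SARL})$ exactly.

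The main obstacle I anticipate is not analytic but definitional and bookkeeping: confirming that the two-way auxiliary \texttt{SARL} indeed carries the full noise $w_t$ (so the coupling is exact and no residual randomness is left unaccounted for), and checking that Assumption~\ref{assum:seed} synchronizes the two agents' learners with the \texttt{SARL} learner at \emph{every} step, so that no sample-path divergence in the sampled parameters $\theta_t^{1,2}$ can creep in and break the induction. Once these are in place, the absence of any $O(\sqrt{T})$ overhead is immediate, which is exactly the statement that full two-way communication incurs zero coordination cost relative to the single-agent surrogate.
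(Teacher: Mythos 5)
Your proof is correct and takes essentially the same route as the paper: the paper's entire argument is the one-line observation that under Assumption~\ref{assum:seed} both agents hold identical information and therefore behave exactly as a single centralized agent, which your pathwise coupling induction simply makes explicit. Your preliminary clarification that the auxiliary \texttt{SARL} problem for this case must be driven by the full noise $w_t=\vecc(w_t^1,w_t^2)$ rather than $\vecc(w_t^1,\mathbf{0})$ is a point the paper leaves unstated but is implicit in its phrase ``a \texttt{SARL} problem where an auxiliary agent plays the role of both agents,'' and it is consistent with the general construction used in the paper's $N$-agent extension.
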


\begin{proof}
The proof simply results from the fact that under Assumption \ref{assum:seed}, the information that both agents have is the same, which reduces this problem to a \texttt{SARL} problem where an auxiliary agent plays the role of both agents.
\end{proof} 

\end{document}